\newcommand{\trash}[1]{}
\newcommand{\longversion}[1]{#1}
\newcommand{\shortversion}[1]{}
\newcommand{\keepforlater}[1]{}
\newif\iflong
\newcites{sec}{Appendix References}
\newcommand{\etal}{et~al.\@\xspace}
\newcommand{\SBE}[2][]{\protect\ensuremath{\mathCommandFont{SE}\ifx{#1}\empty\else_{#1}\fi(#2)}}
\DeclareFontFamily{U}{matha}{\hyphenchar\font45}
\DeclareFontShape{U}{matha}{m}{n}{
      <5> <6> <7> <8> <9> <10> gen * matha
      <10.95> matha10 <12> <14.4> <17.28> <20.74> <24.88> matha12
      }{}
\DeclareSymbolFont{matha}{U}{matha}{m}{n}
\DeclareMathSymbol{\squplus}{2}{matha}{"5D}
\newenvironment{restatetheorem}[1][\unskip]{%
  \begingroup

}%
{%
  \addtocounter{theorem}{-1}
  \endgroup
}%
\newenvironment{restatelemma}[1][\unskip]{%
  \begingroup

}%
{%
  \addtocounter{lemma}{-1}
  \endgroup
}%
\newcommand{\raisemath}[1]{\mathpalette{\raisem@th{#1}}}
\newcommand{\raisem@th}[3]{\raisebox{#1}{$#2#3$}}
\newcommand{\pushright}[1]{\ifmeasuring@#1\else\omit\hfill\ensuremath{\displaystyle#1}\fi\ignorespaces}
\newcommand{\pushleft}[1]{\ifmeasuring@#1\else\omit$\displaystyle#1$\hfill\fi\ignorespaces}
\providecommand{\leftsquigarrow}{%
	\mathrel{\mathpalette\reflect@squig\relax}%
}
\newcommand{\reflect@squig}[2]{%
	\reflectbox{$\m@th#1\rightsquigarrow$}%
}
\newcommand{\oset}[3][0ex]{%
  \mathrel{\mathop{#3}\limits^{
      \vbox to#1{\kern-1\ex@
          \hbox{$\scriptstyle#2$}\vss}}}}
\DeclareMathOperator{\orig}{orig}%
\newcommand{\lassign}{\hspace{0.22em}\leftarrow}
\newcommand{\cp}{\text{cpy}}
\DeclareMathOperator{\type}{type}
\newcommand{\intr}{\textit{int}\xspace}
\newcommand{\labl}{\textit{label}\xspace}
\newcommand{\leaf}{\textit{leaf}\xspace}
\newcommand{\rem}{\textit{rem}\xspace}
\newcommand{\join}{\textit{join}\xspace}
\DeclareMathOperator{\Mod}{Mod}
\DeclareMathOperator{\Choose}{PGuess}
\DeclareMathOperator{\CWc}{CCon}
\DeclareMathOperator{\GS}{PCon}
\DeclareMathOperator{\GSB}{PJust}
\DeclareMathOperator{\GSA}{PPre}
\DeclareMathOperator{\GSAC}{CPre}
\DeclareMathOperator{\DS}{SProj}
\DeclareMathOperator{\AS}{AProj}
\DeclareMathOperator{\sub}{SGuess}
\DeclareMathOperator{\post}{post-order}
\newcommand{\mathCommandFont}[1]{\mathrm{#1}}
\newcommand{\complexityClassFont}[1]{\mathrm{#1}}
\newcommand{\problemFont}[1]{\textsc{#1}}
\newcommand{\hy}{\hbox{-}\nobreak\hskip0pt}
\newcommand{\set}[2]{\ifthenelse{\equal{#2}{}}{\left\{#1\right\}}{\{\,#1\mid#2\,\}}}%
\newcommand{\btheorems}[1]{\protect\ensuremath{\mathCommandFont{Th}\kern-2\mu\big(\kern-2\mu#1\kern-2\mu\big)}}
\newcommand{\Btheorems}[1]{\protect\ensuremath{\mathCommandFont{Th}\kern-2\mu\Big(\kern-2\mu#1\kern-2\mu\Big)}}
\newcommand{\Vars}[1]{\protect\ensuremath{\mathCommandFont{Vars}(#1)}}
\newcommand{\encoding}[1]{\protect\ensuremath{\left\langle #1\right\rangle}}
\def\hy{\hbox{-}\nobreak\hskip0pt}
\newcommand{\dfn}{\mathrel{\mathop:}=}
  \newcommand{\default}[2]{#1\rightarrow #2}
  \newcommand{\default}[2]{\frac{#1}{#2}}
\newcommand{\bff}{\textit{mo}\xspace}
\newcommand{\bfff}{\textit{MO}\xspace}
\newcommand{\class}[1]{\protect\ensuremath\complexityClassFont{#1}}
\newcommand{\SigmaP}[1]{{\Sigma^\class{p}_{#1}}}
\newcommand{\SAT}{\problemFont{Sat}}
\newcommand{\Ext}{\problemFont{Ext}\xspace}
\newcommand{\EXT}{\Ext}
\newcommand{\Th}[1]{\mathrm{Th}(#1)}
\newcommand{\lalpha}{\ensuremath{p}}%
\newcommand{\lbeta}{\ensuremath{j}}%
\newcommand{\lgamma}{c}%
\newlength\problemlength
\newcommand\decisionproblem[3]{%
\begin{list}{}{\labelwidth\problemlength \labelsep.7em \rightmargin1.5em
\leftmargin\problemlength \advance\leftmargin by3em
\parsep0ex \itemsep.2ex plus.1ex}
\item[{\sl Problem:\hfill}] {\problemFont{#1}}
\item[{\sl Input:  \hfill}] #2
\item[{\sl Question: \hfill}] #3
\end{list}
}
\newcommand\pproblem[4]{%
\begin{list}{}{\labelwidth\problemlength \labelsep.7em \rightmargin1.5em
\leftmargin\problemlength \advance\leftmargin by3em
\parsep0ex \itemsep.2ex plus.1ex}
\item[{\sl Problem:\hfill}] {\problemFont{#1}}
\item[{\sl Input:  \hfill}] #2
\item[{\sl Parameter:  \hfill}] #3
\item[{\sl Question: \hfill}] #4
\end{list}
}
\newcommand{\TODO}[1]{\marginpar{\small\color{red}#1}}
\newcommand{\johannes}[1]{\TODO{J: #1}}
\newcommand{\SSB}{\{\,}%
\newcommand{\SSE}{\,\}}%
\let\phi=\varphi
\let\epsilon=\varepsilon
\newcommand{\Tab}[1]{\ensuremath{\text{Child-Tabs}}}
\newcommand{\Tabs}[1]{\ensuremath{\text{Tables[#1]}}}
\renewcommand{\Big}{} 
\newcommand{\AAA}{\ensuremath{\mathcal{A}}\xspace}%
\newcommand{\BBB}{\ensuremath{\mathcal{B}}\xspace}%
\newcommand{\CCC}{\ensuremath{\mathcal{C}}}%
\newcommand{\MMM}{\ensuremath{\mathcal{M}}\xspace}%
\newcommand{\PPP}{\ensuremath{\mathcal{P}}}%
\newcommand{\TTT}{\ensuremath{\mathcal{T}}}%
\newcommand{\pname}[1]{\textsc{#1}\xspace}
\newcommand*\mcupinn[2]{\vcenter{\hbox{$\mathsurround=0pt
  \ifx\displaystyle#1\textstyle\else#1\fi\bigcup$}}}
\newcommand{\emptyfunc}{\emptyset}
\newcommand{\NAT}{\ensuremath{\mathbb{N}}}
\newcommand{\inputPredColor}{orange!55!red}
\newcommand{\outputPredColor}{blue!45!black}
\newcommand{\statePredColor}{green!62!black}
\newcommand{\comment}[1]{\begin{center}\bf*** #1 ***\end{center}}
\newcommand{\todo}[1]{{\color{red}\comment{TODO: #1}}}
\algrenewcommand\algorithmicensure{\textbf{Output:}}
\algrenewcommand{\algorithmiccomment}[1]{\emph{// #1}}
\newcommand{\tuplecolor}[1]{\textcolor{#1}}
\lstdefinelanguage{dflat}{
	numberstyle=\tiny,
	otherkeywords={:-},
	morekeywords={not},
	keywordstyle=\bfseries,emph={numChildNodes,initial,final,currentNode,childNode,bag,current,introduced,removed,atLevel,atNode,root,rootOf,leaf,leafOf,sub,childItem,childAuxItem,childCost,childOr,childAnd,childAccept,childReject,childRow},
	moreemph=[2]{item,auxItem,extend,cost,currentCost,length,or,and,accept,reject},
	alsoletter={\#}, 
	morecomment=[l]{\%},
	emphstyle=\color{\inputPredColor},
	emphstyle=[2]\color{\outputPredColor},
	literate={:-}{{$\leftarrow$}}2 {!=}{{$\neq$}}1, 
	breakindent=3em,
	escapechar=@, 
	captionpos=b,
        frame=bt, 
        numbers=left
}
  \newtheorem{theorem}{Theorem}%
  \newtheorem{lemma}{Lemma}%
  \newtheorem{proposition}{Proposition}%
  \newtheorem{example}{Example}%
  \newtheorem{definition}{Definition}%
  \spnewtheorem{EXa}{Example}{\bfseries}{\normalfont}}
  \newtheorem{observation}{Observation}
\newcommand{\MAIRR}[2]{\ensuremath{{#1}^+_{#2}}}%
\newcommand{\BF}[1]{\ensuremath{{#1}_{\hspace{-0.3em}\bfff}}}%
\newcommand{\MAICRNCR}[2]{\ensuremath{{#1}^{?}_{#2}}}%
\newcommand{\MAIRCR}[2]{\ensuremath{{#1}^{\squplus}_{#2}}}%
\newcommand{\MARRR}[2]{\ensuremath{{#1}^-_{#2}}}%
\newcommand{\MAZR}[2]{\ensuremath{{#1}^{\sim}_{#2}}}%
\newcommand{\MARR}[2]{\ensuremath{{#1}^-_{#2}}}%
\newcommand{\MAIR}[2]{\ensuremath{{#1}^+_{#2}}}%
\newcommand{\tabval}{\ensuremath{\shortversion{\vec} u}}
\newcommand{\tabvali}[1]{\ensuremath{\vec {#1}}}
\newcommand{\tab}[1]{\ensuremath{\tau_{#1}}}
\newcommand{\at}{\protect\ensuremath{\mathCommandFont{Vars}}}
\newcommand{\att}[1]{\ensuremath{\at_{\hspace{-0.05em}\leq\hspace{-0.05em}#1}}}
\newcommand{\atto}{\ensuremath{\att{t}}}
\newcommand{\progt}[1]{\ensuremath{\prog_{\hspace{-0.05em}\leq\hspace{-0.05em}#1}}}
\newcommand{\gammat}[1]{\ensuremath{\gamma_{\hspace{-0.05em}\leq\hspace{-0.05em}#1}}}
\newcommand{\alphat}[1]{\ensuremath{\alpha_{\hspace{-0.05em}\leq\hspace{-0.05em}#1}}}
\newcommand{\betat}[1]{\ensuremath{\beta_{\hspace{-0.05em}\leq\hspace{-0.05em}#1}}}
\newcommand{\progtneq}[1]{\ensuremath{\prog_{\hspace{-0.05em}<\hspace{-0.05em}#1}}}
\newcommand{\dpa}{\ensuremath{\mathcal{DP}}}
\newcommand{\nxt}{\ensuremath{\mathcal{NSD}}}
\newcommand{\eqdef}{\ensuremath{\,\mathrel{\mathop:}=}}
\renewcommand{\P}{\text{\normalfont P}\xspace}
\newcommand{\AspComp}{\pname{CompSE}} %
\newcommand{\AspCount}{\pname{\#SE}} %
\newcommand{\AspEnum}{\pname{EnumSE}} %
\newcommand{\algorithmfootnote}[2][\footnotesize]{
  \let\old@algocf@finish\@algocf@finish
  \def\@algocf@finish{\old@algocf@finish
    \leavevmode\rlap{\begin{minipage}{\linewidth}
    #1#2
    \end{minipage}}
  }
}
\newcommand{\PRIM}{\ensuremath{{\algo{SPRIM}}}\xspace}
\newcommand{\INCSAT}{\ensuremath{{\algo{SCONS}}}\xspace}
\newcommand\dproblem[3]{%
\begin{center}
\fbox{%
\begin{minipage}{.93	\linewidth}%
\begin{list}{}{\labelwidth\problemlength \labelsep.7em \rightmargin1.5em
\leftmargin\problemlength \advance\leftmargin by3em
\parsep0ex \itemsep.2ex plus.1ex}
\item[{\sl Problem:\hfill}] {\problemFont{#1}}
\item[{\sl Input:  \hfill}] #2
\item[{\sl Task: \hfill}] #3
\end{list}
\end{minipage}
}
\end{center}
}
\newenvironment{indented}{\begin{changemargin}{1cm}{0cm}}{\end{changemargin}}
\let\phi\varphi
\let\epsilon\varepsilon
\renewcommand{\models}{\vDash}
\newcommand{\calT}{\mathcal{T}}
\newcommand{\Card}[1]{|#1|}
\newcommand{\CCard}[1]{\|#1\|}
\newcommand{\algo}[1]{\ensuremath{\mathsf{#1}}}
\newcommand{\NP}{\ensuremath{\textsc{NP}}\xspace}
\newcommand{\bigO}[1]{\ensuremath{{\mathcal O}(#1)}}
\newcommand{\tw}[1]{\mathit{tw}(#1)}
\newcommand{\SB}{\{}%
\newcommand{\SM}{\mid}%
\newcommand{\SE}{\}}%
\def\hy{\hbox{-}\nobreak\hskip0pt}
\newcommand{\prog}{\ensuremath{D}}
\tikzstyle{arg}=[draw, thick, circle]
\colorlet{afnodecolor}{green!20!blue!10}
\colorlet{tdnodecolor}{green!20!blue!10}
\colorlet{subfwnodecolor}{black!2}
\colorlet{subfwafinactivenodecolor}{white}
\colorlet{vertexTopColor}{white}
\colorlet{vertexBottomColor}{black!10}
\tikzstyle{afnode} = [draw,thick,shape=circle,minimum size=8mm,font=\normalsize,fill=afnodecolor]
\tikzstyle{afedge} = [->,draw,thick]
\tikzstyle{tdnode} = [draw,rounded corners,top color=vertexTopColor,bottom color=vertexBottomColor,minimum size=1.5em]
\tikzstyle{stdnode} = [tdnode, font=\scriptsize]
\tikzstyle{stdnodecompact} = [stdnode, inner sep = 1.5pt, outer sep = 0.1pt]
\tikzstyle{stdnodetable} = [stdnode, inner sep = 1.5pt, outer sep = 0]
\tikzstyle{stdnodenum} = [minimum size=1.5em, font=\scriptsize]
\tikzstyle{tdedge} = [-,draw,thick]
\tikzstyle{tdlabel} = [draw=none, rectangle, fill=none, inner sep=0pt, font=\scriptsize]
\tikzstyle{subfwnode} = [draw,thick,shape=rectangle,thin,rounded corners,minimum size=9mm,fill=subfwnodecolor,label distance=-2.5mm]
\tikzstyle{subfwafactivenode} = [draw,thick,shape=circle,minimum size=6mm,inner sep = 0pt,font=\scriptsize,fill=afnodecolor]
\tikzstyle{subfwafinactivenode} = [draw,thick,shape=circle,minimum size=6mm,inner sep = 0pt,font=\scriptsize,fill=white,dotted]
\tikzstyle{subfwafinactiveedge} = [->,draw,thick,dotted]
\tikzstyle{itemTree}=[level distance=2em,sibling distance=4ex,child anchor=west,grow'=right,right,align=left,every node/.style={draw,dashed,draw opacity=0.2,font=\footnotesize}]
\tikzstyle{itemTreeRoot}=[solid,inner sep=2]
\tikzstyle{orNode}=[label=left:$\lor$]
\tikzstyle{andNode}=[label=left:$\land$]
\tikzstyle{acceptNode}=[label=right:$\top$]
\tikzstyle{rejectNode}=[label=right:$\bot$]
\newcommand{\keywords}[1]{\par\addvspace\baselineskip
\noindent\keywordname\enspace\ignorespaces#1}
\begin{document}

\shortversion{\mainmatter}  

\title{Default Logic and Bounded Treewidth%
%
%
 \thanks{The work has been supported by the Austrian Science Fund
   (FWF), Grants Y698 and P26696, and the German Science Fund (DFG), Grant ME
   4279/1-1. The first two authors are also affiliated with the
   Institute of Computer Science and Computational Science at
   University of Potsdam, Germany. \shortversion{An extended self-archived version of the paper can be found \href{https://arxiv.org/abs/1706.09393}{online}.}\longversion{%
   The final publication will be available at Springer proceedings of LATA 2018.}}%
}

\shortversion{\titlerunning{Default Logic and Bounded Treewidth}}

%
%
\author{Johannes K. Fichte$^1$%
\and Markus Hecher$^1$\and Irina Schindler$^2$\\%
\longversion{\\[3pt]
    $^1$: TU Wien, Austria, \mailsa\\
    $^2$: Leibniz Universit\"at Hannover, Germany, \mailsb}
}
\shortversion{\authorrunning{J.K. Fichte, M. Hecher, I. Schindler}}

\shortversion{\institute{Technische Universit\"at Wien, Austria\\ \and Leibniz Universit\"at Hannover, Germany\\
\mailsa\\
\mailsb\\
}}

%
%

\shortversion{\toctitle{Lecture Notes in Computer Science}
\tocauthor{Authors' Instructions}}
\maketitle


\begin{abstract}
  In this paper, we study Reiter's propositional default logic when
  the treewidth of a certain graph representation (semi-primal graph)
  of the input theory is bounded.  We establish a dynamic programming
  algorithm on tree decompositions that decides whether a theory has a
  consistent stable extension (\Ext). Our algorithm can even be used
  to enumerate all generating defaults (\AspEnum) that lead to stable
  extensions.
  %
  %
  We show that our algorithm decides \Ext in linear time in the input
  theory and triple exponential time in the treewidth (so-called
  \emph{fixed-parameter linear} algorithm).
  Further, our algorithm solves \AspEnum with a pre-computation step
  that is linear in the input theory and triple exponential in the
  treewidth followed by a linear delay to output
  solutions.  
  %
  %
  \shortversion{\keywords{Parameterized Algorithms, Tree Decompositions, Dynamic
    Programming, Reiter's Default Logic, Propositional Logic}}
\end{abstract}

\section{Introduction}

Reiter's \emph{default logic (DL)} is one of the most fundamental
formalisms to non-monotonic reasoning where reasoners draw tentative
conclusions that can be retracted based on further
evidence~\cite{Reiter80a,matr93}.  DL augments classical logic by
rules of default assumptions (\emph{default rules}). Intuitively, a
default rule expresses ``in the absence of contrary information,
assume~$\dots$''. Formally, such rule is a triple~$\default{p:j}{c}$
of formulas~$p$, $j$, and $c$ expressing ``if prerequisite~$p$ can be
deduced and justification~$j$ is never violated then assume
conclusion~$c$''. For an initial set of facts, beliefs supported by
default rules are called an extension of this set of facts.
%
%
If the default rules can be applied consistently until a fixed
point, the extension is a maximally consistent view (\emph{consistent
  stable extension}) with respect to the facts together with the
default rules.
In DL stable extensions involve the construction of the deductive
closure, which can be generated from the conclusions of the defaults
and the initial facts by means of so-called generating
defaults. However, not every generating default leads to a stable
extension. If a generating default leads to a stable extension, we
call it a stable default set.
%
%
%
%
%
Our problems of interest are deciding whether a default theory has a
consistent stable extension (\Ext), output consistent stable
default sets (\AspComp), counting the number of stable default
sets (\AspCount), and enumerating all stable default sets
(\AspEnum).
All these problems are of high worst case complexity, for example, the
problem $\Ext$ is $\SigmaP{2}$-complete~\cite{gottlob92}. 


Parameterized algorithms~\cite{CyganEtAl15} have attracted
considerable interest in recent years and allow to tackle hard
problems by directly exploiting certain structural properties present
in input instances (the \emph{parameter}). 
%
%
 For example, \EXT can be
solved in polynomial time for input theories that allow for small
backdoors into tractable fragments of
DL~\cite{FichteMeierSchindler16b}.
Another parameter is treewidth, which intuitively measures the
closeness of a graph to a tree.
\EXT can also be solved in linear time for input theories and a
(non-elementary) function that depends on the treewidth of a certain
graph representation of the default theory (incidence
graph)~\cite{msstv15}.
%
%
This result relies on logical characterization in terms of a so-called
MSO-formula and Courcelle's theorem~\cite{Courcelle90}.
Unfortunately, the non-elementary function can become extremely huge
and entirely impractical~\cite{KneisLanger09a}. More precisely, the
result by Meier et~al.~\cite{msstv15}
yields a function that is at least quintuple exponential in the
treewidth and the size of the MSO-formula.
%
%
%
%
%
%
%
%
%
%
%
This opens the question whether one can significantly improve these
runtime bounds.
A technique to obtain better worst-case runtime bounds that often even
allows to practically solve problem instances, which have small
treewidth, are dynamic programming (DP) algorithms on tree
decompositions~\cite{CharwatWoltran16a,FichteEtAl17a,FichteEtAl17b}.
%
%
In this paper, we present such a DP algorithm for DL, which uses a
slightly simpler graph notation of the theory (semi-primal graph).

\emph{Contributions.}  We introduce DP algorithms that exploit small
treewidth to solve \Ext and \AspComp in time triple exponential in the
semi-primal treewidth and linear in the input theory.  Further, we can
solve \AspCount in time triple exponential in the semi-primal
treewidth and quadratic in the input theory.
%
%
Our algorithm can even be used to enumerate all stable default
sets (\AspEnum) with a pre-computation step that is triple
exponential in the semi-primal treewidth and linear in the input
theory followed by a linear delay for outputting the solutions
(Delay-FPT~\cite{CreignouEtAl17a}).
%

\section{Default Logic}%
\label{sec:preliminaries}%
\longversion{%
  We assume familiarity with standard notions in computational
  complexity, the complexity classes~$\P$ and $\NP$ as well as the
  polynomial hierarchy. For more detailed information, we refer to
  other standard
  sources~\shortversion{\cite{Papadimitriou94}}\longversion{{\cite{Papadimitriou94,flgr06,DowneyFellows13}}}.
  For parameterized (decision) problems we refer to work by
  Cygan~\etal~\cite{CyganEtAl15}.
}
%

A \emph{literal} is a (propositional) variable
or its negation.  
%
%
%
%
The \emph{truth evaluation} of (propositional) formulas is defined in
the standard way~\cite{matr93}. In particular, $\theta(\bot) = 0$ and
$\theta(\top)=1$ for an assignment~$\theta$.
Let $f$ and $g$ be formulas and $X = \Vars{f} \cup \Vars{g}$. We write
$f \models g$ if and only if for all assignments~$\theta \in 2^X$ it
holds that if the assignment~$\theta$ satisfies $f$, then $\theta$
also satisfies~$g$.
Further, we define the \emph{deductive closure of $f$} as
$\Th{f}\dfn\set{g\in \PPP}{f\models g}$ where $\PPP$ is the family
that contains all formulas. In this paper, whenever it is clear from the context, we may use sets of 
formulas and a conjunction over formulas equivalently. In particular, we let for formula~$f$ and a family~$\MMM$
of sets of variables be
$\Mod_\MMM(f) \eqdef \{M \mid M \in \MMM, M \models f\}$.
We denote with $\SAT$ the problem that asks whether
a given formula~$f$ is satisfiable.

We define for formulas~$p$, $j$, and $c$ a \emph{default
  rule}~$d$ as a triple~$\default{p:j}{c}$; $p$ is called the
\emph{prerequisite}, $j$ is called the \textit{justification}, and $c$
is called the \textit{conclusion}; we set $\alpha(d)\dfn p$,
$\beta(d) \dfn j$, and $\gamma(d) \dfn c$. The mappings~$\alpha,\beta$ and~$\gamma$ naturally extend to sets of default rules.
We follow the definitions by Reiter~\cite{Reiter80a}. A \emph{default
theory}~$\encoding{W,D}$ consists of a set~$W$ of propositional
formulas (knowledge base) and a set of default rules.  
%

\begin{definition}\label{def:SE}
Let $\encoding{W,D}$ be a default theory and $E$ be a set of
formulas. Then, $\Gamma(E)$ is the smallest set of formulas such that:
\begin{inparaenum}[(i)]
\item $W \subseteq \Gamma(E)$
\item $\Gamma(E)=\Th{\Gamma(E)}$, and
\item for each $\default{p:j}{c}\in D$ with $p\in\Gamma(E)$ and
  $\lnot j \notin E$, it holds that $c \in\Gamma(E)$.
\end{inparaenum}
$E$ is a \emph{stable extension} of $\encoding{W,D}$, if
$E=\Gamma(E)$. An extension is \emph{inconsistent} if it contains
$\bot$, otherwise it is called \emph{consistent}.
%
%
The set
$G=\SSB d \SM \alpha(d) \in E, \lnot \beta(d)\notin E, d \in D \SSE$
is called the set of \emph{generating defaults} of extension~$E$ and
default theory~$D$.
\end{definition}

The definition of stable extensions allows inconsistent stable
extensions. However, inconsistent extensions only occur if the set~$W$
is already inconsistent where $\encoding{W,D}$ is the theory of
interest \cite[Corollary 3.60]{matr93}. In consequence, (i)~if $W$ is
consistent, then every stable extension of $\encoding{W,D}$ is
consistent, and (ii)~if $W$ is inconsistent, then $\encoding{W,D}$ has
a stable extension. For Case~(ii) the stable extension consists of all
formulas.
Therefore, we consider only consistent stable extensions.
For default theories with consistent $W,$ we can trivially transform
every formula in $W$ into a default rule. Hence, in this paper we
generally assume that $W = \emptyset$ and write a default theory
simply as set of default rules.
Moreover, we refer by $\SBE{D}$ to the set of all consistent stable
extensions of~$D$.

\begin{example}\label{ex:running1}
  Let the default theories~$D_1$ and $D_2$ be given as
  \shortversion{%
    $D_1\eqdef \{ %
    d_{1}=\default{\top : a}{a \vee b}, %
    d_{2}=\default{\top :\neg a}{\neg b}\}$ and
    $D_2\eqdef \{ %
    d_{1}=\default{c : a}{a \vee b}, %
    d_{2}=\default{c :\neg a}{\neg b},
    d_{3}=\default{\top : c}{c}, %
    d_{4}=\default{\top : \neg c}{\neg c}\}$. %
  }%
  \longversion{%
  \[D_1\eqdef \left\{ %
      d_{1}=\default{\top : a}{a \vee b},%
      d_{2}=\default{\top :\neg a}{\neg b}\right\},
  \]
  \[D_2\eqdef \left\{ %
      d_{1}=\default{c : a}{a \vee b}, %
      d_{2}=\default{c :\neg a}{\neg b},
      d_{3}=\default{\top : c}{c}, %
      d_{4}=\default{\top : \neg c}{\neg c}\right\}.\] %
}
  $D_1$ has no stable extension, while~$D_2$ has only one stable
  extension $E_{1} = \left\{ \neg c \right\}.$
\end{example}%

In our paper, we use an alternative characterization of stable
extension beyond fixed point semantics, which is inspired by Reiter's
stage construction~\cite{Reiter80a}.

\newcommand{\SD}{\text{SD}}%
\newcommand{\asat}{\lalpha\hy\text{satisfiable}\xspace}%
\newcommand{\bsat}{\lbeta\hy\text{satisfiable}\xspace}%
\newcommand{\gsat}{\lgamma\hy\text{satisfiable}\xspace}%

\begin{definition}\label{def:SED}
  Let $D$ be a default theory and $S \subseteq D$. Further, we let 
  $E(S) \eqdef \SB \gamma(d) \SM d \in S\SE$.
  We call a default~$d \in D$ \emph{\asat} in~$S$, if
  $E(S) \cup \neg\alpha(d)$ is satisfiable; and \emph{\bsat} in~$S$,
  if $E(S) \cup \beta(d)$ is unsatisfiable; \emph{\gsat} in~$S$, if
  $d\in S$.
  The set~$S$ is a \emph{satisfying default set}, if each
  default~$d \in D$ is \asat in~$S$, or \bsat in~$S$, or \gsat in~$S$.
  %
  %

  The set~$S$ is a \emph{stable default set}, if (i)~$S$ is a
  satisfying default set and
  (ii)~there is no~$S'$ where $S' \subsetneq S$ such that for each
  default~$d$ it holds that $d$ is \asat in~$S'$, or \bsat in~$S$, or
  \gsat in~$S'$.
  %
  %
  We refer by $\SD(D)$ to the set of all stable default sets of~$D$.
\end{definition}


The following lemma establishes that we can simply use stable default
sets to obtain stable extensions of a default theory.

\begin{lemma}[$\star$\footnotemark]
\label{lem:eq_se_sd}
  Let $D$ be a default theory. 
  Then, \[\SBE{D} = \bigcup_{S \in \SD(D)} \Th{\SB \gamma(d)  \SM d \in S \SE}.\]
  In particular, $S\in \SD(D)$ is a generating default of
  extension~$\Th{\SB \gamma(d) \SM d \in S \SE}$.
\end{lemma}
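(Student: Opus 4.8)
The plan is to prove the two set inclusions separately, in each case pairing a consistent stable extension $E$ with its set of generating defaults and, conversely, pairing a stable default set $S$ with the extension $\Th{E(S)}$. The bridge between the two characterisations is the following pair of translations, which I would record first: for $S\subseteq D$, a default $d$ is \emph{not} \bsat in $S$ exactly when $\neg\beta(d)\notin\Th{E(S)}$, and $d$ is \asat in $S$ exactly when $\alpha(d)\notin\Th{E(S)}$. These let me read the syntactic conditions of Definition~\ref{def:SED} as membership statements about $\Th{E(S)}$, and vice versa.

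For the inclusion $\supseteq$, fix $S\in\SD(D)$ and put $E\dfn\Th{E(S)}$. First I would show $\Gamma(E)\subseteq E$ by checking that $E$ itself satisfies conditions (i)--(iii) of Definition~\ref{def:SE}: (i) and (ii) are immediate since $W=\emptyset$ and $E$ is deductively closed, and for (iii) I take $d$ with $\alpha(d)\in E$ and $\neg\beta(d)\notin E$; then $d$ is neither \asat nor \bsat in $S$, so by the satisfying-set property it is \gsat, i.e.\ $d\in S$, whence $\gamma(d)\in E(S)\subseteq E$; since $\Gamma(E)$ is least, $\Gamma(E)\subseteq E$. The reverse inclusion is where the minimality clause (ii) of Definition~\ref{def:SED} enters. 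I would introduce the monotone stage operator $T_0\dfn\emptyset$, $T_{i+1}\dfn\{\,d\in D\mid E(T_i)\models\alpha(d),\ \neg\beta(d)\notin E\,\}$, let $S^\ast\dfn\bigcup_i T_i$ be its finite fixpoint, and note that $S^\ast$ is the least subset of $D$ closed under ``prerequisite derivable, justification consistent with $S$''. Using the satisfying-set property one verifies that $S$ is itself closed in this sense, so $S^\ast\subseteq S$; minimality then forces $S^\ast=S$. Consequently every $d\in S$ is not \bsat in $S$, and a straightforward induction on $i$ using clause~(iii) of $\Gamma(E)$ shows $E(T_i)\subseteq\Gamma(E)$, hence $E(S)\subseteq\Gamma(E)$ and $E\subseteq\Gamma(E)$. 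Thus $E=\Gamma(E)$. Consistency is automatic: if $\Th{E(S)}$ contained $\bot$ then every default would be \bsat in $S$, so the empty set would violate minimality unless $S=\emptyset$, in which case $E(S)=\emptyset$ is consistent.

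For the inclusion $\subseteq$, fix $E\in\SBE{D}$ and let $S\dfn\{\,d\in D\mid\alpha(d)\in E,\ \neg\beta(d)\notin E\,\}$ be its generating defaults. I would first re-derive $E=\Th{E(S)}$ from $E=\Gamma(E)$: the inclusion $\Th{E(S)}\subseteq E$ follows by applying clause~(iii) of $\Gamma(E)$ to each $d\in S$, and the reverse follows because $\Th{E(S)}$ satisfies (i)--(iii) and $\Gamma(E)$ is least. Given $E=\Th{E(S)}$, the satisfying-set property of $S$ is immediate from the translations above, since any $d\notin S$ fails $\alpha(d)\in E$ or $\neg\beta(d)\notin E$ and is therefore \asat or \bsat in $S$. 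The remaining obligation is minimality, and this is the technical heart: I would show that the stage operator $T_i$ reconstructs exactly $S$, i.e.\ $S^\ast=S$, by matching its stages with Reiter's stage construction $\Gamma_0=\Th{\emptyset}$, $\Gamma_{i+1}=\Th{\Gamma_i\cup\{\gamma(d)\mid\alpha(d)\in\Gamma_i,\ \neg\beta(d)\notin E\}}$, proving $\Th{E(T_i)}=\Gamma_i$ by induction and using $\bigcup_i\Gamma_i=\Gamma(E)=E$ (finiteness of $D$ guarantees stabilisation). Since the least closed set $S^\ast$ equals $S$, no proper subset of $S$ is closed, which is precisely clause~(ii). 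Hence $S\in\SD(D)$ and $E=\Th{E(S)}$ lies in the right-hand union; the ``in particular'' claim is exactly $E=\Th{E(S)}$ for this $S$.

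The step I expect to be the main obstacle is the exact correspondence between the semantic minimality clause~(ii) of Definition~\ref{def:SED} and the operational groundedness of Reiter's construction, encapsulated in the identity $S^\ast=S$ and in the verification $\Th{E(T_i)}=\Gamma_i$. Care is needed because clause~(ii) evaluates \asat and \gsat in the candidate subset $S'$ but \bsat in the fixed set $S$; recognising that this is precisely a closure condition (closed under adding any default whose prerequisite is entailed by the current conclusions and whose justification stays consistent with $E$) is what makes both directions go through, and it is the point I would write out most carefully.
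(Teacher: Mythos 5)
Your proof is correct, and it reaches the statement by a genuinely different route than the paper. The paper's proof consists of two short contradiction arguments that transfer minimality violations directly between the two definitions: in one direction, a hypothetical smaller default set $S'\subsetneq S$ satisfying the condition of clause~(ii) of Definition~\ref{def:SED} is turned into a proper subset $\Th{E(S')}\subsetneq E$ that satisfies conditions (i)--(iii) of Definition~\ref{def:SE}, contradicting that $\Gamma(E)$ is the \emph{smallest} such set; in the other direction, a hypothetical $\Gamma'(E)\subsetneq\Gamma(E)$ is turned into a violation of the subset-minimality of $S$. You instead make the minimality correspondence constructive: you recognize clause~(ii) as saying that $S$ has no proper subset closed under the monotone operator ``add $d$ whenever $E(\cdot)\models\alpha(d)$ and $\neg\beta(d)\notin E$'', build the least closed set $S^\ast$ by iteration from $\emptyset$, and prove $S^\ast=S$ --- in one direction by an induction pushing the stages $E(T_i)$ into $\Gamma(E)$, in the other by matching $\Th{E(T_i)}$ against Reiter's stages $\Gamma_i$, a construction the paper cites only as motivation for Definition~\ref{def:SED} and never uses in its proof. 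What your route buys: it is rigorous exactly where the paper is tersest (the paper's backward direction ends with the bare assertion that ``$S$ can not be stable default set''), it isolates the subtle asymmetry that \bsat is evaluated in the fixed $S$ while \asat and \gsat are evaluated in the candidate $S'$, and it disposes of consistency explicitly. What the paper's route buys: brevity --- by exploiting that both minimality clauses are ``smallest set'' conditions, it transfers counterexamples directly and avoids the stage machinery and the induction $\Th{E(T_i)}=\Gamma_i$ entirely.
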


\footnotetext{%
  Statements or descriptions whose proofs or details are omitted due
  to space limitations are marked with ``$\star$''. These statements
  are sketched in \shortversion{an extended version}\longversion{the appendix}.
  %
  %
}

%
%




Given a default theory~$D$ we are interested in the following
problems:
\shortversion{%
  \Ext asks whether $D$ has a consistent stable extension.
 \AspComp asks to output a stable default set~$D$.
  \AspCount asks to output the number of stable default sets of~$D$.
  \AspEnum asks to enumerate all stable default sets of~$D$.
}

\longversion{%
The \emph{extension existence problem} (called $\Ext$) asks whether
$D$ has a consistent stable extension.
$\Ext$ is $\SigmaP{2}$-complete~\cite{gottlob92}.
The extension computation problem (called \AspComp) asks to output a
stable default set of~$D$.
The extension counting problem (called \AspCount) asks to output the
number of stable default sets of~$D$.
The enumerating problem asks to enumerate all stable default
sets of~$D$ (called \AspEnum).
}

\section{Dynamic Programming on TDs for Default Logic}

In this section, we present the basic methodology and definitions to
solve our problems more efficiently for default theories that have
small treewidth.  Our algorithms are inspired by earlier work for
another non-monotonic framework~\cite{FichteEtAl17a}.  However, due to
much more evolved semantics of DL, we require extensions of the
underlying concepts.

Before we provide details, we give an intuitive description.  The
property treewidth was originally introduced for graphs and is based
on the concept of a tree decomposition (TD). Given a graph, a TD
constructs a tree where each node consists of sets of vertices of the
original graph (bags) such that additional conditions hold.
Then, we define a dedicated graph representation of the default theory
and our algorithms work by dynamic programming (DP) along the tree
decomposition (post-order) where at each node of the tree, information
is gathered in tables. The size of these tables is triple exponential
in the size of the bag.
Intuitively, the TD fixes an order in which we evaluate the default
theory. Moreover, when we evaluate the default theory for one node, we
can restrict the theory to a sub-theory and parts of prerequisites,
justifications, and conclusions that depends only on the content of
the currently considered bag.

%
%
\paragraph{Tree Decompositions.}
Let $G = (V,E)$ be a graph, $T = (N,F,n)$ be a tree~$(N,F)$ with root~$n$, and
$\chi: N \to 2^V$ be a mapping.
We call the sets $\chi(\cdot)$ \emph{bags} and $N$ the set of
nodes. The pair~${\mathcal{T}} = (T,\chi)$ is a \emph{tree
  decomposition (TD)} of~$G$ if the following conditions hold:
\begin{inparaenum}[(i)]
\item for every vertex~$v \in V$ there is a node~$t \in N$ with
  $v \in \chi(t)$;
\item for every edge~$e \in E$ there is a node~$t \in N$ with
  $e \subseteq \chi(t)$; and
\item for any three nodes~$t_1,t_2,t_3\in N$, if $t_2$ lies on the
  unique path from~$t_1$ to~$t_3$, then
  $\chi(t_1)\cap \chi(t_3) \subseteq \chi(t_2)$.
\end{inparaenum}
The \emph{width} of the TD is the size of the largest bag minus one.
The \emph{treewidth}~$\tw{G}$ 
is the minimum width over all possible TDs of~$G$.
For $k\in \NAT$ we can compute a TD of width~$k$ or output that no
exists in
time~$2^{\bigO{k^3}} \cdot \Card{V}$~\cite{BodlaenderKoster08}.

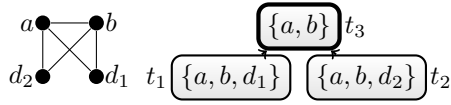
\begin{figure}[t]%
\vspace{-1.5em}
\centering
\begin{tikzpicture}[node distance=7mm,every node/.style={fill,circle,inner sep=2pt}]
\node (c) [label={[text height=1.5ex,yshift=-0.00cm,xshift=0.15cm]left:$a$}] {};
\node (d) [right = 0.5cm of c,label={[text height=1.5ex,xshift=-0.15cm]right:$b$}] {};
\node (r3) [below = 0.5cm of c,label={[text height=1.5ex,xshift=0.15cm,yshift=0.0cm]left:${d_2}$}] {};
\node (r1) [below = 0.5cm of d,label={[text height=1.5ex,xshift=-0.15cm,yshift=0.0cm]right:${d_1}$}] {};
\draw (c) to (r3);
\draw (d) to (r3);
\draw (c) to (d);
\draw (c) to (r1);
\draw (d) to (r1);
\end{tikzpicture}
\hspace{-0.5em}%
\begin{tikzpicture}[node distance=0.8mm]
\tikzset{every path/.style=thick}

\node (leaf1b) [tdnode,label={[xshift=0.25em]left:$t_1$}] {$\{a, b, d_1\}$};
\node (t7) [tdnode,label={[xshift=-0.25em]right:$t_2$}, right = 0.1cm of leaf1b,xshift=0.1cm] {$\{a,b,d_2\}$}; 
\node (join) [tdnode,ultra thick,label={[xshift=-0.25em]right:$t_3$}, above = 0.05cm of t7,xshift=-0.9cm] {$\{a, b\}$}; 

\draw [->] (t7) to (join);
\draw [->] (leaf1b) to (join);
\end{tikzpicture}
\caption{Graph~$G$ (left) and an TD~${\cal T}$ (right) of $G$.}%
\label{fig:graph-td}%
\end{figure}

Next, we restrict the TD~$\TTT$ such that we have only nice case
distinctions for our DP algorithm later. Therefore, we define a
\emph{nice TD} in the usual way as follows.
\longversion{%
  Given a TD $(T,\chi)$ with $T = (N,\cdot,\cdot)$, for
} %
\shortversion{%
  For
}%
a node~$t \in N$ we say that $\type(t)$ is $\leaf$ if $t$ has no
children; $\join$ if $t$ has children~$t'$ and $t''$ with $t'\neq t''$
and $\chi(t) = \chi(t') = \chi(t'')$; $\intr$ (``introduce'') if $t$
has a single child~$t'$, $\chi(t') \subseteq \chi(t)$ and
$|\chi(t)| = |\chi(t')| + 1$; $\rem$ (``removal'') if $t$ has a single
child~$t'$, $\chi(t) \subseteq \chi(t')$ and
$|\chi(t')| = |\chi(t)| + 1$. If every node $t\in N$ has at most two
children, $\type(t) \in \{ \leaf, \join, \intr, \rem\}$, and bags of
leaf nodes and the root are empty, then the TD is called \emph{nice}.
For every TD, we can compute a nice TD in linear time without
increasing the width~\cite{BodlaenderKoster08}.
In our algorithms we will traverse a TD bottom up, therefore, let
$\post(T,t)$ be the sequence of nodes in post-order of the induced
subtree~$T'=(N',\cdot, t)$ of $T$ rooted at~$t$.


\begin{example}
  Figure~\ref{fig:graph-td} (left) depicts a graph~$G$ together with a
  TD of width~$2$ of~$G$. 
  Further, the TD $\TTT$ in Figure~\ref{fig:running1_prim} sketches
  main parts of a nice TD of $G$ (obvious parts are left out).
\end{example}%

%
\paragraph{Graph Representations of Default Theories.} %
\longversion{%
  For a default theory~$D$, its \emph{primal graph}~$P(D)$ is the
  graph that has the variables of~$D$ as vertices and an edge~$a\, b$ if
  there exists a default~$d \in D$ and $a,b \in \at(d)$.  The
  incidence graph~$I(G)$ is the bipartite graph, where the vertices
  are variables of~$D$ and defaults~$d\in D$, and there is an
  edge~$d\,a$ between a default~$d\in D$ and a corresponding
  variable~$a\in\at(d)$.
}
\shortversion{%
  For a default theory~$D$, its \emph{incidence graph}~$I(G)$ is the
  bipartite graph, where the vertices are of variables of~$D$ and
  defaults~$d\in D$, and there is an edge~$d\,a$ between a
  default~$d\in D$ and a corresponding variable~$a\in\at(d)$.
}
The \emph{semi-primal graph}~$S(\prog)$ of $D$ is the graph, where the
vertices are variables~$\at(D)$ and defaults of~$D$. 
For each default~$d \in D$, we have an
edge~$a\, d$ if variable $a \in \at(d)$ occurs
in~$d$.  Moreover, there is an 
edge~$a\, b$ if either
$a, b\in \at(\alpha(d))$, or $a, b\in \at(\beta(d))$, or
$a, b\in \at(\gamma(d))$\footnote{Note that these formulas may also be~$\top$ or~$\bot$, which we ``simulate'' by means of the same formula~$v \vee \neg v$ or~$v \wedge \neg v$, where variable~$v$ does not occur in the default theory.}.
%
%
%
%
Observe the following connection. For any default theory~$D$, we have
that $\tw{I(D)} \leq \tw{S(D)}$.
Note that earlier work~\cite{msstv15} uses a special version of the
incidence graph~$I'(D)$. The graph $I'(D)$ is a supergraph of~$I(D)$
and still a bipartite graph, 
which contains an additional vertex for each subformula
of every occurring formula, 
and corresponding edges between subformulas and variables. 
Consequently, we obtain the bound $\tw{I(D)} \leq \tw{I'(D)}$.    

%
%

\begin{example}
  Recall default theory~$D_1$ of Example~\ref{ex:running1}. We observe
  that graph~$G$ in the left part of Figure~\ref{fig:graph-td} is the
  semi-primal graph of~$D_1$.
\end{example}

In our DP algorithms for default logic we need to remember when we can
evaluate a formula (prerequisite, justification, or conclusion) for a
default,~i.e., we have a default and all the variables of the formula
in a bag. To that end, we introduce labels of nodes.
Since we work along the TD and want a unique point where to evaluate,
we restrict a label to the first occurrence when working along the TD.
%
A \emph{labeled tree decomposition (LTD)}~$\TTT$ of a default
theory~$D$ is a tuple~$\TTT=(T,\chi,\delta)$ where $(T,\chi)$ is a TD of $S(D)$
and $\delta: N \rightarrow 2^{(\{\alpha,\beta,\gamma\} \times D)}$ is a
mapping where for any $(f,d)$ in $\{\alpha,\beta,\gamma\} \times D$ it holds that
%
(i) if $(f,d) \in \delta(t)$, then
$\{d\} \cup f(d) \subseteq \chi(t)$; and (ii) if $\{d\} \cup f(d) \subseteq \chi(t)$
and there is there is no descendent~$t'$ of $t$ such that
$(f,d) \in \delta(t')$, then $(f,d) \in \delta(t)$.

We need special case distinctions for DL. Therefore, we restrict an
LTD as follows.
For a node~$t \in N$ that has exactly one child~$t'$ where
$\chi(t) = \chi(t')$ and $\delta(t) \neq \emptyset$, we say that
$\type(t)$ is $\labl$.
If every node $t\in N$ has at most two children,
$\type(t) \in \{ \leaf,$ \join, \intr, \labl, $\rem\}$, bags of leaf
nodes and the root are empty, $\Card{\delta(t)} \leq 1$, and
$\delta(t) = \emptyset$ for $\type(t) \neq \labl$ then the LTD is
called \emph{pretty}.
It is easy to see that we can construct in linear time a pretty LTD
without increasing the width from a nice TD, simply by traversing the
tree of the TD and constructing the labels and duplicating nodes~$t$
where $\delta(t) \neq \emptyset$. Assume in the following, that we use
pretty LTDs, unless mentioned otherwise.

\begin{algorithm}[t]
  \KwData{Pretty LTD~$\TTT=(T,\chi,\delta)$ with $T=(N,\cdot,n)$ of the semi-primal graph~$S(D)$.}%
  \KwResult{A table for each node~$t\in T$ stored in a mapping
    $\Tabs{t}$.}
  \For{\text{\normalfont iterate} $t$ in \text{\normalfont post-order}(T,n)}{\vspace{-0.05em}%
    $\Tab{} \eqdef \SB \Tabs{$t'$} \SM t' \text{ is a child of $t$ in
      $T$}\SE$\;\vspace{-0.05em} %
    $\Tabs{$t$} \lassign {\PRIM}(t,\chi(t),\delta(t),{D}_t,\Tab{})$\; %
    \vspace{-0.5em} }\vspace{-0.05em}%
  \Return{\text{\normalfont \Tabs{$\cdot$}}}\vspace{-0.15em}%
  \caption{Algorithm ${\dpa}({\cal T})$ for Dynamic Programming on TD
    ${\cal T}$ for DL, cf.~\cite{FichteEtAl17a}.}
\label{fig:dpontd}
\end{algorithm}
%



Next, we briefly present the methodology and underlying ideas of our
DP algorithms on TDs. The basis for our Algorithm is given in
Listing~\ref{fig:dpontd} ($\dpa$), which traverses the underlying tree
of the given LTD~$(T,\chi,\delta)$ in post-order and runs an
algorithm~\PRIM at each node~$t\in T$.
\PRIM computes a new table~$\tab{t}$ based on the tables of the
children of~$t$.  It has only a ``local view'' on \emph{bag-defaults},
which are simply the ``visible'' defaults,~i.e.,
$\prog_t \eqdef \prog \cap \chi(t)$.
Intuitively, we store in each table information such as partial
assignments of~$D_t$, that is necessary to locally decide the default
theory without storing information beyond variables that belong to the
bag~$\chi(t)$.
Further, the \emph{default theory below $t$} is defined as
$\progt{t} \eqdef \SB d \SM d \in \prog_{t'}, t' \in \post(T,t) \SE$,
and the \emph{default theory strictly below $t$} is
$\progtneq{t}\eqdef \progt{t}\setminus \prog_t$. For root~$n$ of $T$,
it holds that $\progt{n} = \progtneq{n} = \prog$.

\begin{example}
Intuitively, the LTD of Figure~\ref{fig:graph-td} enables us to evaluate
  $\prog$ by analyzing sub-theories ($\{d_1\}$ and $\{d_2\}$) and
  combining results agreeing on $a,b$.  Indeed, for the given LTD of
  Figure~\ref{fig:graph-td}, $\progt{t_1}=\{d_1\}$,
  $\progt{t_2}=\{d_2\}$ and
  $\prog=\progt{t_3}=\progtneq{t_3}=\progt{t_1} \cup \progt{t_2}$. 
\end{example}%
%

The next section deals with the details of \PRIM.
Before, we need a notion to talk about the result of sequences of a
computation.
For a node~$t$, the Algorithm~\PRIM stores tuples in a table~$\tau_t$
based on a computation that depends on tuples (\emph{originating
  tuples}) that are stored in the table(s) of the child nodes. In
order to talk in informal explanations about properties that tuples or
parts of tuples have when looking at the entire computation from the
relevant leaves up to the node~$t$ in the post-order, we need a notion
similar to a default theory below~$t$ for parts of tuples.
Assume for now that our tuples in tables are only tuples of sets.
%
Then, we collect recursively in pre-order along the induced
subtree~$T'$ of $T$ rooted at~$t$ a sequence~$s$
of originating tuples~$(\tabval, \tabvali{\tabval_1}, \ldots, \tabvali{\tabval_m})$. If the
set~$T$ occurs in position~$i$ of tuple~$\tabval$, our
notion~$T^{\leq t}(s)$ takes the union over all
sets~$T, T_1, \ldots, T_m$ at position~$i$ in the
tuples~$\tabvali{\tabval_1}, \ldots, \tabvali{\tabval_m}$.
Since a node of type \rem will typically result in multiple
originating tuples, we have multiple sequences~$s_1,\ldots,s_m$ of
originating tuples in general. This results in a
family~$\TTT^{\leq t}\eqdef \SB T^{\leq t}(s) \SM s \in \{s_1, \ldots,
s_m\}\SE$ of such sets. However, when stating properties, we are
usually only interested in the fact that each~$S \in \TTT^{\leq t}$
satisfies the property. To this end, we refer to $T^{\leq t}$ as any
arbitrary $S \in \TTT^{\leq t}$.  Further, we let
$T^{< t} \eqdef T^{\leq t} \setminus T$.
The definition vacuously extends to nested tuples and families of
sets.
A more formal compact definition provide so-called extension
pointers~\cite{BliemCharwatHecher16b}.

\begin{example}
Recall the given TD in Figure~\ref{fig:graph-td} (right).
For illustrating notation, we remove node~$t_2$, since we only care about nodes~$t_1$ and~$t_3$
and thereby obtain a simpler TD~${\cal T}=(T,\chi,\delta)$ (of some simpler graph).
Assume that for both nodes~$t$ in $T$ we store a table of tuples, say
of the form~$\langle X, Y \rangle$,
where $X$ is a subset of the bag~$\chi(t)$
and~$Y$ is a set of subsets of~$\chi(t)$.
Further, let the tables~$\tab{i}$ for the two nodes in this example be as follows:
$\tab{1} \eqdef \{\tabvali{\tabval_{1.1}} = \langle \{d_1\}, \{\emptyset, \{d_1\}, \{d_1, b\}\} \rangle, \tabvali{\tabval_{1.2}} = \langle \{a\}, \{\{b\}\} \rangle\}$, and
$\tab{3} \eqdef \{\tabvali{\tabval_{3.1}} = \langle \emptyset, \{\{a\}\} \rangle\}$.
Then, we let tuple~$\tabvali{\tabval_{3.1}}$ originate from tuple~$\tabvali{\tabval_{1.1}}$ of child table~$\tab{1}$
and not from~$\tabvali{\tabval_{1.2}}$. We discuss only the~$Y$ part of tuple~$\tabvali{\tabval_{3.1}}$ (referred to by~$Y_{3.1}$).
In order to talk about any ``extension''~$Y^{\leq t}_{3.1.1}\supseteq Y_{3.1.1}$ of $Y_{3.1.1}=\{a\}$ in~${\cal T}$, 
we write~$Y^{\leq t}_{3.1.1}$, which can be one of~$\{a\}$, $\{a,d_1\}$, or~$\{a, d_1,b\}$.
\end{example}

\section{Computing Stable Default Sets}
\label{sec:algo:dp}
In this section, we present our table algorithm~\PRIM.
Therefore, let~$D$ be a given default theory and
$\TTT=(T,\chi,\delta)$ a pretty LTD of~$S(D)$.

Our table algorithm follows Definition~\ref{def:SED}, which consists of
two parts: (i)~finding sets of satisfying default sets of the default
theory and (ii)~generating smaller sets of conclusions for these
satisfying default sets in order to invalidate subset minimality.
Since, \PRIM has only a ``local view'' on default theory~$D$, we are
only allowed to store parts of satisfying default sets.  However, we
guarantee that, if for the ``visible'' part~$Z$ of a set of satisfying
defaults for any node~$t$ of~$T$ there is no smaller set of satisfying
defaults, then~$Z$ can be extended to a stable default set
of~$\progtneq{t}$.
However, in general $Z$ alone is not sufficient, we require auxiliary
information to decide the satisfiability of defaults.  We need a way
to prove that $Z$ witnessed a satisfying default set~$Z^{\leq t}$.  In
particular, even though each~$d \in Z^{\leq t}$ is vacuously \gsat, we
have to verify that each default~$d \in D\setminus Z^{\leq t}$ is
indeed \asat or \bsat.
In turn, we require a set~\MMM of (partial) assignments
of~$Z^{\leq t}$.
To this end, we store in table~$\tab{t}$ tuples that are of the
form~$\langle Z, {\cal M}, \PPP, \CCC \rangle$,
where~$Z\subseteq\prog_t$ and ${\cal M}\subseteq 2^{X}$ for
$X = \chi(t)\cap\at({\prog})$.
The first three tuple positions cover Part~(i) and can be seen as the
\emph{witness} part.  The last position consists of a set of
tuples~$\CCC = \langle \rho, \mathcal{AC},\mathcal{BC} \rangle$ to
handle Part~(ii) and can be seen as the \emph{counter-witness} part.

In the following, we describe more details of our tuples.
We call~$Z$ the \emph{witness set}, since $Z$ \emph{witnesses} the
existence of a satisfying default set~$Z^{\leq t}$ for a
\emph{sub-theory}~$S$.
Each element~$M$ in the set~$\mathcal{M}$ of \emph{witness models}
witnesses the existence of a model of
$F_{\leq t}\eqdef \bigwedge_{d \in Z^{\leq t}}\gamma(d) $.
For our assumed witness set~$Z$, we require a set~$\PPP$ of
\emph{witness proofs}.  The set~$\PPP$ consists of tuples of the
form~$\langle {\sigma}, \AAA, \BBB \rangle$, where
$\sigma: \prog_t \to \{\lalpha, \lbeta, \lgamma\}$ and
$\AAA, \BBB \subseteq {2^{X}}$ for $X = \chi(t)\cap\at({\prog})$.
The function~$\sigma$, which we call \emph{states function}, maps each
default~$d\in \prog_t$ to a decision
state~$v \in \{\lalpha,\lbeta,\lgamma\}$ representing the case where
$d$ is $v\hy\text{satisfiable}$.
%
%
%
The set~$\AAA$, which we call the \emph{required \lalpha\hy
  assignments}, contains an assignment $A\in 2^{X}$ for \emph{each}
default~$d$ that is claimed to be \asat. More formally, there is an
assignment~$A \in \AAA$ for each
default~$d\in\sigma^{-1}(\lalpha)\cup\progtneq{t}$ where
$\sigma^{\leq t}(d) = \lalpha$ such that there is an
assignment~$A^{\leq t}$ that satisfies
$F_{\leq t} \wedge \neg \alpha(d)$.
The set~$\BBB$, which we call the \emph{refuting \lbeta\hy
  assignments}, contains an assignment $B\in 2^{X}$ for certain
defaults.
Intuitively, for each~$B \in \BBB$ there is a default~$d$ in the
current bag~$\chi(t)$ or was in a bag below~$t$ such that there is an
assignment~$B^{\leq t}$ where the justification is not fulfilled.
More formally, there is a~$B \in \BBB$ if there is an
assignment~$B^{\leq t}$ that satisfies $F_{\leq t} \wedge \beta(d)$
for some default~$d\in \sigma^{-1}(\lbeta)\cup\progtneq{t}$ where
$\sigma^{\leq t}(d) = \lbeta$.
In the end, if $Z$ proves the existence of a satisfying default
set~$Z^{\leq t}$ of theory $\progtneq{t}$, then there is at least one
tuple~$\langle \cdot, \cdot, \BBB \rangle \in \PPP$ with
$\BBB = \emptyset$. Hence, we require that~$\BBB = \emptyset$ in order
to guarantee that each default~$d\in\progtneq{t}$ is \bsat where
$\sigma^{\leq t}(d) = \lbeta$.
To conclude, if table~$\tab{n}$ for (empty) root~$n$ contains
$\tabval=\langle Z, \cdot, \PPP, \CCC \rangle$ where~$\PPP$ contains
$\langle \cdot, \cdot, \emptyset \rangle$, then $Z^{\leq t}$ is a
satisfying default set of the default theory~$D$.
%
%
%
The main aim of $\CCC$ is to invalidate the subset-minimality of
$Z^{\leq t}$, and will be covered later.

\begin{algorithm}[t]
  \SetAlgoCaptionSeparator{$\;(\star)$:}
   \KwData{Bag $\chi_t$, label mapping~$\delta_t$, bag-theory ${D}_t$, and child tables $\Tab{}$ of $t$.}
\KwResult{Table~$\tab{t}$.} %

\lIf(\tcc*[f]{Abbreviations below.}){$\type(t) = \leaf$}{%
     $\tab{t} \lassign \Big\{ \Big\langle
     \tuplecolor{\inputPredColor}{\emptyset}, \tuplecolor{\statePredColor}{\{\emptyset\}, \{\langle \emptyset, \emptyset, \emptyset \rangle\}},~\tuplecolor{\outputPredColor}{\emptyset}
     \Big\rangle \Big\}$\label{line:leaf}%
   }%
    \uElseIf{$\type(t) = \intr, d \in D_t$ is the introduced default, and $\tau'\in \Tab{}$}{

	
	\makebox[0.0em][l]{}$\hspace{-0.3cm}\tab{t} \lassign $
	{\{%
$\Big\langle 
       	\tuplecolor{\inputPredColor}{\MAIR{Z}{d}},\,$%
       	$\tuplecolor{\statePredColor}{{\cal M}, \sub_{d,\{\lgamma\}}(\PPP)}, $
		$	%
	\tuplecolor{\outputPredColor}{\sub_{d,\{\lalpha,\lbeta,\lgamma\}}({\cal C})\, \cup \sub_{d,\{\lalpha,\lbeta\}}(\PPP, {\cal M})}\rangle,
	$}\newline
		\makebox[24.45em][l]{\makebox[1.3em]{}
		$\Big\langle 
       	\tuplecolor{\inputPredColor}{Z}, \tuplecolor{\statePredColor}{{\cal M}, \sub_{d,\{\lalpha,\lbeta\}}(\PPP)},~
	%
	\tuplecolor{\outputPredColor}{\sub_{d,\{\lalpha,\lbeta\}}({\cal C})} \rangle$
		}
       $\Bigm|\;\langle \tuplecolor{\inputPredColor}{Z}, \tuplecolor{\statePredColor}{{\cal M}, \PPP},  \tuplecolor{\outputPredColor}{{\cal C}}
       \rangle \in \tab{}'
       \Big\}$\label{line:def_int}
       }\uElseIf{$\type(t) = \labl$, $\{(\gamma,d)\}=\delta_t$ is the label of~$t$, $d\in D_t$, and $\tau'\in \Tab{}$}
       {	
%
%
	 \hspace{-0.3cm}$\tab{t}\lassign 
	\Big\{ \Big\langle \tuplecolor{\inputPredColor}{
	{Z}}, 
	\tuplecolor{\statePredColor}{\Mod_{{\cal M}}(\gamma({d})),  \GS_{d}(\PPP)},$
	%
	$\tuplecolor{\outputPredColor}{\CWc_d({\cal C})}\rangle$ 
	   \makebox[2.1em][l]{
	   } $ \mid\; 
	   \langle \tuplecolor{\inputPredColor}{Z}, \tuplecolor{\statePredColor}{{\cal M}, \PPP}, \tuplecolor{\outputPredColor}{\cal C} \rangle \in \tab{}', d\in Z\}~\cup\hspace{-5em}$\vspace{-0.05em} %
	\makebox[21.5em][l]{\makebox[1.1em]{}$\{\langle\tuplecolor{\inputPredColor}{Z}, \tuplecolor{\statePredColor}{{\cal M}, \PPP},~
	\tuplecolor{\outputPredColor}{{\cal C}}\rangle$} $\mid\;  \langle \tuplecolor{\inputPredColor}{Z}, \tuplecolor{\statePredColor}{{\cal M}, \PPP}, \tuplecolor{\outputPredColor}{\cal C} \rangle \in \tab{}', d \not\in Z\}\hspace{-5em}$\label{line:label_conc}
       }
  \uElseIf{$\type(t) = \labl$, $\{(\alpha,d)\}=\delta_t$ is the label of~$t$, $d\in D_t$, and $\tau'\in \Tab{}$}{
       



	   \makebox[25em][l]{{\hspace{-0.3cm}}$\tab{t}\lassign$
	$\{\langle\tuplecolor{\inputPredColor}{Z}$,
       	$\tuplecolor{\statePredColor}{{\cal M}, \GSA_d(\PPP, \cal M)},~ \tuplecolor{\outputPredColor}{\GSAC_d({\cal C})}\rangle$
       	} 
	   $\Bigm|\;\langle \tuplecolor{\inputPredColor}{Z}, \tuplecolor{\statePredColor}{{\cal M}, \PPP},  \tuplecolor{\outputPredColor}{{\cal C}}
       \rangle \in \tab{}'
       \Big\}$\label{line:label_pre}
	   }\uElseIf{$\type(t) = \labl$, $\{(\beta, d)\}=\delta_t$ is the label of~$t$, $d\in D_t$, and $\tau'\in \Tab{}$
	   }
	   {

		\makebox[25.0em][l]
		{$\hspace{-0.3cm}\tab{t}\lassign\Big\{ \Big\langle 
       	\tuplecolor{\inputPredColor}{Z}$, 
       	$\tuplecolor{\statePredColor}{{\cal M}, \GSB_d(\PPP, {\cal M})},~
		\tuplecolor{\outputPredColor}{
       \GSB_d({\cal C}, {\cal M})}\rangle$
       }
$\Bigm|\;\langle \tuplecolor{\inputPredColor}{Z}, \tuplecolor{\statePredColor}{{\cal M}, \PPP},  \tuplecolor{\outputPredColor}{{\cal C}}
       \rangle \in \tab{}'\Big\}\hspace{-5em}$  
       }\label{line:label_just}
       \uElseIf{
       $\type(t) = \intr, a \in \chi_t$ is the introduced variable, and $\tau'\in \Tab{}$}{%


		\makebox[25em][l]{$\hspace{-0.3cm}\tab{t} \lassign \Big\{ \Big\langle \tuplecolor{\inputPredColor}{Z}, \tuplecolor{\statePredColor}{{\cal M} \cup \MAIRCR{{\cal M}}{a}, \Choose_a(\PPP)},~$
	$\tuplecolor{\outputPredColor}{\Choose_a({\cal C})}\rangle$
       }       
       $\Bigm|\;\langle \tuplecolor{\inputPredColor}{Z}, \tuplecolor{\statePredColor}{{\cal M}, \PPP},  \tuplecolor{\outputPredColor}{{\cal C}}
       \rangle \in \tab{}'
       \Big\}$\label{line:atom_int}
    }
    \uElseIf{$\type(t) = \rem$, $d\not\in D_t$ is the removed default, and $\tau'\in \Tab{}$}{

	   \makebox[25.0em][l]{$\hspace{-0.3cm}\tab{t} \lassign \Big\{ \Big\langle \tuplecolor{\inputPredColor}{\MARR{Z}{d}}, \tuplecolor{\statePredColor}{{{\cal M}},\,} \tuplecolor{\statePredColor}{\DS_d(\PPP)},~\tuplecolor{\outputPredColor}{\DS_d({\cal C})}\Big\rangle$}
       $\Bigm|\;\langle \tuplecolor{\inputPredColor}{Z}, \tuplecolor{\statePredColor}{{\cal M}, \PPP}, \tuplecolor{\outputPredColor}{{\cal C}}
       \rangle \in \tab{}' \Big\}$\label{line:def_rem}
   
   } %
     \uElseIf{$\type(t) = \rem$, $a\not\in\chi_{t}$ is the removed variable, and $\tau'\in \Tab{}$}
     {

	   \makebox[25.0em][l]{$\hspace{-0.3cm}\tab{t} \lassign \Big\{ \Big\langle \tuplecolor{\inputPredColor}{{Z}}, \tuplecolor{\statePredColor}{\MAZR{\cal M}{a},\,} \tuplecolor{\statePredColor}{\AS_a(\PPP)},~\tuplecolor{\outputPredColor}{\AS_a({\cal C})}\Big\rangle$}
       $\Bigm|\;\langle \tuplecolor{\inputPredColor}{Z}, \tuplecolor{\statePredColor}{{\cal M}, \PPP}, \tuplecolor{\outputPredColor}{{\cal C}}
       \rangle \in \tab{}'\Big\}$\label{line:atom_rem}
   } %
     \uElseIf{$\type(t) = \join$ and $\tau', \tau'' \in \Tab{}$ with $\tau' \neq \tau''$}{%

	   \hspace{-0.3cm}{$\tab{t} \lassign \Big\{ \Big\langle \tuplecolor{\inputPredColor}{Z}, \tuplecolor{\statePredColor}{{\cal M'} \cap {\cal M''}, {\cal P'} {\hat\bowtie}_{{\cal M'}, {\cal M''}} {\cal P''}}, $} 
       $\tuplecolor{\outputPredColor}{
       ({\cal C'} {\hat\bowtie}_{{\cal M'},{\cal M''}} {\cal C''}) \cup ({\cal P'} {\hat\bowtie}_{{\cal M'},{\cal M''}} {\cal C''})\;\cup}$ 
       $\makebox[0.5cm][l]{}\tuplecolor{\outputPredColor}{({\cal C'} {\hat\bowtie}_{{\cal M'},{\cal M''}} {\cal P''})}\rangle$
	   $\makebox[4.9em][l]{}\Bigm|\;\langle \tuplecolor{\inputPredColor}{Z}, \tuplecolor{\statePredColor}{{\cal M'}, {\cal P'}}, \tuplecolor{\outputPredColor}{{\cal C}'}
       \rangle \in \tab{}', \langle \tuplecolor{\inputPredColor}{Z}, \tuplecolor{\statePredColor}{{\cal M''}, {\cal P''}}, \tuplecolor{\outputPredColor}{{\cal C}''} \rangle \in \tab{}''
       \Big\}$\label{line:join}
      
     } 
     \vspace{-0.15em}
	\Return{\tab{t}}
	\vspace{-0.35em}
     \caption{Table algorithm~$\PRIM(t,\chi_t,\delta_t,{D}_t,\Tab{})$.}
   \label{fig:prim}
   \algorithmfootnote{
\label{foot:sigma}\label{foot:abrevtwo}
  $\MARRR{S}{e} \eqdef S \setminus \{e\}$,
  $\MAZR{{S}}{e} \eqdef \{\MARRR{S}{e}\mid S\in {\cal S}\}$,
  $\MAIRR{S}{e} \eqdef S \cup \{e\}$, and
  $\MAIRCR{{\cal S}}{e} \eqdef \{\MAIR{S}{e} \mid S\in {\cal S}\}$.
}
 \end{algorithm}%

 Next, we briefly discuss important cases of Listing~\ref{fig:prim}
 for Part~(i), which consists only of the first three tuple positions
 (colored red and green) and ignores the remaining parts of the
 tuple. We call the resulting table algorithm~$\INCSAT$, which only concerns about
computing satisfying default sets.
 Let $t \in T$ and $\tabval' = \langle Z, {\cal M}, \PPP, \cdot \rangle$
 a tuple of table~$\tau'$ for a child node of~$t$ and
 $\langle \sigma, \AAA, \BBB \rangle$ a tuple in~$\PPP$.
 We describe informally how we transform $\tabval'$ tuples into one or
 more tuples for the table in node~$t$.

 If $t$ is of type~$\intr$ and a default~$d$ is introduced in~$t$,
 Line~\ref{line:def_int} guesses whether~$d$ is \asat, \bsat, or
 \gsat. To this end, $\sub_{d,{\cal S}}(\PPP)$ adds potential proofs
 to~$\PPP$ where the satisfiability state of~$d$ is within~${\cal S}$.
 Lines~\ref{line:label_conc},~\ref{line:label_pre}
 and~\ref{line:label_just} cover nodes of type $\labl$ 
 as follows: In Line~\ref{line:label_conc}, if~$(\gamma,d)$ is the
 label and $\sigma(d) = \lgamma$, we enforce that each~$M\in{\cal M}$
 is also a model of~$\gamma(d)$. $\GS_d(\PPP)$ only keeps tuples in
 $\PPP$ where each~$A \in \AAA$ is a model of~$\gamma(d)$.
 In Line~\ref{line:label_pre}, if~$(\alpha,d)$ is the label and
 $\sigma(d) = \lalpha$, $\GSA_d(\PPP, {\cal M})$ enforces that
 each~$A \in\AAA$ within~$\PPP$ is a model of $\neg \alpha(d)$.
 In Line~\ref{line:label_just}, if~$(\beta,d)$ is the label and
 $\sigma(d) = \lbeta$, $\GSB_d(\PPP, {\cal M})$ adds assignments
 of~$\MMM$ to~$\BBB$ that are also models of~$\beta(d)$.

 Next, we cover the case, where a variable~$a$ is introduced.  In
 Line~\ref{line:atom_int}, we increase the existing witness
 set~$M \cup \{a\}$ for each $M \in \MMM$. $\Choose_a(\PPP)$ works
 analogously for $\BBB$ and computes all potential combinations of
 every~$A \in \AAA$, where $a$ is either set to true or to false.

 In Line~\ref{line:def_rem}, we remove default~$d$ from~$Z$ and
 $\DS_d(\PPP)$ removes $d$ from the domain of the mapping~$\sigma$,
 since $d$ is not considered anymore.
 In Line~\ref{line:atom_rem}, we remove variable~$a$ from each
 $M \in \MMM$ and $\AS_a(\PPP)$ works analogously for each assignment of~$\AAA$ and
 $\BBB$.

 Finally, if the node is of type~$\join$, we have a second child and
 its table~$\tab{}''$ as well as a tuple~$\tabval'' \in \tab{}''$.
 Intuitively, tuples~$\tabval'$ and $\tabval''$ represent intermediate
 results of two different branches in~$T$. To combine these results,
 we have to join the tuples on the witness extension, witness states,
 and the witness models. The join operation~$\bowtie$ can be seen as a
 combination of inner and outer joins, used in database
 theory~\cite{AbiteboulHullVianu95}.  Note that for instance for an
 assignment~$B \in \BBB$ to endure within~$\PPP$ of~$\tab{t}$, it
 suffices that $B$ is a corresponding witness model in~$\tabval{}''$.

\begin{example}\label{ex:sat}
  Consider default theory~$\prog$ from Example~\ref{ex:running1} and
  in Figure~\ref{fig:running1_prim} (left) pretty
  LTD~$\TTT=(\cdot, \chi, \delta)$ of the semi-primal graph~$S(\prog)$ and the
  tables~$\tab{1}$, $\ldots$ , $\tab{18}$ illustrating computation
  results obtained during post-order traversal of ${\cal T}$ by $\dpa$
  using \INCSAT instead of \PRIM in Line~3. We omit the last position
  of the tuples, since those are only relevant for \PRIM.
  Note that we discuss only selected cases, and we assume for presentation that each tuple in a table $\tab{t}$
is identified by a number,~i.e., the $i$-th tuple corresponds to $\tabvali{\tabval_{t.i}} =
\langle Z_{t.i}, {\cal M}_{t.i}, {\cal P}_{t.i}, {\cal C}_{t.i} \rangle$. The numbering
naturally extends to sets in witness proofs and counter-witnesses.
  %
  %


  We obtain
  table~$\tab{1}=\SB \langle\emptyset, \{\emptyset\}, \{\langle
  \emptyset, \emptyset, \emptyset \rangle \}\rangle \SE$ as
  $\type(t_1) = \leaf$ (see Line~\ref{line:leaf}).
  Since $\type(t_2) = \intr$ and~$a$ is the introduced variable, we
  construct table~$\tab{2}$ from~$\tab{1}$ by modifying~${\cal M}_{2.1}$
  and
  ${\cal P}_{2.1}=\{\langle \sigma_{1.1}, \AAA_{1.1}, {\cal M}_{2.1}\rangle\}$,
  where~${\cal M}_{2.1}$ contains~$M_{1.1.k}$ and
  $M_{1.1.k}\cup \{a\}$ for each~$M_{1.1.k}$ ($k\leq 1$) in
  $\tab{1}$. This corresponds to a guess on~$a$.  Precisely,
  ${\cal M}_{2.1}\eqdef\{\emptyset, \{a\}\}$
  (Line~\ref{line:atom_int}).

  Then, $t_3$ introduces default $d_1$, which results in two tuples.  In
  tuple~$\tabvali{\tabval_{3.1}}$ default~$d_1$ is \asat or \bsat due
  to~$\alpha(d_1)$ or~$\beta(d_1)$ (see~$\PPP_{3.1}$,
  Line~\ref{line:def_int}).
  In tuple~$\tabvali{\tabval_{3.2}}$ default~$d_1$ is \gsat and we have that
  $Z_{3.2}=\{d_1\}$ and
  $\PPP_{3.2}=\SB\{d_1\mapsto\lgamma\}, \emptyset,
  \emptyset\rangle\SE$.

  Node~$t_4$ introduces label~$(\beta,d_1)$ and
  modifies~$\PPP_{4.1.2}$. In particular, it chooses among~$\MMM$
  candidates, which might contradict that~$d_1$ is \bsat (see
  Line~\ref{line:label_just}).  Obviously, we have that
  $\BBB_{4.1.2} = \SB\{a\}\SE$, since $\beta(d_1)=a$.

  In table~$\tab{5}$, we present the case where default~$d_1$ should be
  \asat.  In this case since $\alpha(d_1)=\top$, we do not find any
  model of~$\bot$. In consequence, there is no corresponding 
successor of~${\cal P}_{4.1.1}$ in~$\tab{5}$, i.e., in~$\tab{5}$ it turns out that $d_1$ can not be~\asat.

  Table~$\tab{7}$ concerns the conclusion~$\gamma(d_1)$ of a
  default. It updates every assignment occurring in the table, such
  that the models satisfy~$\gamma(d_1)$ if $d_1$ is \gsat.  The
  remaining cases work similarly.

  In the end, join node~$t_{16}$ just combines witnesses agreeing on
  its content.
\end{example}%

\begin{figure}[t]%
\centering %
\includegraphics[scale=1.1]{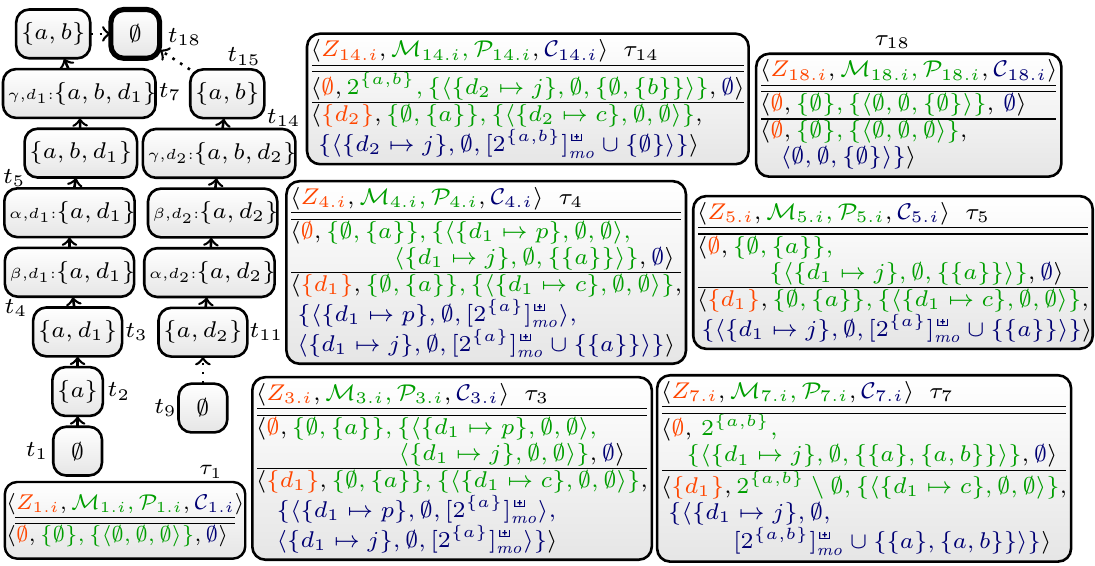}
\caption{Selected DP tables of~\PRIM~for pretty LTD~$\TTT$.}
\label{fig:running1_prim}
\label{fig:running1_prim_asp}
\end{figure}%

Next, we briefly discuss the handling of counter-witnesses, which
completes Algorithm~\PRIM. The handling of
counter-witnesses~${\cal C}$ is quite similar to the witness
proofs~$\PPP$.
The tuples~$\langle {\rho}, {\cal AC}, {\cal BC} \rangle \in{\cal C}$
consist of a states function~$\rho: \progt{t} \mapsto \{\lalpha,\lbeta,\lgamma\}$,
required \lalpha\hy assignments~${\cal AC}\subseteq 2^{X}$
and refuting \lbeta\hy assignments ${\cal BC}\subseteq 2^{(X\cup \{\bff\})}$ for
$X = \atto\cap\chi(t)$.  
In contrast to the refuting \lbeta\hy assignments in $\BBB$,
${\cal BC}$ may in addition contain an assignment~$B \in \mathcal{BC}$
with a marker~\bff. The marker indicates that $B^{\leq t}$ is actually
not refuting, but only a model of $\gamma(d)$ for each default below
$t$ that is \gsat, i.e.,
$\bigwedge_{d \in \progt{t}, \rho^{\leq t}(d) = \lgamma} \gamma(d)$.
In other words, those assignments setting~\bff to true are the
counter-witness assignments that do not refute \lgamma\hy assignments
(comparable to witness assignments in \MMM for Part~(ii)).

The existence of a certain counter-witness tuple for a witness in a
table~$\tab{t}$ establishes that the corresponding witness can \emph{not} be
extended to a stable default set of~$\progt{t}$.  In particular, there
exists a stable extension for~$\prog$ if the table~$\tab{n}$ for
root~$n$ contains a tuple of the
form~$\langle\emptyfunc, \{\emptyset\}, \PPP, {\cal C} \rangle$,
where~$\PPP\neq\emptyset$ and contains tuples of the
form~$\langle \cdot, \cdot, \emptyset\rangle$.
Moreover, for
\emph{each}~$\langle \rho, {\cal AC}, {\cal BC} \rangle \in {\cal C}$ there
is $\emptyset \in \mathcal{BC}$ indicating a true refuting \lbeta\hy
assignment for the empty root~$n$.  Intuitively, this establishes that
there is no actual counter-witness, which contradicts that the
corresponding satisfying default~$Z^{\leq t}$ is subset-minimal and
hence indeed a stable default set.

%
%
%
Due to space limitations, we omit a full description of both Parts~(i)
and (ii) together for our algorithm. A major difference of Part~(ii)
is that we need a special function~$\CWc_d({\cal C})$ to establish
that a default~$d$ is \bsat, which is defined with respect to fixed
set~$S$,~c.f., Case~(ii) of Definition~\ref{def:SED}.
Then, $\CWc_d({\cal C})$ additionally adds potential proofs involving
counter-witnesses and \bff models, where $\rho(d) \neq \lgamma$,
but~$\sigma(d) = \lgamma$.

In the following, we state the correctness of the algorithm ${\dpa}$.

\begin{theorem}[$\star$]\label{thm:prim:correctness}
  Given a default theory~$\prog$, algorithm $\dpa$ correctly
  solves \Ext.
\end{theorem}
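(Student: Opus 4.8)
The plan is to prove correctness of $\dpa$ by establishing a single soundness-and-completeness invariant for every table $\tab{t}$ and then reading the answer off the root table with the help of Lemma~\ref{lem:eq_se_sd}. First I would fix the invariant precisely: a tuple $\langle Z, \mathcal{M}, \PPP, \CCC\rangle$ lies in $\tab{t}$ if and only if there is a set $Z^{\leq t}\subseteq\progt{t}$ with $Z = Z^{\leq t}\cap\prog_t$, the family $\mathcal{M}$ is exactly the set of restrictions to $X = \chi(t)\cap\at(\prog)$ of the models of $F_{\leq t} = \bigwedge_{d\in Z^{\leq t}}\gamma(d)$, every witness proof $\langle\sigma,\AAA,\BBB\rangle\in\PPP$ records a consistent guess of the satisfiability states together with, for each default claimed \asat, a required $\lalpha$-assignment in $\AAA$ whose extension $A^{\leq t}$ models $F_{\leq t}\wedge\neg\alpha(d)$, and, for each default claimed \bsat, a refuting $\lbeta$-assignment in $\BBB$ whose extension models $F_{\leq t}\wedge\beta(d)$; finally each counter-witness $\langle\rho,\mathcal{AC},\mathcal{BC}\rangle\in\CCC$ encodes in the same way a strictly smaller candidate set that could violate subset-minimality, where the $\bff$-marked assignments of $\mathcal{BC}$ flag the models of $\bigwedge_{d\in\progt{t},\,\rho^{\leq t}(d)=\lgamma}\gamma(d)$ rather than genuinely refuting ones. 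This invariant is the bridge between the purely local operations of $\PRIM$ and the global notions of satisfiability (\asat, \bsat, \gsat) and of (stable) satisfying default sets from Definition~\ref{def:SED}.

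Second, I would prove this invariant by induction on the post-order of the pretty LTD, one case per node type. The base case is the empty leaf, where Line~\ref{line:leaf} produces exactly the single trivial tuple. For the introduce-default node (Line~\ref{line:def_int}) I would check that branching yields precisely the extensions of child tuples in which the new default is additionally guessed to be \asat or \bsat, through $\sub_{d,\{\lalpha,\lbeta\}}$, or \gsat, through $\sub_{d,\{\lgamma\}}$. For the three label nodes (Lines~\ref{line:label_conc}--\ref{line:label_just}) I would verify that, at the unique point where $d$ and all variables of the relevant formula coexist in a bag, restricting $\mathcal{M}$ to models of $\gamma(d)$, filtering $\AAA$ to models of $\neg\alpha(d)$, and adding models of $\beta(d)$ to $\BBB$ faithfully realize the three satisfiability conditions of Definition~\ref{def:SED}. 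The introduce-variable and remove-variable steps (Lines~\ref{line:atom_int},~\ref{line:atom_rem}) reduce to the routine fact that $\Choose_a$ and $\AS_a$ implement guessing and forgetting the truth value of $a$ across $\mathcal{M}$, $\AAA$ and $\BBB$, and the remove-default step (Line~\ref{line:def_rem}) merely drops $d$ from $Z$ and from the domain of $\sigma$. In every case, soundness and completeness rely on the connectivity condition of tree decompositions, which guarantees that once $a$ or $d$ leaves a bag it never recurs, so no further constraint on it can be lost.

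Third, the root analysis is then immediate. By the invariant for the empty root $n$ (where $\progt{n} = \progtneq{n} = \prog$ and $X = \emptyset$), $\tab{n}$ contains a tuple $\langle\emptyset,\{\emptyset\},\PPP,\CCC\rangle$ with $\PPP\neq\emptyset$ containing some $\langle\cdot,\cdot,\emptyset\rangle$ exactly when $\prog$ has a satisfying default set, and the extra requirement that every $\langle\rho,\mathcal{AC},\mathcal{BC}\rangle\in\CCC$ has $\emptyset\in\mathcal{BC}$ exactly certifies that no strictly smaller satisfying default set exists, i.e.\ enforces condition~(ii) of Definition~\ref{def:SED} and hence stability. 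Thus $\tab{n}$ contains such a tuple iff $\SD(\prog)\neq\emptyset$. Invoking Lemma~\ref{lem:eq_se_sd}, namely $\SBE{\prog} = \bigcup_{S\in\SD(\prog)}\Th{\{\gamma(d)\mid d\in S\}}$, this is equivalent to $\SBE{\prog}\neq\emptyset$; since $W=\emptyset$ is consistent, every stable extension is consistent, so this is precisely a \emph{yes}-instance of \Ext.

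The hard part will be the join node (Line~\ref{line:join}) together with the counter-witness component, which is where essentially all of the technical effort lies. For the join I would have to show that the mixed inner/outer join ${\hat\bowtie}_{\mathcal{M}',\mathcal{M}''}$ intersects witness models while correctly combining the assignment families: in particular a refuting $\lbeta$-assignment must survive in the merged $\PPP$ whenever it refutes in one branch and is merely a model in the other, an asymmetry that distinguishes ${\hat\bowtie}$ from a plain relational join and is exactly what matches the ``there is $B^{\leq t}$ with $F_{\leq t}\wedge\beta(d)$'' semantics across the two subtrees. The counter-witness part is subtler still: I would need to track the $\bff$ marker through $\CWc_d$ and through the three join terms $(\CCC'{\hat\bowtie}\CCC'')\cup(\PPP'{\hat\bowtie}\CCC'')\cup(\CCC'{\hat\bowtie}\PPP'')$ and argue that $\bff\in B$ precisely separates the assignments modelling all \gsat-conclusions below $t$ from genuinely refuting ones, so that $\CCC$ faithfully enumerates every strictly smaller candidate default set relative to $\progt{t}$. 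Proving these two propagation claims---correct retention of refuting $\lbeta$-assignments and correct handling of the $\bff$-marked counter-witnesses under the join---is the crux of the argument; once they are in place, the induction and the root analysis close the proof.
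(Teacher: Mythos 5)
Your proposal is correct and follows essentially the same route as the paper: your iff-invariant is precisely the paper's notion of a \emph{local partial solution} (the projection of a global ``partial solution'' to the bag), your induction over node types corresponds to the paper's separate soundness and completeness propositions, and your root analysis (including the reading of $\emptyset\in\mathcal{BC}$, equivalently $\BF{\mathcal{BC}\,}\neq\mathcal{BC}$, as certifying subset-minimality) matches the paper's root lemma before invoking Lemma~\ref{lem:eq_se_sd}. The paper's own argument is similarly sketchy at the join and counter-witness cases you flag as the crux, so your proposal is at a comparable level of rigor.
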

\begin{proof}[Idea]
  The correctness proof of this algorithm needs to investigate each
  node type separately.  We have to show that a tuple at a node~$t$
  guarantees existence of a stable default set for a sub-theory of theory~$\prog_{\leq t}$, 
  which proves soundness. Conversely, one
  can show that each stable default set is indeed evaluated while
  traversing the pretty LTD, which establishes completeness.
\end{proof}


Next, we establish that we can extend $\dpa$ to enumerate
stable default sets. The algorithm on top of $\dpa$ is
relatively straight forward, which can be found in \shortversion{an extended version}\longversion{the appendix}.
The idea is to compute a first stable default set in linear time,
followed by systematically enumerating subsequent solutions with linear delay.
One can even further extend~$\dpa$ to solve~\AspCount, similar to related work~\cite{FichteEtAl17a} in a
slightly different context.

\begin{theorem}[$\star$]\label{thm:enumsafe}
  Given a default theory~$\prog$, algorithm~$\PRIM$ can be used as
  a preprocessing step to construct tables from which we can
  solve problem~\AspEnum.
\end{theorem}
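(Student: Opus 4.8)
The plan is to run the DP algorithm $\dpa$ once as the preprocessing step and then read off solutions from the resulting tables by a top-down traversal. First I would augment \PRIM so that every tuple it produces at a node~$t$ records \emph{extension pointers} to the originating tuple(s) in the child table(s) from which it was computed, in the sense of~\cite{BliemCharwatHecher16b}. Since each tuple is created from a constant number of child tuples, this bookkeeping does not change the asymptotic cost, so the preprocessing stays linear in $|\prog|$ and triple exponential in the treewidth. This turns the tables into a layered DAG connecting tuples across adjacent nodes of the pretty LTD.

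Next I would make precise the correspondence between stable default sets and the tables. Recall the acceptance condition: $\prog$ has a stable default set iff $\tab{n}$ for the empty root~$n$ contains a tuple $\langle \emptyset, \{\emptyset\}, \PPP, \CCC\rangle$ with $\langle\cdot,\cdot,\emptyset\rangle \in \PPP$ and $\emptyset \in \mathcal{BC}$ for every $\langle\rho,\mathcal{AC},\mathcal{BC}\rangle\in\CCC$. Using the invariants underlying Theorem~\ref{thm:prim:correctness}, I would show that chasing the pointers from such an accepting root tuple down to the leaves yields a \emph{certificate}: a selection of one tuple per node, consistent at \join nodes (where both children must agree on $Z$ and $\MMM$), whose accumulated witness set $Z^{\leq n}$ is exactly a stable default set $S$ of $\prog$. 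The soundness half of Theorem~\ref{thm:prim:correctness} gives that every certificate yields an $S \in \SD(\prog)$, and the completeness half gives that every $S\in\SD(\prog)$ arises from at least one certificate.

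The enumeration algorithm then performs a depth-first search over these certificates: starting from an accepting root tuple it descends the tree, at each node selecting a consistent originating tuple via the pointers, and upon reaching the leaves outputs the assembled witness set $Z^{\leq n}$; backtracking to the most recent open choice produces the next solution. To obtain \emph{linear} delay I would add a second, top-down marking pass to the preprocessing that deletes every tuple not reachable, via reverse pointers, from an accepting root tuple. After this purge every partial descent can be completed to a solution, so each DFS branch terminates in a fresh output after $O(|\prog|)$ work, since the pretty LTD has $O(|\prog|)$ nodes for fixed treewidth. This matches the Delay-FPT guarantee of~\cite{CreignouEtAl17a}.

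The main obstacle is duplicate-freeness. A single stable default set $S$ may a priori be reachable through several distinct pointer paths, because the proof components $\PPP$ and $\CCC$ (certifying satisfiability and subset-minimality, respectively) can differ along a path while the witness components $(Z,\MMM)$ coincide. I would resolve this by noting that at every node $t$ the pair $(Z_t,\MMM_t)$ is \emph{functionally determined} by $S$: the witness set equals $S\cap\prog_t$, and the witness-model set equals the set of bag-projections of models of $\bigwedge_{d\in Z^{\leq t}}\gamma(d)$. Hence each $S$ corresponds to a unique witness trace $(Z_t,\MMM_t)_{t}$, so the DFS can be organised to branch only on the $Z$-choices at the introduce- and remove-default nodes, collapsing tuples that agree on $(Z,\MMM)$. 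Verifying that this collapse remains compatible with the minimality bookkeeping carried by $\CCC$ — so that precisely the subset-minimal satisfying default sets survive and each witness trace is emitted exactly once — is the delicate point where the bulk of the argument lies.
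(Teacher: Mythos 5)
Your overall architecture is the same as the paper's: run $\dpa$ once as preprocessing, record for every tuple its originating tuples (extension pointers), and enumerate by traversing that pointer structure top-down, arguing linear delay from the fact that surviving partial descents always complete to solutions. The paper realizes the traversal with a successor-based algorithm~${\nxt}_{\prec}$ driven by a total order~$\prec$ on originating-tuple sets (compute the $\prec$-smallest solution, then repeatedly compute the ``next'' one until undefined), whereas you use DFS with backtracking plus a reachability purge; these are interchangeable and both give the Delay-FPT bound.

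The genuine gap is in your duplicate-freeness argument, which you yourself flag as the unresolved ``delicate point.'' The worry you raise --- that one stable default set~$S$ might be reachable along several pointer paths whose tuples agree on $(Z,\MMM)$ but differ in $\PPP$ or~$\CCC$ --- does not actually arise, and your proposed fix (collapsing tuples that agree on $(Z,\MMM)$ and branching only on $Z$-choices) is both unnecessary and unsound as stated. The paper's Observation~\ref{lem:funcdep} establishes that \emph{all} components of a partial solution, i.e.\ $\MMM$, $\PPP$ \emph{and} $\CCC$, are functionally dependent on the witness set, because each is defined as the largest set satisfying its conditions once $Z$ is fixed; consequently (Lemma~\ref{lem:charset}) each stable default set~$S$ determines exactly one tuple per node, namely the projection of the unique partial solution for~$S\cap\progt{t}$, so full-tuple traces are already in one-to-one correspondence with stable default sets and no collapsing is needed. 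Two tuples that agree on $(Z,\MMM)$ but carry different $\PPP,\CCC$ are projections of partial solutions for \emph{different} below-$t$ restrictions and hence lie on traces of different solutions. Merging them, as you propose, would allow the DFS to mix pointer chains: ascend through a parent tuple whose $\PPP,\CCC$ were computed from one child's certificate while descending through the pointers of the other child; the assembled set then need not be a stable default set at all, since the minimality bookkeeping in~$\CCC$ of the upper part no longer refers to the lower part actually chosen. The correct (and simpler) resolution is the paper's: enumerate pointer-consistent traces of \emph{full} tuples and output their accumulated witness sets $Z^{\leq n}$; soundness, completeness and uniqueness then follow from Observation~\ref{lem:funcdep}, Lemma~\ref{lem:charset}, and Propositions~\ref{thm:lineardelay} and~\ref{prop:completeness2}.
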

\begin{proof}[Idea]
The correctness proof requires to extend the previous
results to establish a one-to-one correspondence when traversing the
tree of the TD and such that we can reconstruct each solution as well as we do not
get duplicates. The proof proceeds similar to Theorem~\ref{thm:prim:correctness}.
\end{proof}

The following theorem states that we obtain threefold exponential
runtime in the treewidth.

\begin{theorem}[$\star$]\label{thm:prim:runtime}
  Algorithm $\dpa$ runs in
  time \hbox{$\bigO{{2^{{2^{2^{k + 4}}}}}\cdot \CCard{S(\prog)}}$}
  for a given default theory~$\prog$, where $k\eqdef\tw{S(\prog)}$ is the treewidth of semi-primal
  graph~$S(\prog)$.
\end{theorem}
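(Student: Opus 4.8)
The plan is to bound the total running time as the product of the number of nodes of the pretty LTD and the maximal cost that $\PRIM$ incurs at a single node, and then to argue that the first factor is linear in $\CCard{S(\prog)}$ while the second is triple exponential in $k$. Since $\dpa$ (Listing~\ref{fig:dpontd}) visits every node exactly once in post-order and invokes $\PRIM$ there, and since the pretty LTD obtained from a nice TD has only $\bigO{\CCard{S(\prog)}}$ nodes — the construction duplicates each original node at most once per label and there are only $\bigO{\Card{\prog}}$ labels in $\{\alpha,\beta,\gamma\}\times\prog$ — it suffices to show that $\PRIM$ runs in time $\bigO{2^{2^{2^{k+4}}}}$ at each node, with all computation elsewhere (computing a width-$k$ pretty LTD) absorbed into this bound.

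First I would bound the size of a single table $\tab{t}$. Fix $t$; its bag satisfies $\Card{\chi(t)}\le k+1$, and I split it into the variables $X=\chi(t)\cap\at(\prog)$ and the bag-defaults $\prog_t$, each of size at most $k+1$. A tuple has the shape $\langle Z,\mathcal{M},\mathcal{P},\mathcal{C}\rangle$, so I count the possible values of each coordinate. There are at most $2^{k+1}$ witness sets $Z\subseteq\prog_t$ and at most $2^{2^{k+1}}$ witness-model families $\mathcal{M}\subseteq 2^{X}$. A single witness-proof entry $\langle\sigma,\mathcal{A},\mathcal{B}\rangle$ ranges over $3^{k+1}$ states functions and two subsets of $2^{X}$, so there are at most $3^{k+1}\cdot 2^{2^{k+1}}\cdot 2^{2^{k+1}}\le 2^{2^{k+3}}$ of them, whence at most $2^{2^{2^{k+3}}}$ distinct sets $\mathcal{P}$. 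The counter-witnesses are counted identically except that each refuting $\lbeta$-assignment now lives in $2^{X\cup\{\bff\}}$ due to the marker~$\bff$; this single extra element only raises the inner count to $2^{2^{k+3}}$, again giving at most $2^{2^{2^{k+3}}}$ sets $\mathcal{C}$. Taking the product of the four factors, the number of distinct tuples, and hence $\Card{\tab{t}}$, is triple exponential, bounded by $2^{2^{2^{k+4}}}$, where the step from the exponent $k+3$ to $k+4$ absorbs the product of the two triple-exponential contributions together with the negligible $Z$- and $\mathcal{M}$-factors (using $2\cdot 2^{2^{k+3}}=2^{1+2^{k+3}}\le 2^{2^{k+4}}$).

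Next I would bound the work done at each node type of Listing~\ref{fig:prim}. For $\leaf$, $\intr$, $\rem$, and $\labl$ nodes the algorithm scans the single child table once and applies one of the auxiliary operators ($\sub_{d,\mathcal{S}}$, $\GS_d$, $\GSA_d$, $\GSB_d$, $\Choose_a$, $\DS_d$, $\AS_a$, $\CWc_d$) to each tuple; each such operator runs in time polynomial in the (double-exponential) encoding size of a tuple, so the cost is polynomial in $\Card{\tab{t'}}$. The only genuinely binary step is the $\join$ node (Line~\ref{line:join}), where one iterates over all pairs of tuples from the two child tables agreeing on $Z$ and forms the $\hat{\bowtie}$-combinations, costing a quadratic number of tuple pairs times a polynomial per pair. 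In every case the per-node cost is polynomial in the table size and therefore bounded by a triple exponential in $k$; multiplying by the $\bigO{\CCard{S(\prog)}}$ nodes yields the claimed bound $\bigO{2^{2^{2^{k+4}}}\cdot\CCard{S(\prog)}}$.

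The main obstacle is the exact accounting of the exponent tower so that every overhead fits within the single additive slack $+4$. The delicate points are \emph{(i)}~the quadratic blow-up at $\join$ nodes, which squares the table size and must be reabsorbed into the top-level exponent by the tower inequalities $2\cdot 2^{2^{k+3}}\le 2^{2^{k+4}}$ and the analogous estimate one level higher, and \emph{(ii)}~the sub-dominant factors — the extra marker element in $2^{X\cup\{\bff\}}$ and the $3^{k+1}$ many states functions $\sigma,\rho$ — which one checks are dominated by the $2^{2^{k+1}}$-type terms and hence vanish into the same slack. Once these inequalities are verified for all relevant ranges of $k$, the soundness of the count follows directly from the fixed tuple shape introduced in Section~\ref{sec:algo:dp}; correctness of the output is orthogonal and already furnished by Theorem~\ref{thm:prim:correctness}, so combining the per-node bound with the linear node count completes the argument.
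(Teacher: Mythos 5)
Your proposal is correct and follows essentially the same route as the paper: bound the size of each table $\tab{t}$ by counting the possible values of $Z$, $\mathcal{M}$, $\mathcal{P}$, and $\mathcal{C}$ (the paper's Proposition~\ref{prop:prim:space} performs the same count, arriving at $2^{k+1}\cdot2^{2^{k+1}}\cdot2^{2\cdot(3^{k+1}\cdot2^{2^{k+2}})}$), observe that the pretty LTD has linearly many nodes, and multiply to get $\bigO{2^{2^{2^{k+4}}}\cdot\CCard{S(\prog)}}$. Your treatment is in fact slightly more explicit than the paper's on two points the paper glosses over---the per-node computation cost (in particular the quadratic pairing at $\join$ nodes) and the $\bff$-marker in the counter-witness count---but these refinements are absorbed by the same tower inequalities, so the argument is the same.
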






\section{Conclusion}\label{sec:conclusions}
In this paper, we established algorithms that
operate on tree decompositions of the semi-primal graph of a given
default theory. Our algorithms can be used to decide whether the
default theory has a stable extension or to enumerate all
stable default sets. The algorithms assume small treewidth and run in linear time and
with linear delay, respectively. Even though already linear time results 
for checking the existence of a stable extension are known, 
we are able to establish runtime that is only triple exponential in the 
treewidth of the semi-primal graph. 

In order to simplify the presentation, we mainly
covered the semi-primal graph. However, we believe that our algorithms can be extended to tree decompositions
of the incidence graph. Then we need additional states to
handle the cases where prerequisite, justification, and conclusion do not occur
together in one bag. Consequently, such an algorithm will likely be
very complex. Further, we also believe that our algorithm can be extended to disjunctive defaults~\cite{GeLiPrTr1991}, 
where we have to guess which of the conclusion parts is to 
apply.
An interesting research question is whether we can improve our runtime
bounds. 
%
Still it might be worth implementing our algorithms to
enumerate stable default sets for DL, as previous work
showed that a relatively bad worst-case runtime may anyways lead to
practical useful results~\cite{CharwatWoltran16a}.



\shortversion{\bibliographystyle{splncs}}
\longversion{\bibliographystyle{abbrv}}
\bibliography{references}

\begin{thebibliography}{10}

\bibitem{AbiteboulHullVianu95}
S.~Abiteboul, R.~Hull, and V.~Vianu.
\newblock {\em Foundations of Databases: The Logical Level}.
\newblock Addison-Wesley, Boston, MA, USA, 1st edition, 1995.

\bibitem{BliemCharwatHecher16b}
B.~Bliem, G.~Charwat, M.~Hecher, and S.~Woltran.
\newblock {D-FLAT\^{}2}: Subset minimization in dynamic programming on tree
  decompositions made easy.
\newblock {\em FI}, 147:27--34, 2016.

\bibitem{BodlaenderKoster08}
H.~Bodlaender and A.~M. C.~A. Koster.
\newblock Combinatorial optimization on graphs of bounded treewidth.
\newblock {\em The Computer Journal}, 51(3):255--269, 2008.

\bibitem{CharwatWoltran16a}
G.~Charwat and S.~Woltran.
\newblock Dynamic programming-based {QBF} solving.
\newblock In {\em Proc. of the 4th Intl. Workshop on Quantified Boolean
  Formulas (QBF'16)}, pages 27--40, 2016.

\bibitem{Courcelle90}
B.~Courcelle.
\newblock Graph rewriting: An algebraic and logic approach.
\newblock In J.~van Leeuwen, editor, {\em Handbook of theoretical computer
  science, Vol.\ B}, volume Formal Models and Semantics, pages 193--242.
  Elsevier Science Publishers, North-Holland, 1990.

\bibitem{CreignouEtAl17a}
N.~Creignou, A.~Meier, J.-S. M{\"u}ller, J.~Schmidt, and H.~Vollmer.
\newblock Paradigms for parameterized enumeration.
\newblock {\em Th. Comput. Syst.}, 60(4):737--758, 2017.

\bibitem{CyganEtAl15}
M.~Cygan, F.~V. Fomin, L.~Kowalik, D.~Lokshtanov, D.~Marx, M.~Pilipczuk, and
  S.~Saurabh.
\newblock {\em Parameterized Algorithms}.
\newblock Springer, 2015.

\bibitem{DowneyFellows13}
R.~Downey and M.~Fellows.
\newblock {\em Fundamentals of Parameterized Complexity}.
\newblock 2013.

\bibitem{FichteEtAl17a}
J.~K. Fichte, M.~Hecher, M.~Morak, and S.~Woltran.
\newblock Answer set solving with bounded treewidth revisited.
\newblock In {\em Proc. of the 14th Intl. Conference on LPNMR}, 2017.

\bibitem{FichteEtAl17b}
J.~K. Fichte, M.~Hecher, M.~Morak, and S.~Woltran.
\newblock {DynASP2.5}: Dynamic programming on tree decompositions in action.
\newblock In {\em Proc. of the 12th IPEC}, 2017.

\bibitem{FichteMeierSchindler16b}
J.~K. Fichte, A.~Meier, and I.~Schindler.
\newblock Strong backdoors for default logic.
\newblock In {\em Proc. of the 19th Intl. Conference on Theory and Applications
  of SAT (SAT'16)}, 2016.

\bibitem{flgr06}
J.~Flum and M.~Grohe.
\newblock {\em Parameterized Complexity Theory}, volume XIV of {\em Th. Comput.
  Sc.}
\newblock Springer, Berlin, 2006.

\bibitem{GeLiPrTr1991}
M.~Gelfond, V.~Lifschitz, H.~Przymusinska, and M.~Truszczynski.
\newblock Disjunctive defaults.
\newblock pages 230--237. Morgan Kaufmann, 1991.

\bibitem{gottlob92}
G.~Gottlob.
\newblock Complexity results for nonmonotonic logics.
\newblock {\em JLC}, 2(3):397--425, 1992.

\bibitem{KneisLanger09a}
J.~Kneis and A.~Langer.
\newblock A practical approach to {Courcelle}'s theorem.
\newblock {\em Electronic Notes in Theoretical Computer Science}, 251:65--81,
  2009.

\bibitem{matr93}
V.~W. Marek and M.~Truszczy{\'n}ski.
\newblock {\em Nonmonotonic Logic: context-dependent reasoning}.
\newblock Artificial Intelligence. Springer, Berlin, Germany, 1993.

\bibitem{msstv15}
A.~Meier, I.~Schindler, J.~Schmidt, M.~Thomas, and H.~Vollmer.
\newblock On the parameterized complexity of non-monotonic logics.
\newblock {\em Archive for Math. Logic}, 54(5-6):685--710, 2015.

\bibitem{Papadimitriou94}
C.~H. Papadimitriou.
\newblock {\em Computational Complexity}.
\newblock Addison-Wesley, 1994.

\bibitem{Reiter80a}
R.~Reiter.
\newblock A logic for default reasoning.
\newblock {\em AIJ}, 13:81--132, Apr. 1980.

\end{thebibliography}

\longversion{
\newpage
\appendix

\section{Equivalence of stable default sets and stable extensions}
We provide in the following insights on the correspondence between sets
$\SD(D)$ and~$\SBE{D}$ for a given default theory~$D$.

\begin{restatelemma}[lem:eq_se_sd]
\begin{lemma}
  Let $D$ be a default theory. 
  Then, \[\SBE{D} = \bigcup_{S \in \SD(D)} \Th{\SB \gamma(d)  \SM d \in S \SE}.\]
  In particular, $S\in \SD(D)$ is a generating default of
  extension~$\Th{\SB \gamma(d) \SM d \in S \SE}$.
\end{lemma}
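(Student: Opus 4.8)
The plan is to prove the two inclusions separately, with the iterative \emph{stage construction} of $\Gamma(E)$ as the central tool. Because the test $\lnot\beta(d)\notin E$ in Definition~\ref{def:SE}(iii) does not depend on $\Gamma(E)$ itself, the operator $X\mapsto \Th{\SB\gamma(d)\SM \alpha(d)\in X,\, \lnot\beta(d)\notin E\SE}$ is monotone, so $\Gamma(E)$ is its least fixed point and can be built as an increasing union of stages. Throughout I would use the following dictionary, valid whenever $E\eqdef\Th{E(S)}$ is consistent: $d$ is \asat in $S$ iff $\alpha(d)\notin E$; $d$ is \bsat in $S$ iff $\lnot\beta(d)\in E$; and $d$ lies in the set of generating defaults of $E$ iff $\alpha(d)\in E$ and $\lnot\beta(d)\notin E$, i.e.\ iff $d$ is neither \asat nor \bsat in $S$.

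For the inclusion $\supseteq$, I would take a stable default set $S\in\SD(D)$ and set $E\eqdef\Th{E(S)}$. First I would establish consistency: if $E(S)$ were unsatisfiable then every default would be \bsat in $S$, so $S'\eqdef\emptyset$ would witness a violation of minimality (condition~(ii) of Definition~\ref{def:SED}) unless $S=\emptyset$, but $E(\emptyset)$ is satisfiable; hence $E(S)$ is satisfiable and $E$ is consistent. Next I would show $E=\Gamma(E)$. Writing $S''\eqdef\SB d\in D\SM \lnot\beta(d)\notin E,\, \alpha(d)\in\Gamma(E)\SE$ for the defaults that actually fire, one checks $\Gamma(E)=\Th{E(S'')}$ and, via the dictionary above, that $S''\subseteq S$. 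The crucial step is that $S''$ itself satisfies the closure property implicit in condition~(ii): for every $d$, either $d\in S''$, or $d$ is \bsat in $S$, or $\alpha(d)\notin\Gamma(E)=\Th{E(S'')}$ so that $d$ is \asat in $S''$. Minimality of $S$ then forbids $S''\subsetneq S$, forcing $S''=S$ and hence $E=\Th{E(S'')}=\Th{E(S)}=\Gamma(E)$. The same identity $S''=S$, combined with the generating-default dictionary entry, shows that $S$ is exactly the set of generating defaults of $E$, giving the ``in particular'' claim.

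For the inclusion $\subseteq$, I would take a consistent stable extension $E$ with generating-default set $G$. By Reiter's characterization (the stage construction applied to $E=\Gamma(E)$) we have $E=\Th{E(G)}$, which already places $E$ in the right-hand union once $G$ is shown to be a stable default set. That $G$ is a \emph{satisfying} default set is immediate from the dictionary: every $d\notin G$ fails $\alpha(d)\in E$ or $\lnot\beta(d)\notin E$, hence is \asat or \bsat in $G$. For minimality I would assume a candidate $S'\subsetneq G$ meeting the closure property of condition~(ii); this property makes $\Th{E(S')}$ deductively closed and closed under the applicable defaults, hence a fixed point of the monotone operator, so $\Gamma(E)\subseteq\Th{E(S')}\subseteq E$ and thus $\Th{E(S')}=E$. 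But then every $d\in G$ satisfies $\alpha(d)\in E=\Th{E(S')}$ and $\lnot\beta(d)\notin E$, so the closure property forces $d\in S'$, giving $G\subseteq S'$ and contradicting $S'\subsetneq G$.

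I expect the main obstacle to be the asymmetry in condition~(ii), where \bsat is evaluated against the full set $S$ while \asat and \gsat are evaluated against the candidate $S'$. Matching this asymmetry to the stage construction is precisely what makes the argument go through: justification-consistency ($\lnot\beta(d)\notin E$) is a global test against the final extension, whereas prerequisite-derivation and conclusion-collection are grounded and iterative, so they correspond to the ``moving'' set $S'$. Getting the two minimality arguments to line up with the least-fixed-point property---the forced equality $S''=S$ in one direction and the forced containment $G\subseteq S'$ in the other---is the delicate part; the consistency bookkeeping and the routine verification that $\Th{E(S'')}=\Gamma(E)$ are comparatively mechanical.
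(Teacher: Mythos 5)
Your proposal is correct and takes essentially the same route as the paper's own proof: both prove the two inclusions by identifying the stable default set $S$ with the generating defaults of $E=\Th{\SB\gamma(d)\SM d\in S\SE}$ and then playing the leastness of $\Gamma(E)$ against the subset-minimality condition of Definition~\ref{def:SED}, deriving a contradiction from any smaller witness on either side. Your write-up is in fact more careful than the paper's sketch---the explicit consistency argument, the \asat/\bsat ``dictionary'', and the verification that $\Th{\SB\gamma(d)\SM d\in S'\SE}$ satisfies the closure conditions are all left implicit there---but the underlying argument is the same.
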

\end{restatelemma}
\begin{proof}[Sketch]
Consider an arbitrary default theory~$D$.\\
``$\Longrightarrow$'': Take any extension~$E\in\SBE{D}$. Observe that~$E$ is closed unter~$\Th{\cdot}$, i.e., $E=\Th{E}$. From~$E$, we now construct set~$S\eqdef \{d\in D \mid \gamma(d)\in E, \alpha(d) \in E, \neg\beta(d)\not\in E\}$.
Assume towards a contradiction that~$S$ is not a stable default set, so it either dissatisfies at least one default, which immediately leads to a contradiction
since~$E$ is a stable extension, or~$S$ is not subset-minimal. If~$S$ is not subset-minimal, there is a smaller set~$S'\subsetneq S$,
which is a satisfying default set. Observe that there is at least one~$d\in S\setminus S'$ where~$\gamma(d)\not\in \Th{\SB \gamma(d')  \SM d' \in S' \SE}$,
since otherwise~$S'$ can not be a satisfying default set due to $\Th{\SB \gamma(d) \SM d \in S \SE} = \Th{\SB \gamma(d')  \SM d' \in S' \SE}$ and $S'\subsetneq S$, which results in at least one default in $S\setminus S'$ that is dissatisfied (by construction of~$S$, c.f.\ Definition~\ref{def:SED}(i)). As a result, there is a smaller extension~$E'\subsetneq E$, where~$E' \eqdef \Th{\SB \gamma(d')  \SM d' \in S' \SE}$, which contradicts, once again, that~$E$ is a stable extension.\\
``$\Longleftarrow$'': Assume any stable default set~$S$. We define~$E \eqdef \Th{\SB \gamma(d)  \SM d \in S \SE}$.
Assume towards a contradiction that~$E$ is not stable. Obviously, by construction of~$E$, $\Gamma(E)\eqdef E$
satisfies Definition~\ref{def:SE}(i)-(iii). It remains to show, that, indeed there is no smaller~$\Gamma'(E)\subsetneq\Gamma(E)$
which also satisfies the three conditions. Assume towards a contradiction, that such a set~$\Gamma'(E)$ with $\Gamma'(E) = \Th{ \Gamma'(E)}$ indeed exists.
Then there is at least one default~$d\in D$, such that~$\gamma(d)\in\Gamma(E)\setminus\Gamma'(E)$. As a result, by construction of~$E$, $S$ can not be stable default set, which yields a contradiction.

\end{proof}

\section{Auxiliary Definitions of Table algorithm~\PRIM}

We provide formal definitions for abbreviations that are used in
algorithm~\PRIM, which we explained only verbally in
Section~\ref{sec:algo:dp}. We abbreviate by~$\BF{\mathcal{S}\,}$ the
set $\{{S} \mid {S}\in {\mathcal{S}}, \bff\in {S}\}$ for
set~${\cal S}$ of sets.  Further, we define the
abbreviation~$\MAICRNCR{{\cal S}}{e}$ for set~${\cal S}$ of sets as
follows: $\MAICRNCR{\emptyset}{e} \eqdef \{\emptyset\}$ and
$\MAICRNCR{{\cal S}}{e} \eqdef \bigcup_{S\in{\cal S}, S' \in
  \MAICRNCR{({\cal S}\setminus S)}{e}} \{S' \cup \{\MAIR{S}{e}\}, S'
\cup \{S\}\}$.

\begin{align}
&\cp_d({\cal P}, \pi) &&\eqdef \{\langle \sigma, {\cal A}, {\cal B}\rangle \mid \langle \sigma, {\cal A}, {\cal B} \rangle \in {\cal P}, \sigma(d) \neq \pi\}\\
&\sub_{d,{\cal S}}({\cal P}, {\cal M}) &&\eqdef \{
	\langle \MAIR{\sigma}{d\mapsto\pi}, {\cal A}, \MAIR{\cal M}{\bff}\cup{\cal B} \rangle
	\mid \langle \sigma, {\cal A}, {\cal B}\rangle \in {\cal P}, \pi\in{\cal S}\}\\
&\sub_{d,{\cal S}}(\PPP) &&\eqdef \sub_{d,{\cal S}}(\PPP, \emptyset)\\
&\GS_d({\cal P}) &&\eqdef \{\langle \sigma, {\cal A}, \Mod_{\cal B}(\gamma(d)) \rangle \mid \langle \sigma, {\cal A}, {\cal B} \rangle \in {\cal P}, \sigma(d) = \lgamma, \\
& &&\phantom{{}\eqdef}{\cal A} =\Mod_{\cal A}(\gamma(d))\}\notag\\
&\CWc_d({\cal C}) &&\eqdef \GS_{d}({\cal C}) \cup \{\langle \rho, {\cal AC}, \BF{\cal BC\,} \cup \Mod_{\cal BC}(\gamma(d)) \rangle\\
& &&\phantom{{}\eqdef}\mid \langle \rho, {\cal AC}, {\cal BC} \rangle \in {\cal C}, \rho(d)\neq \lgamma\}\notag\\
&\GSA_d({\cal P}, {\cal M}) &&\eqdef \cp_d({\cal P}, \lalpha) \cup \{\langle \sigma, {\cal A} \cup {\cal A'}, \BBB \rangle \mid {\langle \sigma, {\cal A}, \BBB \rangle \in {\cal P},}\\
& &&\phantom{{}\eqdef}\sigma(d)=\lalpha, {\cal A'} \in \Mod_{{\cal M} \cup \MAZR{{\cal B}}{\bff}}({\neg{\alpha(d)}})\}\notag\\
&\GSAC_d	({\cal C}) &&\eqdef \GSA_d({\cal C}, \emptyset)\\
&\GSB_d(\PPP, {\cal M}) &&\eqdef \cp_d(\PPP, \lbeta) \cup \{\langle \sigma, \AAA, \BBB \cup \MAZR{[{\Mod_{\cal M}(\beta(d))}]}{\bff}) \rangle\\
& &&\phantom{{}\eqdef}\mid\langle \sigma, \AAA, \BBB \rangle \in \PPP, \sigma(d) = \lbeta \}\notag\\
&\Choose_a(\PPP) &&\eqdef \{\langle \sigma, \AAA', \BBB \cup \MAIRCR{\BBB}{a} \rangle \mid\AAA' \in \MAICRNCR\AAA{a}, \langle \sigma, \AAA, \BBB \rangle \in \PPP\}\\
&\DS_d(\PPP) &&\eqdef \{\langle \sigma \setminus \{d\mapsto \lalpha, d\mapsto\lbeta, d\mapsto\lgamma\}, \AAA, \BBB \rangle\hspace{-0.1em}\mid\hspace{-0.1em} \langle \sigma, \AAA, \BBB \rangle \in \PPP\}\hspace{-5em}\\
&\AS_a(\PPP) &&\eqdef \{\langle \sigma, \MAZR{\AAA}{a}, \MAZR\BBB{a} \rangle \mid \langle \sigma, \AAA, \BBB \rangle \in \PPP\}\\
&{\cal M'}\bowtie{\cal M''} &&\eqdef \{ M' \cup M'' \mid M' \in {\cal M'}, M'' \in {\cal M''}, M' \cap \MAIR{[\chi_t]}{\bff} =\\
& &&\phantom{{}\eqdef} M'' \cap \MAIR{[\chi_t]}{\bff} \}\notag\\
&{\cal B'}\bowtie_{{\cal M'},{\cal M''}}{\cal B''} &&\eqdef [{\cal B'} \bowtie ({\cal B''} \cup {{\cal M''}})] \cup [({\cal B'} \cup {{\cal M'}}) \bowtie {\cal B''}]\\
&{\cal P'} {\hat{\bowtie}}_{{\cal M'}, {\cal M''}} {\cal P''} &&\eqdef \{\langle \sigma, {\cal AR}, {\cal B'} \bowtie_{{\cal M'}, {\cal M''}} {\cal B''}\rangle \mid \langle \sigma, \AAA', {\cal B'} \rangle \in {\cal P'},\\
& &&\phantom{{}\eqdef}\langle \sigma, \AAA'', {\cal B''} \rangle \in {\cal P''}, {\cal AR} = {\cal A'} {{\bowtie}}(\AAA''\cup{{\cal M}'' \cup \MAZR{[{\cal B''}]}{\bff}}),\notag\\
& &&\phantom{{}\eqdef}\AAA' \cup \AAA'' \subseteq {\cal AR} \}\,\cup
\{\langle \sigma, {\cal RA}, {\cal B'} \bowtie_{{\cal M'}, {\cal M''}} {\cal B''}\rangle\notag\\
& &&\phantom{{}\eqdef}\mid \langle \sigma, \AAA', {\cal B'} \rangle \in {\cal P'}, \langle \sigma, \AAA'', {\cal B''} \rangle \in {\cal P''},\notag\\
& &&\phantom{{}\eqdef}{\cal RA} = {\cal A''} {{\bowtie}} (\AAA'\cup{{\cal M}' \cup \MAZR{[{\cal B'}]}{\bff}}), \AAA' \cup \AAA'' \subseteq {\cal AR} \}
\end{align}

\section{Proof of Correctness}

Before we provide more insights on the correctness of our algorithms,
we require some missing auxiliary definitions.

\paragraph{Bag-default parts.}
Consider an LTD~$(T,\chi,\delta)$ of the graph~$S(\prog)$ of a given default theory~$\prog$. The
set~$\atto \eqdef \SB v \SM v \in \at(\prog) \cap \chi(t'), t' \in
\post(T,t) \SE$ is called \emph{variables below~$t$}.
Further, the \emph{bag-default parts} for prerequisite, justification, or
conclusion $f \in \{\alpha, \beta, \gamma\}$ contain
$f_t\eqdef\SB f(d) \SM (f,d) \in \delta(t) \SE$.
We naturally extend the definition of the bag-default parts to the respective default
parts below~$t$ (analogously to our definitions for default theory below~$t$), i.e., 
we also use~$\alphat{t}, \betat{t}$, and~$\gammat{t}$.

Further, we define mapping $\Gamma_t: 2^{\gamma(\progt{t})} \rightarrow 2^{\gammat{t}}$ by  $\Gamma_t[E]\eqdef E \cap \gammat{t}$.
\begin{example}
Recall the LTD of Figure~\ref{fig:running1_prim}.
Observe that $\att{t_6}=\at(\prog)$, $\alphat{t_6}=\{\alpha(d_1)\}$ and $\gammat{t_{16}}=\{\gamma(d_1),\gamma(d_2)\}$.
\end{example}%

We employ the correctness argument using the notions of (i)~\emph{partial solutions}
consisting of \emph{partial extensions} and the notion of
(ii)~\emph{local partial solutions}.

\begin{definition}\label{def:partial-model}
  Let $\prog$ be a default theory, $\calT = (T, \chi, \delta)$ be an LTD of
  the semi-primal graph~$S(\prog)$ of~$\prog$, where
  $T=(N,\cdot,\cdot)$, and $t\in N$ be a node.  Further, let
  $\emptyset \subsetneq \mathcal{B} \subseteq 2^{\atto \cup
    \{\bff\}}$, $\AAA \subseteq 2^{\MAZR{{\cal B}\,}{\bff}}$,
  ${\sigma}: \progt{t} \to \{\lalpha, \lbeta,
  \lgamma\}$, $E\supseteq \gamma(Z)$, where
  $Z\eqdef \sigma^{-1}(\lgamma)$. The
  tuple $({\sigma}, \AAA, \BBB)$ is a \emph{partial extension
   under~$E$ for~$t$} 
if the following conditions hold:
  \begin{enumerate}
  \item ${Z}$ is a set of satisfying defaults of
    $\progtneq{t}\setminus [\{d\in \progtneq{t} \mid \sigma(d) =
    \lbeta, \exists {B}\in {\mathcal{B}}: {B}\models \Gamma_t[E]
    \wedge \beta(d)\}]$, 
  \item $\AAA$ is a set such that:
    \begin{enumerate}
    \item $\Card{\AAA} \leq \Card{\sigma^{-1}(\lalpha)} - 1$,
    \item
      $\exists d\in \progt{t}: \sigma(d) = \lalpha,
      \alpha(d)\in\alphat{t}, A \models \Gamma_t[\gamma(Z)] \wedge
      \neg\alpha(d)$ \quad for every
      $A\in \AAA$, 
    \item $\exists A\in \AAA: A\models \Gamma_t[\gamma({Z})] \wedge \neg
      \alpha(d) \Longleftarrow
      {\sigma}(d)=\lalpha$ \qquad\quad for
      every~$d\in\progt{t}$ such that\ $\alpha(d)\in\alphat{t}$; and
    \end{enumerate}
  \item ${\cal B}$ is the largest set such that:
    \begin{enumerate}
    \item $B \models \Gamma_t[\gamma(Z)]$ \qquad\qquad\qquad\qquad\qquad\qquad\qquad\qquad\qquad\, for every $B\in{\cal B}$,
    \item $\exists d\in \progt{t}: \sigma(d) = \lbeta,
      \beta(d)\in\betat{t}, B \models \Gamma_t[E] \wedge
      \beta(d)$ \quad for every $B\in{\cal B}$ where $\bff\not\in B$.
    \end{enumerate}
\end{enumerate}
\end{definition}

\begin{definition}\label{def:partialsol}
  Let $\prog$ be a default theory, $\calT = (T, \chi, \delta)$ where
  $T=(N,\cdot,n)$ be an LTD of $S(\prog)$, and $t\in N$ be a node.  A
  \emph{partial solution for~$t$} is a tuple
  $(Z, {\cal M}, \PPP, {{\cal C}})$ where~
  $Z\subseteq\progt{t}$, and~$\PPP$ is the largest set of tuples
  such that each $(\sigma, \AAA, \BBB) \in \PPP$ is a
  partial extension under~$\gamma(Z)$ with~$\BF{\cal B\,} = \emptyset$
  and~$Z = \sigma^{-1}(\lgamma)$. Moreover, ${\cal C}$ is the largest
  set of tuples such that for each
  $(\rho, {\cal AC}, {\cal BC}) \in {{\cal C}}$, we have that
  $({\rho}, {\cal AC}, {\cal BC})$ is a partial extension
  under~$\gamma(Z)$
  with~$\rho^{-1}(\lgamma)\subsetneq \sigma^{-1}(\lgamma)$.  Finally,
  ${\cal M}\subseteq 2^{\atto}$ is the largest set
  with~$M\models\Gamma_t[\gamma(Z)]$ for each~$M\in{\cal M}$.
\end{definition}

\noindent The following lemma establishes correspondence between
stable default sets and partial solutions.

  \begin{lemma}\label{prop:partialsol_corr}
    Let $\prog$ be a default theory, $\calT = (T, \chi, \delta)$ be an LTD of
    the semi-primal graph~$S(\prog)$, where $T=(\cdot,\cdot,n)$,
    and $\chi(n) = \emptyset$.  Then, there exists a stable default
    set~$Z'$ for~$\prog$ if and only if there exists a partial
    solution $\tabval = (Z', {\cal M}, \PPP, {\cal C})$ for
    root~$n$ with at least one
    tuple~$\langle \sigma, \AAA, \BBB \rangle \in \PPP$
    where ${\cal B} = \emptyset$, 
    where~${\cal C}$ is of the following form: For each
    $(\rho, {\cal AC}, {\cal BC}) \in{\cal C}$, 
    ${\cal BC}_{\bfff}\neq{\cal BC}$. 
  \end{lemma}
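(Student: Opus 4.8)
The plan is to specialise Definitions~\ref{def:partial-model} and~\ref{def:partialsol} to the empty root~$n$, where $\progt{n} = \progtneq{n} = \prog$, $\att{n} = \at(\prog)$, $\gammat{n} = \gamma(\prog)$, and hence $\Gamma_n[E] = E$ for every $E \subseteq \gamma(\prog)$. The first thing I would record is a \emph{dictionary} between partial extensions and the satisfiability states of Definition~\ref{def:SED}, paying attention to one twist: in a partial extension under an external reference~$E$, the condition for $\sigma(d) = \lalpha$ is tested against the \emph{internal} conclusions $\gamma(\sigma^{-1}(\lgamma))$, whereas the condition for $\sigma(d) = \lbeta$ is tested against~$E$. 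For the witness proofs~$\PPP$ one has $E = \gamma(Z')$ and internal set~$Z'$, so a tuple $(\sigma, \AAA, \BBB)$ with $\BBB = \emptyset$ reads off as: every default with $\sigma(d) = \lbeta$ is \bsat in~$Z'$ and every default with $\sigma(d) = \lalpha$ is \asat in~$Z'$, that is, $Z'$ is a satisfying default set. For the counter-witnesses~$\CCC$ one keeps $E = \gamma(Z')$ but allows the internal set $Z_\rho \eqdef \rho^{-1}(\lgamma)$ to be a \emph{proper} subset of~$Z'$, which is exactly the asymmetric pattern of Definition~\ref{def:SED}(ii), with \asat and \gsat taken relative to the smaller set and \bsat relative to~$Z'$.

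For the forward direction I would take a stable default set~$Z'$ and build the witnessing $\PPP$-tuple by reading the dictionary backwards: put $\sigma(d) \eqdef \lgamma$ for $d \in Z'$, and for $d \notin Z'$ choose $\sigma(d) \in \{\lalpha, \lbeta\}$ according to a satisfiability that holds (available since $Z'$ is a satisfying default set); populate~$\AAA$ with a witnessing model of $\gamma(Z') \wedge \neg\alpha(d)$ for each default with $\sigma(d) = \lalpha$; and set $\BBB \eqdef \emptyset$, which is forced by the ``largest set'' requirement because \bsat makes $\gamma(Z') \wedge \beta(d)$ unsatisfiable for every default with $\sigma(d) = \lbeta$. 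Checking the conditions of Definition~\ref{def:partial-model} then places this tuple in~$\PPP$. For the shape of~$\CCC$ I argue by contradiction: any $(\rho, \mathcal{AC}, \mathcal{BC}) \in \CCC$ has $Z_\rho \subsetneq Z'$, and if $\mathcal{BC}_{\bfff} = \mathcal{BC}$, i.e.\ no unmarked (genuinely refuting) assignment occurs, then no default with $\rho(d) = \lbeta$ is refuted against~$\gamma(Z')$; the reduction set of Definition~\ref{def:partial-model}(1) is then empty and the three disjuncts of Definition~\ref{def:SED}(ii) hold for~$Z_\rho$, exhibiting a proper subset that witnesses non-minimality of~$Z'$ and contradicting stability. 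Hence $\mathcal{BC}_{\bfff} \neq \mathcal{BC}$ for every counter-witness.

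For the converse I read the dictionary forwards. The $\PPP$-tuple with $\BBB = \emptyset$ yields Definition~\ref{def:SED}(i) at once. For Definition~\ref{def:SED}(ii) I assume, for contradiction, a proper subset $Z'' \subsetneq Z'$ in which each default is \asat in~$Z''$, \bsat in~$Z'$, or \gsat in~$Z''$, and I construct the matching~$\rho$ with $\rho^{-1}(\lgamma) = Z''$. Since every default with $\rho(d) = \lbeta$ is \bsat in~$Z'$, Definition~\ref{def:partial-model}(3) admits only $\bff$-marked assignments, so $\mathcal{BC}_{\bfff} = \mathcal{BC}$, and the reduction set of~(1) is empty. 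As $\CCC$ is by definition the \emph{largest} set of such counter-witnesses, this~$\rho$ lies in~$\CCC$, contradicting the hypothesis that every element of~$\CCC$ satisfies $\mathcal{BC}_{\bfff} \neq \mathcal{BC}$. Therefore no such~$Z''$ exists, and $Z'$ is a stable default set.

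I expect the main obstacle to be proving that the bookkeeping in Definition~\ref{def:partial-model} faithfully implements the asymmetry of Definition~\ref{def:SED}(ii). Concretely, one must verify that evaluating \bsat against the external reference $E = \gamma(Z')$ instead of the internal $\gamma(Z_\rho)$, together with the marker~$\bff$ separating assignments that merely model $\gamma(Z_\rho)$ from assignments that genuinely refute a justification relative to~$Z'$, and together with the reduction set that excuses the refuted $\lbeta$-defaults, makes ``$\mathcal{BC}$ contains an unmarked assignment'' exactly equivalent to ``$Z_\rho$ fails to witness non-minimality''. Establishing this equivalence, and using the ``largest set'' clauses to ensure that $\PPP$ and $\CCC$ contain \emph{every} witness and counter-witness, is the only non-routine part; the simplifications $\Gamma_n[E] = E$ and the cardinality condition at the empty root are immediate.
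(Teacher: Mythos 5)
Your proposal is correct and follows essentially the same route as the paper's own proof sketch: the forward direction builds the witness tuple $\langle \sigma, \AAA, \emptyset \rangle$ directly from the stable default set and uses minimality together with the largest-set requirements on~$\CCC$ and~${\cal BC}$ to force an unmarked refuting assignment into every counter-witness, and the backward direction inverts both arguments. If anything, you spell out the two contradiction arguments (all-marked counter-witness $\Leftrightarrow$ non-minimality witness of Definition~\ref{def:SED}(ii), with \bsat checked against~$\gamma(Z')$ and \asat/\gsat against the smaller set) that the paper's sketch leaves implicit behind the phrase that the definitions ``guarantee'' the converse.
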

\begin{proof}[Sketch]
  Given a stable default set ${Z'}$ of $\prog$ we construct
  $\tabval = (Z', {\cal M}, \PPP, {\cal C})$ where we generate every potential
  $\sigma: \prog \to  \{\lalpha,\lbeta,\lgamma\}$ such that
  $\sigma(d) = \lgamma$ for~$d\in Z'$ as follows.
  For~$d\in\prog\setminus Z'$, we are allowed to set $\sigma(d)\eqdef \lalpha$ if
  $\gamma(Z') \wedge \neg\alpha(d)$ is satisfiable and
  $\sigma(d)\eqdef\lbeta$ if~$\gamma(Z') \wedge \beta(d)$ is unsatisfiable.
  
  For each of this functions~$\sigma$, we
  require~$\langle \sigma, \AAA, \emptyset \rangle \in \PPP$,
  where
  $\AAA\subseteq 2^{\at(D)}$ is the smallest set with $\Card{\AAA} \leq
  \Card{\sigma^{-1}(\alpha)}-1$ such that for all
  $d\in\sigma^{-1}(\alpha)$ there is at least one $A\in \AAA$
  with $A\models \gamma(Z') \wedge \neg\alpha(d)$.
  
  Moreover, we define 
  set~${\cal M} \eqdef \Mod_{2^{\at(D)}}(\bigwedge_{d\in Z'} \gamma(d))$,
  in order for $\tabval$ to be a partial solution for $n$ (see
  Definition~\ref{def:partialsol}). We construct~${\cal C}$,
  consisting of partial solutions~$(\rho, {\cal AC}, {\cal BC})$ where
  we use every potential state function~$\rho$
  with~$\rho^{-1}(\lgamma) \subsetneq \sigma^{-1}(\lgamma)$.  For
  this, let~$Z\eqdef \rho^{-1}(\lgamma)$.  For the
  defaults~$d$ with~$\rho(d) \neq \lgamma$,~i.e., defaults~$d$ that are \asat
  or \bsat, we also set their
  state~$\rho(d)$ to~$\alpha$ or~$\beta$, respectively (analogous to
  above). 
  Finally, we define
  set~${\cal BC} \eqdef \MAIRCR{[\Mod_{2^{\at(D)}}(\bigwedge_{d\in Z}\gamma(d))}{\bff} \cup [\bigcup_{d:\rho(d)=\lbeta}{\Mod_{2^{\at(D)}}([\bigwedge_{d\in Z'}\gamma(d)] \wedge \beta(d))}]$,
  and~${\cal AC}$ $\subseteq 2^{\at(D)}$ as the smallest set such that $\Card{{\cal AC}} \leq
  \Card{\rho^{-1}(\lalpha)}-1$ and for all
  $d\in\rho^{-1}(\lalpha)$, there is at least one
  $AC\in{\cal AC}$ with $AC \models \gamma(Z) \wedge \neg\alpha(d)$ according to
  Definition~\ref{def:partial-model}.
  
  For the other direction, Definitions~\ref{def:partial-model} and
  \ref{def:partialsol} guarantee that $Z'$ is a stable extension if
  there exists such a partial solution~$\tabval$.
  In consequence, the lemma holds.
\end{proof}

\noindent Next, we require the notion of local partial solutions
corresponding to the tuples obtained in Algorithm~\ref{fig:prim}.

\begin{definition}\label{def:localpartialsolpart}
  Let $\prog$ be a default theory, $\calT = (T, \chi, \delta)$ an LTD of the
  semi-primal graph~$S(\prog)$, where $T=(N,\cdot,n)$, and $t\in N$
  be a node. A tuple $(\sigma, \AAA, \BBB)$ is a \emph{local
    partial solution part} of partial
  solution~$({\hat \sigma}, {\hat \AAA}, {\hat \BBB})$ for $t$
  if
  \begin{enumerate}
  \item
    $\sigma = {\hat \sigma} \cap (\chi(t) \times
    \{\lalpha, \lbeta, \lgamma\})$,
  \item $\AAA = {\hat \AAA}_t$, and
  \item $\BBB = {\hat \BBB}_t$, where
    ${\cal S}_t \eqdef \{S \cap (\chi(t) \cup \{\bff\}) \mid S
    \in{{\cal S}}\}$.
  \end{enumerate}
\end{definition}

\begin{definition}\label{def:localpartialsol}
  Let $\prog$ be a default theory, $\calT = (T, \chi, \delta)$ an LTD of the semi-primal
  graph~$S(\prog)$, where $T=(N,\cdot,n)$, and $t\in N$ be a node. A
  tuple $\tabval = \langle Z, {\cal M}, \PPP, {\cal C} \rangle$ is a
  \emph{local partial solution} for $t$ if there exists a partial
  solution
  ${\hat \tabval} = ({\hat Z}, {\hat {\cal M}}, {\hat \PPP}, {\hat {\cal C}})$
  for~$t$ such that the following conditions hold:
\begin{inparaenum}[(1)]
\item $Z = {\hat Z} \cap 2^{\prog_t}$,
\item ${\cal M} = {\hat {\cal M}}_t$,
\item $\PPP$ is the smallest set containing local partial solution part $(\sigma, \AAA, \BBB)$ for each~$({\hat \sigma}, {\hat \AAA}, {\hat \BBB})\in{\hat\PPP}$, and
\item ${\cal C}$ is the smallest set with local partial solution part $(\rho, {\cal AC}, {\cal BC})\in{\cal C}$ for each~$({\hat \rho}, {\hat {\cal AC}}, {\hat {\cal BC}})\in{\hat{\cal C}}$.

\end{inparaenum}
We denote by ${\hat \tabval}^t$ the local partial solution $\tabval$
for~$t$ given partial solution ${\hat \tabval}$.

\end{definition}

\noindent The following proposition provides justification that it
suffices to store local partial solutions instead of partial solutions
for a node~$t \in N$.

\begin{lemma}\label{prop:partiallocalsol_corr}
  Let $\prog$ be a default theory, $\calT = (T, \chi, \delta)$ an LTD of
  $S(\prog)$, where $T=(N,\cdot,n)$, and $\chi(n) = \emptyset$.  Then,
  there exists a stable default set set for $\prog$ if and only if there
  exists a local partial solution of the
  form~$\langle \emptyset, \{\emptyset\}, \PPP, {\cal C} \rangle$
  for the root~$n \in N$ with at least one tuple of the
  form~$\langle \sigma, \AAA, \emptyset\rangle\in\PPP$. 
  Moreover, for
  each~$\langle \rho, {\cal AC}, {\cal BC} \rangle$ in~${\cal C}$,
  $\BF{\cal BC\,} \neq {\cal BC}$. 
\end{lemma}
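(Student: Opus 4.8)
The plan is to obtain this statement as a corollary of Lemma~\ref{prop:partialsol_corr} together with the projection machinery of Definitions~\ref{def:localpartialsolpart} and~\ref{def:localpartialsol}. Lemma~\ref{prop:partialsol_corr} already characterizes the existence of a stable default set of~$\prog$ by the existence of a \emph{partial solution} $\hat\tabval = (\hat Z, \hat{\cal M}, \hat{\cal P}, \hat{\cal C})$ for the root~$n$ that has a witness-proof tuple with empty ${\cal B}$-component and whose counter-witnesses all satisfy $\BF{\hat{\cal BC}}\neq\hat{\cal BC}$. Since a local partial solution for~$n$ is, by Definition~\ref{def:localpartialsol}, nothing but the componentwise projection $\hat\tabval^{\,n}$ of such a partial solution, it suffices to show two things: (a)~projecting any partial solution of the form guaranteed by Lemma~\ref{prop:partialsol_corr} yields a local partial solution of exactly the shape claimed here; and (b)~conversely, a local partial solution of the claimed shape can only arise from a partial solution meeting the hypotheses of Lemma~\ref{prop:partialsol_corr}. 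Chaining these two equivalences with Lemma~\ref{prop:partialsol_corr} then gives the statement.

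First I would work out what the projection does at the root. Because $\chi(n)=\emptyset$, we have $\prog_n=\prog\cap\chi(n)=\emptyset$, so the witness set collapses to $Z=\hat Z\cap\prog_n=\emptyset$ and every state function restricts to the empty function on the bag. For the witness models, recall that at the root $\att{n}=\at(\prog)$ and $\Gamma_n[\gamma(\hat Z)]=\gamma(\hat Z)$, since by the LTD conditions every label $(\gamma,d)$ is realized below~$n$, so $\hat{\cal M}=\Mod_{2^{\at(\prog)}}(\bigwedge_{d\in\hat Z}\gamma(d))$ is nonempty whenever $\gamma(\hat Z)$ is satisfiable, which holds because a stable default set yields a \emph{consistent} extension. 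The projection ${\cal M}=\hat{\cal M}_n=\{M\cap\{\bff\}\mid M\in\hat{\cal M}\}$ therefore equals $\{\emptyset\}$, matching the required form $\langle\emptyset,\{\emptyset\},{\cal P},{\cal C}\rangle$. The crucial observation for the remaining components is that the projection ${\cal S}_t=\{S\cap(\chi(t)\cup\{\bff\})\mid S\in{\cal S}\}$ always retains the marker~$\bff$; hence a set survives as $\bff$-containing or $\bff$-free under projection exactly according to whether it contained~$\bff$ beforehand.

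Second I would verify that the two defining conditions are preserved in both directions. For the witness proofs, projection maps ${\cal B}$ to $\hat{\cal B}_n=\{B\cap\{\bff\}\mid B\in\hat{\cal B}\}$, and since the assignments in $\hat{\cal P}$ carry no~$\bff$, each $B$ projects to $\emptyset$; thus the projected set is empty iff $\hat{\cal B}$ was empty, so a tuple $\langle\sigma,{\cal A},\emptyset\rangle\in{\cal P}$ exists iff a tuple with empty ${\cal B}$ exists in $\hat{\cal P}$. For the counter-witnesses, the condition $\BF{\cal BC}\neq{\cal BC}$ asks for at least one $\bff$-free member of ${\cal BC}$; by the marker-preservation property above, ${\cal BC}$ has a $\bff$-free member iff $\hat{\cal BC}$ does, so $\BF{\cal BC}\neq{\cal BC}$ holds for every projected counter-witness iff $\BF{\hat{\cal BC}}\neq\hat{\cal BC}$ holds for every counter-witness of $\hat\tabval$. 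These are precisely the hypotheses of Lemma~\ref{prop:partialsol_corr}, so both implications follow.

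I expect the main obstacle to be the backward direction~(b): a local partial solution of the claimed shape must be exhibited as $\hat\tabval^{\,n}$ for \emph{some} partial solution $\hat\tabval$, and one has to make sure that the emptiness of the projected ${\cal B}$ and the $\bff$-freeness observed after projection are not artifacts introduced by the projection itself but genuinely reflect the same properties upstream. The marker-preservation property is exactly what rules this out, since it guarantees that projection neither creates nor destroys $\bff$-free members, and that a nonempty ${\cal B}$ projects to a nonempty set. Once this is pinned down, the lemma is immediate from Lemma~\ref{prop:partialsol_corr}, the only routine bookkeeping being the explicit check that the empty function, $Z=\emptyset$, and ${\cal M}=\{\emptyset\}$ are forced at the empty-bag root via Definitions~\ref{def:partialsol},~\ref{def:partial-model}, and~\ref{def:localpartialsolpart}.
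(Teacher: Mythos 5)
Your proof is correct and follows essentially the same route as the paper's: the paper likewise obtains the lemma directly from Lemma~\ref{prop:partialsol_corr} together with Definition~\ref{def:localpartialsol}, observing that since $\chi(n)=\emptyset$ every partial solution for the root is an extension of the corresponding local partial solution. Your extra bookkeeping (marker preservation under projection, the forced values $Z=\emptyset$ and ${\cal M}=\{\emptyset\}$, and emptiness of the projected ${\cal B}$) just spells out details the paper's two-sentence proof leaves implicit.
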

\begin{proof}
  Since $\chi(n) = \emptyset$, every partial solution for the root~$n$
  is an extension of the local partial solution~$\tabval$ for the
  root~$n \in N$ according to Definition~\ref{def:localpartialsol}. By
  Lemma~\ref{prop:partialsol_corr}, we obtain that the lemma is true.
\end{proof}

\noindent
In the following, we abbreviate variables occurring in bag~$\chi(t)$
by~$\at_t$,~i.e., $\at_t \eqdef \chi(t) \setminus \prog_t$.



  \begin{proposition}[Soundness]\label{thm:soundness}
    Let $\prog$ be a default theory, $\calT = (T, \chi, \delta)$ an LTD of the
    semi-primal graph~$S(\prog)$, where $T=(N,\cdot,\cdot)$, and
    $t\in N$ a node.  Given a local partial solution $\tabval'$ of
    child table $\tab{}'$ (or local partial solution $\tabval'$ of
    table $\tab{}'$ and local partial solution $\tabval''$ of table
    $\tab{}''$), each tuple $\tabval$ of table $\tab{t}$ constructed
    using table algorithm $\PRIM$ is also a local partial solution.
  \end{proposition}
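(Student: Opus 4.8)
The plan is to proceed by a case distinction on $\type(t)$, mirroring the branches of Listing~\ref{fig:prim}. In each case I start from the hypothesis that the child tuple~$\tabval'$ (and, for a join, also~$\tabval''$) is a local partial solution, so Definition~\ref{def:localpartialsol} supplies a witnessing partial solution~${\hat \tabval}'$ (and~${\hat \tabval}''$). I would then exhibit a partial solution~${\hat \tabval}$ for~$t$ and verify that the tuple~$\tabval$ produced by $\PRIM$ equals~${\hat \tabval}^t$; concretely, that conditions~(1)--(4) of Definition~\ref{def:localpartialsol} hold together with the underlying requirements of Definitions~\ref{def:partial-model} and~\ref{def:partialsol}. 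Throughout, the connectedness condition~(iii) of tree decompositions ensures that a variable or default removed at~$t$ never reappears in an ancestor, so that the accounting below~$t$ underlying $\atto$, $\gammat{t}$, and $\Gamma_t[\cdot]$ is faithful.

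I would first dispose of the base case $\type(t)=\leaf$: here $\chi(t)=\emptyset$, so $\prog_t=\emptyset$ and $\progt{t}=\emptyset$, and the output of Line~\ref{line:leaf} is the trivial partial solution for~$t$, which is checked immediately. For the introduce and remove cases each auxiliary function realises precisely the change that adding or deleting a single vertex induces on the witnessing partial solution: introducing a default~$d$ (Line~\ref{line:def_int}) extends~${\hat \sigma}'$ by a guessed state for~$d$ via $\sub_{d,\mathcal{S}}$ and grows~$Z$ through $\MAIR{Z}{d}$; removing~$d$ (Line~\ref{line:def_rem}) restricts~${\hat \sigma}'$ via $\DS_d$ and shrinks~$Z$ through $\MARR{Z}{d}$; introducing a variable~$a$ (Line~\ref{line:atom_int}) branches every assignment on the truth value of~$a$ via $\Choose_a$ and $\MAIRCR{\mathcal{M}}{a}$; and removing~$a$ (Line~\ref{line:atom_rem}) projects it out via $\AS_a$ and $\MAZR{\mathcal{M}}{a}$. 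The only delicate point is that the size bound $\Card{\AAA}\le\Card{\sigma^{-1}(\lalpha)}-1$ and the witnessing properties of~$\AAA$ and~$\BBB$ survive these projections, which again follows from the connectedness property.

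For the three label cases I would show that the filtering operators enforce exactly those conditions of Definition~\ref{def:partial-model} that first become checkable once the default~$d$ and all variables of the relevant formula occur together in~$\chi(t)$. For label~$(\gamma,d)$ (Line~\ref{line:label_conc}), $\GS_d$ keeps only proofs whose required assignments model~$\gamma(d)$ when $\sigma(d)=\lgamma$, while $\Mod_{\mathcal{M}}(\gamma(d))$ restricts the witness models accordingly; for~$(\alpha,d)$ (Line~\ref{line:label_pre}), $\GSA_d$ records a required $\lalpha$\hy assignment satisfying~$\neg\alpha(d)$; and for~$(\beta,d)$ (Line~\ref{line:label_just}), $\GSB_d$ adds to~$\BBB$ the models of~$\beta(d)$ drawn from~$\mathcal{M}$. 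Since the LTD is pretty, each label occurs at a unique node, so this is the single place where the corresponding condition is imposed, and the restrictions coincide with~$\Gamma_t[\gamma(Z)]$.

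The hard part will be the join case (Line~\ref{line:join}). There the witness models are intersected, whereas the witness proofs and counter-witnesses are merged by the combined inner/outer join $\hat\bowtie_{\mathcal{M}',\mathcal{M}''}$, and I would have to verify three things. First, the merged required $\lalpha$\hy assignments $\mathcal{AR}=\mathcal{A}'\bowtie(\mathcal{A}''\cup\mathcal{M}''\cup\MAZR{[\mathcal{B}'']}{\bff})$ subject to $\mathcal{A}'\cup\mathcal{A}''\subseteq\mathcal{AR}$ (and its symmetric counterpart) still meet the size bound and witness every $\lalpha$\hy default, using that an assignment below~$t$ splits uniquely across the two branches, which agree on~$\chi(t)$. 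Second, the merged refuting $\lbeta$\hy assignments, formed as the union $[\mathcal{B}'\bowtie(\mathcal{B}''\cup\mathcal{M}'')]\cup[(\mathcal{B}'\cup\mathcal{M}')\bowtie\mathcal{B}'']$, record a refutation exactly when at least one branch refutes; the marker~$\bff$ serves precisely to separate an assignment that merely models the $\lgamma$\hy conclusions on one side from a genuine refuter, and I must check that~$\bff$ is inherited if and only if it is set on both sides. Third, I must confirm that $\PPP$ and~$\mathcal{C}$ stay the \emph{largest} sets demanded by Definition~\ref{def:partialsol}, and that the strict inclusion $\rho^{-1}(\lgamma)\subsetneq\sigma^{-1}(\lgamma)$ defining a counter-witness is preserved when a $\PPP$\hy tuple is joined with a $\mathcal{C}$\hy tuple, that is for the cross terms $\mathcal{P}'\mathbin{\hat\bowtie}\mathcal{C}''$ and $\mathcal{C}'\mathbin{\hat\bowtie}\mathcal{P}''$. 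Maintaining this maximality together with the marker semantics across the join is the principal obstacle; the remaining cases then assemble into soundness for all node types.
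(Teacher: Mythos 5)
Your proposal follows essentially the same route as the paper's proof: a case distinction on the node type, lifting the witnessing partial solution supplied by Definition~\ref{def:localpartialsol} for the child tuple(s) to a partial solution~${\hat\tabval}$ for~$t$ (for the join node, combining the two via the $\bowtie$/$\hat\bowtie$ operators), and verifying that the tuple produced by $\PRIM$ is exactly the projection~${\hat\tabval}^t$, so that Definitions~\ref{def:partial-model} and~\ref{def:partialsol} remain satisfied. If anything, your sketch is more explicit than the paper's own proof, which works out only the removal and join cases in detail and dismisses the introduce and label nodes with ``similar arguments''.
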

\begin{proof}
  Let $\tabval'$ be a local partial solution for $t'\in N$ and
  $\tabval$ a tuple for node~$t \in N$ such that $\tabval$ was derived
  from~$\tabval'$ using table algorithm \PRIM. Hence, node~$t'$ is the
  only child of $t$ and~$t$ is either removal or introduce node.

  Assume that $t$ is a removal node and
  $d\in \prog_{t'}\setminus \prog_t$ for some default~$d$.
  %
  %
  Observe that for $\tabval = \langle Z, {\cal M}, \PPP, {\cal C} \rangle$ and
  $\tabval' = \langle Z', {\cal M}, {\cal P'}, {\cal C'} \rangle$, 
  sets~$\AAA$ and~${\cal B}$ are equal, i.e., $\langle \cdot, \AAA, {\cal B}\rangle \in\PPP \Longleftrightarrow \langle \cdot, \AAA, {\cal B}\rangle \in{\cal P'}$ and $\langle \cdot, \AAA, {\cal B}\rangle \in{\cal C} \Longleftrightarrow \langle \cdot, \AAA, {\cal B}\rangle \in{\cal C'}$.
  %
  Since $\tabval'$ is a local partial solution, there exists a partial
  solution~${\hat \tabval'}$ of $t'$, satisfying the conditions of
  Definition~\ref{def:localpartialsol}.  Then, ${\hat \tabval'}$ is
  also a partial solution for node $t$, since it satisfies all
  conditions of Definitions~\ref{def:partial-model} and
  \ref{def:partialsol}.  Finally, note that
  $\tabval = ({\hat {\tabval}'})^t$ since the projection of
  ${\hat \tabval'}$ to the bag $\chi(t)$ is $\tabval$ itself. In
  consequence, the tuple~$\tabval$ is a local partial solution.

  For~$a\in\at_{t'}\setminus\at_t$ as well as for introduce nodes, we
  can analogously check the proposition.

  Next, assume that $t$ is a join node. Therefore, let $\tabval'$ and
  $\tabval''$ be local partial solutions for $t',t''\in N$,
  respectively, and $\tabval$ be a tuple for node $t\in N$ such that
  $\tabval$ can be derived using both $\tabval'$ and $\tabval''$ in
  accordance with the \PRIM algorithm. Since $\tabval'$ and
  $\tabval''$ are local partial solutions, there exists partial
  solution
  ${\hat \tabval'} = ({\hat {Z'}}, {\hat {\cal M'}}, {\hat {\cal P'}}, {\hat {\cal C'}})$
  for node $t'$ and partial solution
  ${\hat \tabval''} = ({\hat {Z''}}, {\hat {\cal M''}}, {\hat {\cal P''}}, {\hat {\cal
      C''}})$ for node $t''$.  Using these two partial solutions, we
  can construct
  ${\hat \tabval} = ({\hat {Z'}} \cup {\hat {Z''}}, {\hat {\cal M'}} \bowtie {\hat {\cal M''}},
  {\hat {\cal P'}} \, {\hat\bowtie}_{ {\hat {\cal M'}}, {\hat {\cal M''}}}\, {\hat {\cal P''}}, ({\hat {\cal C'}} \, {\hat\bowtie}_{ {\hat {\cal M'}}, {\hat {\cal M''}}}\, {\hat {\cal
      C''}}) \cup ({\hat {\cal P'}}\, {\hat\bowtie}_{ {\hat {\cal M'}}, {\hat {\cal M''}}}\, {\hat {\cal C''}}) \cup ({\hat {\cal C'}} \, {\hat\bowtie}_{ {\hat {\cal M'}}, {\hat {\cal M''}}}\, {\hat {\cal P''}}))$ where
  for $\bowtie(\cdot, \cdot)$ and ${\hat \bowtie}(\cdot, \cdot)$ we refer to Listing~\ref{fig:prim}. 
  Then, we check all conditions of Definitions~\ref{def:partial-model}
  and \ref{def:partialsol} in order to verify that ${\hat
    \tabval}$ is a partial solution for
  $t$. Moreover, the projection ${\hat \tabval}^t$ of ${\hat
    \tabval}$ to the bag $\chi(t)$ is exactly
  $\tabval$ by construction and hence, $\tabval = {\hat
    \tabval}^t$ is a local partial solution.

  Since one can provide similar arguments for each node type, we established
  soundness in terms of the statement of the proposition.

\end{proof}

\begin{proposition}[Completeness]\label{prop:completeness}
  Let $\prog$ be a default theory, $\calT = (T, \chi, \delta)$ where
  $T=(N,\cdot,\cdot)$ be an LTD of $S(\prog)$ and $t\in N$ be a
  node. Given a local partial solution $\tabval$ of table $\tab{t}$,
  either $t$ is a leaf node, or there exists a local partial solution
  $\tabval'$ of child table $\tab{}'$ (or local partial solution
  $\tabval'$ of table $\tab{}'$ and local partial solution~$\tabval''$
  of table $\tab{}''$) such that $\tabval$ can be constructed by
  $\tabval'$ (or $\tabval'$ and $\tabval''$, respectively) and using
  table algorithm~${\PRIM}$.
\end{proposition}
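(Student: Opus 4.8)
The plan is to reverse the construction in the proof of Proposition~\ref{thm:soundness}: rather than pushing a child local partial solution up through $\PRIM$, I invert each case of Listing~\ref{fig:prim} and exhibit suitable child tuples. Fix a local partial solution $\tabval = \langle Z, \MMM, \PPP, \CCC \rangle$ of $\tab{t}$. By Definition~\ref{def:localpartialsol} there is a partial solution ${\hat \tabval} = ({\hat Z}, {\hat \MMM}, {\hat \PPP}, {\hat \CCC})$ for $t$ with $\tabval = {\hat \tabval}^{t}$. If $\type(t) = \leaf$ the first disjunct of the claim holds and there is nothing to prove; otherwise I proceed by a case distinction on $\type(t)$ and, in each case, construct a partial solution on the child (or on both children) whose projection according to Definition~\ref{def:localpartialsol} is a local partial solution $\tabval'$ (resp.\ $\tabval'$ and $\tabval''$) from which $\PRIM$ reconstructs $\tabval$.

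For the single-child cases I take the child partial solution to be the canonical (de)restriction of ${\hat \tabval}$ to the theory and the variables below the child. When $t$ removes a default $d$ (Line~\ref{line:def_rem}) or a variable $a$ (Line~\ref{line:atom_rem}), the below-theory and below-variables are unchanged, so ${\hat \tabval}$ is already a partial solution for the child; its projection to $\chi(t')$ is a local partial solution $\tabval'$ that still carries the coordinate forgotten at $t$, and $\DS_d$ (resp.\ $\AS_a$) maps $\tabval'$ to $\tabval$. When $t$ introduces a default $d$ (Line~\ref{line:def_int}), I delete $d$ from the domain of each state function; the two tuples emitted in that line exactly enumerate the cases ${\hat \sigma}(d)=\lgamma$ versus ${\hat \sigma}(d)\in\{\lalpha,\lbeta\}$, so the value ${\hat \sigma}(d)$ determines which emitted tuple $\tabval$ descends from, and $\sub_{d,\cdot}$ is inverted by dropping the $d$-entry. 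The $\intr$-variable and $\labl$ cases are handled analogously by inverting the filtering and augmenting operations $\Choose_a$, $\Mod_{\cal M}(\cdot)$, $\GS_d$, $\GSA_d$, $\GSB_d$, and $\CWc_d$. Here the point requiring care is that, since partial solutions are defined via the \emph{largest} sets satisfying Definition~\ref{def:partialsol}, I must check that the child's maximal sets are mapped \emph{exactly} onto those of $\tabval$ by the corresponding operation; connectedness of the tree decomposition (condition~(iii)) guarantees that imposing the conclusion, prerequisite, or justification at $t$ does not interact with coordinates already projected away at the child, so neither are tuples lost nor spuriously gained.

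The main obstacle is the $\join$ node, where $\chi(t)=\chi(t')=\chi(t'')$, $\progt{t}=\progt{t'}\cup\progt{t''}$, and $\prog_t=\prog_{t'}=\prog_{t''}$. Here I must \emph{factor} ${\hat \tabval}$ into partial solutions ${\hat \tabval'}$ and ${\hat \tabval''}$ for $t'$ and $t''$ whose join (as built in Proposition~\ref{thm:soundness}) is ${\hat \tabval}$. The state function ${\hat \sigma}$ and its witnessed assignments split along the branches: a default in $\progt{t'}\setminus\prog_t$ is governed solely by the left child and one in $\progt{t''}\setminus\prog_t$ solely by the right, while bag-defaults are shared; the required $\lalpha$-assignments in ${\hat \AAA}$ and refuting $\lbeta$-assignments in ${\hat \BBB}$ are distributed to the branch that first certified them, and ${\hat \MMM}$ restricts to ${\cal M}'$, ${\cal M}''$ agreeing on the bag, which is the defining condition of $\bowtie$. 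The delicate part is the counter-witness component: every tuple of ${\hat \CCC}$ arises from exactly one of the summands ${\cal C'}{\hat\bowtie}{\cal C''}$, ${\cal P'}{\hat\bowtie}{\cal C''}$, or ${\cal C'}{\hat\bowtie}{\cal P''}$, according to whether subset-minimality is violated below both branches, only the right, or only the left. I therefore attribute each counter-witness tuple to one summand and recover the matching witness and counter-witness tuples on both sides, keeping the $\bff$-marker bookkeeping consistent, since the marker distinguishes assignments that merely model the conclusions of the $\lgamma$-defaults from genuinely refuting $\lbeta$-assignments, and this distinction has to be preserved by $\bowtie_{{\cal M}',{\cal M}''}$ on each branch. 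Checking that this factorization both satisfies Definitions~\ref{def:partial-model} and~\ref{def:partialsol} on each child and recombines to ${\hat \tabval}$ under ${\hat\bowtie}$ is the technically heaviest step.

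In every case the constructed child object projects, by Definition~\ref{def:localpartialsol}, to a local partial solution $\tabval'$ (and $\tabval''$), and $\PRIM$ maps it (them) to $\tabval$, which establishes the claim. Together with Proposition~\ref{thm:soundness} this shows that the tables computed by $\dpa$ contain \emph{exactly} the local partial solutions at each node, and Lemma~\ref{prop:partiallocalsol_corr} then transfers this to the desired correctness statement at the root.
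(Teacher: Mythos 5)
Your proof follows essentially the same route as the paper's: starting from the partial solution ${\hat \tabval}$ that underlies the local partial solution at $t$ (Definition~\ref{def:localpartialsol}), you derive a partial solution for the child(ren) --- unchanged for removal nodes, restricted for introduce/label nodes, factored across branches for join nodes --- project it to obtain the child local partial solution(s), and observe that \PRIM{} reconstructs $\tabval$ from them. The paper carries out only the removal case explicitly and settles the rest with ``similar arguments,'' so your explicit treatment of the introduce, label, and join cases is more detailed but not a different argument.
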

\begin{proof}
  Let $t\in N$ be a removal node and
  $d\in \prog_{t'} \setminus \prog_t$ with child node~$t'\in N$.  We
  show that there exists a tuple~$\tabval'$ in table~$\tab{t'}$ for
  node $t'$ such that $\tabval$ can be constructed using $\tabval'$ by
  \PRIM (Listing~\ref{fig:prim}). Since $\tabval$ is a local
  partial solution, there exists a partial solution
  ${\hat \tabval} = ({\hat Z}, {\hat {\cal M}}, {\hat \PPP}, {\hat {\cal C}})$ for
  node~$t$, satisfying the conditions of
  Definition~\ref{def:localpartialsol}.  
  It is easy to see that ${\hat \tabval}$
  is also a partial solution for~$t'$ and we define
  $\tabval' \eqdef {\hat \tabval}^{t'}$, which is the projection of
  ${\hat \tabval}$ onto the bag of~$t'$. Apparently, the
  tuple~$\tabval'$ is a local partial solution for node $t'$ according
  to Definition~\ref{def:localpartialsol}. Then, $\tabval$ can be
  derived using \PRIM algorithm and $\tabval'$.  By similar arguments,
  we establish the proposition for~$a\in \at_{t'}\setminus\at_t$ and
  the remaining node types. Hence, the propositions sustains.
\end{proof}

\noindent
Now, we are in the situation to prove Theorem~\ref{thm:prim:correctness}, which
 states that we can decide the
problem~\Ext by means of Algorithm~$\dpa$.

\begin{restatetheorem}[thm:prim:correctness]
\begin{theorem}
  Given a default theory~$\prog$, the algorithm ${\dpa}$
  correctly solves \Ext.
\end{theorem}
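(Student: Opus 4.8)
The plan is to derive the theorem as a direct consequence of the soundness and completeness results together with the two correspondence lemmas already established. The central object is the invariant that, at every node~$t$, the table~$\tab{t}$ produced by~$\PRIM$ equals the set of local partial solutions for~$t$ in the sense of Definition~\ref{def:localpartialsol}.

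First I would prove this invariant by induction on the post-order traversal of~$T$. For the base case ($t$ a leaf) I inspect Line~\ref{line:leaf}: $\PRIM$ emits the single tuple~$\langle \emptyset, \{\emptyset\}, \{\langle \emptyset, \emptyset, \emptyset\rangle\}, \emptyset\rangle$, and I check against Definitions~\ref{def:partial-model}--\ref{def:localpartialsol} that, for an empty leaf bag, this is exactly the unique local partial solution. For the inductive step I assume the invariant for all children of~$t$. Then Proposition~\ref{thm:soundness} (soundness) yields the inclusion~$\tab{t} \subseteq \{\,\text{local partial solutions for }t\,\}$, since every tuple $\PRIM$ builds from the children's tuples---local partial solutions by the induction hypothesis---is again a local partial solution. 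Conversely, Proposition~\ref{prop:completeness} (completeness) gives the reverse inclusion: every local partial solution for~$t$ is reconstructed by~$\PRIM$ from local partial solutions of the children, which by the induction hypothesis already lie in the children's tables. Combining both inclusions gives~$\tab{t} = \{\,\text{local partial solutions for }t\,\}$, and the join case is covered verbatim, as Propositions~\ref{thm:soundness} and~\ref{prop:completeness} both explicitly treat a node with two children.

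Next I would instantiate the invariant at the root~$n$, whose bag is empty. By Lemma~\ref{prop:partiallocalsol_corr}, $\prog$ admits a stable default set if and only if the set of local partial solutions for~$n$---now identified with~$\tab{n}$---contains a tuple~$\langle \emptyset, \{\emptyset\}, \PPP, \CCC\rangle$ with some~$\langle \sigma, \AAA, \emptyset\rangle \in \PPP$ and with~$\BF{\mathcal{BC}} \neq \mathcal{BC}$ for every~$\langle \rho, \mathcal{AC}, \mathcal{BC}\rangle \in \CCC$. This is precisely the acceptance condition that $\dpa$ evaluates on~$\tab{n}$, so $\dpa$ accepts~$\prog$ exactly when~$\SD(\prog) \neq \emptyset$. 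Finally, Lemma~\ref{lem:eq_se_sd} expresses~$\SBE{\prog}$ as a union indexed by~$\SD(\prog)$, whence~$\SBE{\prog} \neq \emptyset$ iff~$\SD(\prog) \neq \emptyset$; thus $\dpa$ accepts iff~$\prog$ has a consistent stable extension, which is what~\Ext asks.

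The genuinely technical calculations are all absorbed into Propositions~\ref{thm:soundness} and~\ref{prop:completeness} and into Lemma~\ref{prop:partialsol_corr}, so the remaining work is bookkeeping. The main obstacle I anticipate is keeping the induction airtight at the seam between its two directions: since soundness and completeness are each phrased relative to the \emph{children's} local partial solutions, I must ensure that the induction hypothesis supplies exactly the premise each proposition consumes, and that this matching also holds at join nodes where two child tables feed~$\PRIM$ at once. A secondary point worth checking explicitly is that the informal acceptance test described for~$\dpa$ coincides verbatim---including the marker-$\bff$ condition on~$\CCC$---with the tuple shape demanded in Lemma~\ref{prop:partiallocalsol_corr}.
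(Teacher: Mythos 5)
Your proposal is correct and takes essentially the same route as the paper's own proof: both rest on exactly the same ingredients---Lemma~\ref{prop:partiallocalsol_corr} to translate the root condition into existence of a stable default set, Proposition~\ref{thm:soundness} for the soundness direction, and Proposition~\ref{prop:completeness} for the completeness direction---assembled by induction over the tree decomposition. The only differences are presentational: you fold the paper's two separate inductions (bottom-up for soundness, top-down from the root for completeness) into a single bottom-up induction establishing the exact equality of $\tab{t}$ with the set of local partial solutions for~$t$, and you spell out the final passage from stable default sets to consistent stable extensions via Lemma~\ref{lem:eq_se_sd}, which the paper leaves implicit; both are sound refinements of the same argument rather than a different approach.
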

\end{restatetheorem}

\begin{proof}
  We first show soundness. Let $\TTT = (T, \chi, \delta)$ be the given LTD,
  where $T = (N,\cdot ,n)$.
  By Lemma~\ref{prop:partiallocalsol_corr} we know that there is a
  stable default set if and only if there exists a local partial
  solution for the root~$n$. Note that the tuple is by construction of
  the
  form~$\langle\emptyfunc, \{\emptyset\}, \PPP, {\cal C} \rangle$,
  where~$\PPP\neq\emptyset$ can contain a combination of the
  following tuples~$\langle \emptyset, \emptyset, \emptyset\rangle$,
  $\langle \emptyset, \{\emptyset\}, \emptyset\rangle$.  For
  each~$\langle \rho, {\cal AC}, {\cal BC} \rangle \in {\cal C}$, we have
  $\BF{\cal BC\,}\neq {\cal BC}$.
  In
  total, this results in 16 possible tuples,
  since~${\cal C} \subseteq 2^C$ can contain any combination (4 many)
  of~$C$, where
  $C=\{\langle \emptyset, \emptyset, \{\emptyset, \{\bff\}\} \rangle$,
  $\langle \emptyset, \{\emptyset\}, \{\emptyset,\{\bff\}\}
  \rangle\}$.
    Hence, we proceed by induction starting from the leaf
  nodes in order to end up with such a tuple at the root node~$n$. In fact, the
  tuple~$\langle \emptyset, \{\emptyset\}, \{\langle \emptyset, \emptyset, \emptyset \rangle\}, \emptyset \rangle$ is trivially
  a partial
  solution for (empty) leaf nodes 
  by Definitions~\ref{def:partial-model} and~\ref{def:partialsol} and
  also a local partial solution of
  $\langle \emptyset, \{\emptyset\}, \{\langle \emptyset, \emptyset, \emptyset \rangle \}, \emptyset \rangle$ by
  Definition~\ref{def:localpartialsol}.  We already established the
  induction step in Proposition~\ref{thm:soundness}.
  Hence, when we reach the root~$n$, when traversing the TD in
  post-order by Algorithm~$\dpa$, we obtain only valid tuples
  inbetween and a tuple of the
  form discussed above 
  in the table of the root~$n$ witnesses an answer set.
  
  Next, we establish completeness by induction starting from the
  root~$n$. Let therefore, ${\hat Z}$ be an arbitrary stable default set of~$\prog$.
  By Lemma~\ref{prop:partiallocalsol_corr}, we know that for the root~$n$ there exists a local partial solution of
  the discussed form~$\langle \emptyset, \{\emptyset\}, \PPP, {\cal C} \rangle$ for some partial
  solution~$\langle {\hat Z}, {\hat {\cal M}}, {\hat \PPP}, {\hat{\cal C}} \rangle$. 
  We already established the induction step in
  Proposition~\ref{prop:completeness}. 
  Hence, we obtain some (corresponding) tuples for every
  node~$t$. Finally, stopping at the leaves~$n$.
  In consequence, we have shown both soundness and completeness
  resulting in the fact that Theorem~\ref{thm:prim:correctness} is
  true.
\end{proof}

\begin{proposition}[Completeness for Enumeration]\label{prop:completeness2}
  Let $\prog$ be a default theory, $\calT = (T, \chi, \delta)$ where
  $T=(N,\cdot,\cdot)$ be an LTD of $S(\prog)$ and $t\in N$ be a
  node. Given a partial solution $\hat\tabval$ and the corresponding
  local partial solution~$\tabval = {\hat\tabval}^t$ for table $\tab{t}$,
  either $t$ is a leaf node, or there exists a local partial solution
  $\tabval'$ of child table $\tab{}'$ (or local partial solution
  $\tabval'$ of table $\tab{}'$ and local partial solution~$\tabval''$
  of table $\tab{}''$) such that $\tabval$ can be constructed by
  $\tabval'$ (or $\tabval'$ and $\tabval''$, respectively) and using
  table algorithm~${\PRIM}$.
\end{proposition}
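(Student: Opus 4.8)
The plan is to mirror the proof of Proposition~\ref{prop:completeness}, but now to carry along the \emph{fixed} partial solution~$\hat\tabval$, so that the child local partial solutions are not merely shown to exist but are \emph{uniquely determined} by~$\hat\tabval$. This functional dependence is exactly what upgrades ordinary completeness to the one-to-one correspondence needed for duplicate-free enumeration promised in Theorem~\ref{thm:enumsafe}. Throughout, I proceed by induction on the tree of~$\TTT$, top-down from the root, tracking at each node the concrete partial solution rather than just its projection.

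First I would treat the single-child nodes. From the fixed $\hat\tabval = (\hat Z, \hat{\cal M}, {\hat \PPP}, {\hat {\cal C}})$ for~$t$ I obtain a partial solution~${\hat \tabval'}$ for the unique child~$t'$: for a removal or label node the below-$t$ and below-$t'$ sets $\progt{t'}$, $\att{t'}$ coincide with those of~$t$, so ${\hat \tabval'} = \hat\tabval$; for an introduce node I restrict $\hat\tabval$ to $\progt{t'}$ and $\att{t'}$, and one checks the restriction still satisfies Definitions~\ref{def:partial-model} and~\ref{def:partialsol} (exactly as in Proposition~\ref{prop:completeness}). I then set $\tabval' \eqdef ({\hat \tabval'})^{t'}$; by Definition~\ref{def:localpartialsol} this is a local partial solution, and since ${\hat \tabval'}$ is determined by~$\hat\tabval$, so is~$\tabval'$. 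I close each such case by inspecting the matching line of Listing~\ref{fig:prim} for the node type to confirm that applying~\PRIM to~$\tabval'$ reproduces exactly $\tabval = \hat\tabval^t$.

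The harder case is the join node, which I expect to be the main obstacle. Here the single fixed~$\hat\tabval$ must be split into two partial solutions~${\hat \tabval'}$ and~${\hat \tabval''}$ for the children~$t'$, $t''$ with $\chi(t') = \chi(t'') = \chi(t)$. I would define them by restricting the below-$t$ variable and default parts to those below~$t'$ and below~$t''$ respectively; by the connectedness condition of the TD these parts agree precisely on the shared bag, so each witness model~$M \in \hat{\cal M}$ decomposes uniquely as $M' \cup M''$ agreeing on~$\chi(t)$ (this is the join~$\bowtie$), and the witness proofs and counter-witnesses decompose analogously through~${\hat\bowtie}$. I then verify that ${\hat \tabval'}$ and~${\hat \tabval''}$ satisfy Definitions~\ref{def:partial-model} and~\ref{def:partialsol}, that their projections $\tabval' \eqdef ({\hat \tabval'})^{t'}$ and $\tabval'' \eqdef ({\hat \tabval''})^{t''}$ are local partial solutions, and that recombining them via the join line (Line~\ref{line:join}) yields~$\tabval$. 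Uniqueness of the split follows because the bag is shared: the below-$t'$/below-$t''$ partition forces the decomposition, leaving no freedom.

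I expect the genuinely delicate points to be twofold. First, the decomposed counter-witness sets~${\hat {\cal C}}$ must recombine correctly, since Line~\ref{line:join} mixes the~$\PPP$ and~${\cal C}$ parts through the three cross-joins $({\cal C'} {\hat\bowtie} {\cal C''})$, $({\cal P'} {\hat\bowtie} {\cal C''})$, and $({\cal C'} {\hat\bowtie} {\cal P''})$, and the $\bff$-markers in the refuting \lbeta\hy assignments have to be tracked consistently through the split. Second, the uniqueness argument itself upgrades the purely existential claim of Proposition~\ref{prop:completeness} to a functional dependence of the child solutions on~$\hat\tabval$, which is what rules out duplicates during enumeration. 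Since every remaining node type admits the same projection-and-recombine argument as the introduce and removal cases, settling these two points completes the induction and hence the proposition.
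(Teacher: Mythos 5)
Your overall strategy---rerunning the proof of Proposition~\ref{prop:completeness} while carrying the fixed partial solution~${\hat\tabval}$ and invoking functional dependence of the child tuples to rule out duplicates---is exactly the route the paper sketches (its proof is a reduction to Proposition~\ref{prop:completeness}, with uniqueness supplied by Observation~\ref{lem:funcdep} and Lemma~\ref{lem:charset}). The gap is in how you build the child partial solutions: you obtain ${\hat\tabval'}$ (and ${\hat\tabval''}$) by \emph{restricting or projecting every component} of~${\hat\tabval}$, but Definition~\ref{def:partialsol} is not componentwise; it fixes only the witness set~$Z$ and then requires ${\cal M}$, $\PPP$ and ${\cal C}$ to be the \emph{largest} sets satisfying the stated conditions relative to the child's own labelled formulas $\gammat{t'}$, $\alphat{t'}$, $\betat{t'}$. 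Restriction systematically undershoots this maximum. Concretely, at a label node~$t$ with label~$(\gamma,d)$ and $d\in{\hat Z}$ we indeed have $\progt{t}=\progt{t'}$ and $\att{t}=\att{t'}$, yet $\Gamma_t[\gamma({\hat Z})]$ contains $\gamma(d)$ while $\Gamma_{t'}[\gamma({\hat Z})]$ does not, so the child's maximal model set strictly contains ${\hat{\cal M}}$ whenever $\gamma(d)$ excludes models (this is visible in the paper's own Example~\ref{ex:sat}: table $\tab{6}$ has more models than $\tab{7}$); your claim that ``${\hat\tabval'}={\hat\tabval}$ for removal or label nodes'' is therefore false for label nodes. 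The same defect breaks your join case: with $\gamma(d_1)=a\vee b$ labelled below~$t'$ and $\gamma(d_2)=\neg b$ labelled below~$t''$, the model $\{b\}$ of $a\vee b$ belongs to the child's maximal~${\cal M}'$ but is the projection of no model of $(a\vee b)\wedge\neg b$, so the projection of ${\hat{\cal M}}$ is a \emph{proper} subset of~${\cal M}'$ and violates the maximality clause of Definition~\ref{def:partialsol}---there is no ``unique forced decomposition'' of the parent's sets. Since by Proposition~\ref{thm:soundness} the child tables contain only genuine local partial solutions, the tuples you construct need not occur in $\tab{}'$, $\tab{}''$ at all, and the existence claim of the proposition is not established.

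The repair is to restrict only the part that Definition~\ref{def:partialsol} treats as free: set $Z'\eqdef {\hat Z}\cap\progt{t'}$ (and $Z''\eqdef {\hat Z}\cap\progt{t''}$ at a join), let ${\hat\tabval'}$, ${\hat\tabval''}$ be \emph{the} partial solutions determined by these witness sets---unique by Observation~\ref{lem:funcdep}---and take $\tabval'\eqdef({\hat\tabval'})^{t'}$, $\tabval''\eqdef({\hat\tabval''})^{t''}$. One then checks that the operations of $\PRIM$ map these tuples to $\tabval={\hat\tabval}^t$: the label-node lines filter the child's larger model and assignment sets by the newly labelled formula, and the joins $\bowtie$ and ${\hat\bowtie}$ in Line~\ref{line:join} intersect witness models on the bag, which discards exactly the non-extendable models such as $\{b\}$ above. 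Uniqueness of $\tabval'$, $\tabval''$ then comes from the functional dependence on $Z'$, $Z''$ (which is what Lemma~\ref{lem:charset} actually exploits for duplicate-free enumeration), not from a decomposition of the parent's components, which, as the example shows, does not exist.
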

\begin{proof}[Idea]
The correctness proof requires to extend the previous
results to establish a one-to-one correspondence when traversing the
tree of the TD and such that we can reconstruct each solution as well as we do not
get duplicates. The result then follows from the proof for completeness (see Proposition~\ref{prop:completeness}).
\end{proof}

\begin{restatetheorem}[thm:enumsafe]
\begin{theorem}
  Given a default theory~$\prog$, the algorithm~$\dpa$ can be
  used as a preprocessing step to construct tables from which we can
  correctly solve the problem~\AspEnum. More precisely, this is solved by first
  running Algorithm~$\dpa$, constructing the $\prec$-smallest solution~${\cal S}$, and then running
  Algorithm~${\nxt}_{\prec}({\cal T}, {\cal S})$ on the resulting tables of
  Algorithm~$\dpa$ until ${\nxt}_{\prec}({\cal T}, {\cal S})$ returns
  ``undefined''.
\end{theorem}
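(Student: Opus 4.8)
The plan is to turn the decision correctness of Theorem~\ref{thm:prim:correctness} into a one-to-one enumeration, and then to argue that $\nxt_{\prec}$ walks the resulting solution space in $\prec$-order with linear delay. First I would upgrade the correspondence of Lemma~\ref{prop:partiallocalsol_corr} to an explicit bijection: by Definitions~\ref{def:partial-model} and~\ref{def:partialsol} the components $\hat{\cal M}$, $\hat{\cal P}$, $\hat{\cal C}$ of a partial solution $\hat\tabval=\langle Z', \hat{\cal M}, \hat{\cal P}, \hat{\cal C}\rangle$ for the root are the \emph{largest} sets compatible with a fixed witness set $Z'$, so distinct stable default sets $Z'$ induce distinct partial solutions and the map $Z'\mapsto\hat\tabval$ is injective. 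Combining this with soundness (Proposition~\ref{thm:soundness}) and completeness (Propositions~\ref{prop:completeness} and~\ref{prop:completeness2}), every stable default set is realized by a unique \emph{derivation}: a family $\{{\hat\tabval}^{\,t}\}_{t\in N}$ of local partial solutions, one stored in each table $\tab{t}$, such that each is constructed by $\PRIM$ from the local partial solutions of its children (recorded by extension pointers, cf.~\cite{BliemCharwatHecher16b}), and the root tuple has the accepting form $\langle\emptyset, \{\emptyset\}, \PPP, {\cal C}\rangle$ of Lemma~\ref{prop:partiallocalsol_corr}.

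Given this bijection, enumeration reduces to listing all accepting derivations without repetition. I would fix a total order $\prec$ on solutions, induced by a fixed order on the defaults together with the canonical guesses made at introduce nodes (whether the introduced default is placed into the witness set $Z$), and lift it lexicographically along the post-order traversal. The $\prec$-smallest solution ${\cal S}$ is produced directly from the tables computed by $\dpa$ by always selecting the $\prec$-least admissible child pointers top-down from the root. The procedure $\nxt_{\prec}(\TTT, {\cal S})$ then computes the unique $\prec$-successor by the standard backtracking scheme: it locates the deepest node at which the current derivation can be advanced to the next admissible pointer, re-minimizes all choices strictly below that node, and returns ``undefined'' exactly when no such node exists, i.e.\ when ${\cal S}$ is the $\prec$-maximum. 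Correctness of this step amounts to showing that $\prec$ is a strict total order on derivations and that the backtracking visits successors monotonically, which follows from the lexicographic definition of $\prec$; soundness of every emitted derivation is inherited from Proposition~\ref{thm:soundness}, and completeness (no solution skipped) from Proposition~\ref{prop:completeness2}.

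The main obstacle is ensuring that the correspondence is genuinely \emph{one-to-one}, so that no stable default set is emitted twice. The delicate case is the join node, where a root tuple could a priori be reconstructed from several different pairs of child derivations; naive pointer-following would then list one $Z'$ multiple times. Here I would exploit that the witness set $Z$ must coincide in both children of a join (Line~\ref{line:join}) and that, once the $Z$-components are fixed along the whole tree, the accompanying sets ${\cal M}, \PPP, {\cal C}$ are forced as the largest admissible sets; hence a derivation is completely determined by its witness-set choices, and these are in turn exactly the membership pattern of $Z'$. This collapses the join ambiguity and yields the required bijection between derivations and stable default sets. Finally, for the delay bound I would observe that both constructing ${\cal S}$ and each $\nxt_{\prec}$ call traverse the tree at most once, touching tables whose size depends only on $k=\tw{S(\prog)}$; hence after the preprocessing of Theorem~\ref{thm:prim:runtime} each solution is delivered with delay linear in $\CCard{S(\prog)}$, establishing the Delay-FPT guarantee~\cite{CreignouEtAl17a}.
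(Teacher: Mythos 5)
Your plan is correct and follows essentially the same route as the paper: your injectivity claim is the paper's Observation~\ref{lem:funcdep} (functional dependence of ${\cal M},\PPP,{\cal C}$ on $Z$), your unique-derivation argument (including the join-node deduplication) is the paper's Lemma~\ref{lem:charset}, and your lexicographic backtracking with pointer re-minimization and the linear-delay accounting is exactly the paper's algorithm $\nxt_{\prec}$ (Listing~\ref{fig:dpnext}) together with Proposition~\ref{thm:lineardelay}, combined with soundness and enumeration-completeness (Propositions~\ref{thm:soundness} and~\ref{prop:completeness2}) just as in the paper's proof.
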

\end{restatetheorem}

For showing the theorem, we require the following three results.

\begin{observation}\label{lem:funcdep}
Let $\prog$ be a default theory, $\calT = (T, \chi, \delta)$ where
  $T=(N,\cdot,\cdot)$ be an LTD of $S(\prog)$ and $t\in N$ be a
  node. Then, for each partial solution~$\tabval=\langle Z, {\cal M}, \PPP, {\cal C} \rangle$ for~$t$,
${\cal M}, \PPP$ and ${\cal C}$ are functional dependent from~$Z$,
i.e., for any partial solution~$\tabval' = \langle Z, {\cal M'}, {\cal P'}, {\cal C'} \rangle$ for~$t$,
we have $\tabval=\tabval'$. 
\end{observation}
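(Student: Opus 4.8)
The plan is to read the claim off Definition~\ref{def:partialsol} directly. I would first fix the default theory~$\prog$, the LTD~$\calT$, and the node~$t$; apart from~$Z$ these are the only data on which a partial solution for~$t$ may depend. The key observation is that each of~${\cal M}$, $\PPP$, and~${\cal C}$ is introduced in Definition~\ref{def:partialsol} as \emph{the largest set} of objects satisfying a membership condition that is expressed purely through~$Z$ (via~$\gamma(Z)$, $\Gamma_t[\gamma(Z)]$, and the constraint~$\sigma^{-1}(\lgamma)=Z$) together with the fixed data. Such a condition is a per-element predicate~$\phi$, so the family of admissible sets is closed under unions and its largest member is uniquely~$\{x\mid\phi(x)\}$. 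Hence, once~$Z$ is fixed, all three components are forced, and any two partial solutions~$\tabval=\langle Z,{\cal M},\PPP,{\cal C}\rangle$ and~$\tabval'=\langle Z,{\cal M'},{\cal P'},{\cal C'}\rangle$ must agree.

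Next I would treat the three components in turn. For~${\cal M}$ the definition yields~${\cal M}=\{M\in 2^{\atto}\mid M\models\Gamma_t[\gamma(Z)]\}$, manifestly a function of~$Z$. For~$\PPP$, every element must be a partial extension under~$\gamma(Z)$ (Definition~\ref{def:partial-model}) with~$\BF{\cal B}=\emptyset$ and~$\sigma^{-1}(\lgamma)=Z$; here I would unfold Definition~\ref{def:partial-model} and check clause by clause that it mentions only~$Z$, the fixed formulas~$\alpha(d),\beta(d),\gamma(d)$, and the projection~$\Gamma_t[\cdot]$, but never the surrounding~${\cal M}$, $\PPP$, or~${\cal C}$. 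In particular, within a single partial extension the set~${\cal B}$ is itself the largest set meeting the per-element clauses~3(a)/(b), hence determined by~$\sigma$ and~$\gamma(Z)$, so a partial extension is pinned down up to the admissible choices of~$\AAA$. Since~$\PPP$ \emph{collects all} admissible tuples, this residual freedom in~$\AAA$ does not threaten uniqueness: $\PPP$ is still the full set of valid tuples for this~$Z$. The argument for~${\cal C}$ is identical, the only difference being that its membership condition reads~$\rho^{-1}(\lgamma)\subsetneq\sigma^{-1}(\lgamma)=Z$, again a condition on~$Z$ alone.

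The step that needs the most care, and which I regard as the main obstacle, is confirming the absence of circular dependence: I must verify that neither Definition~\ref{def:partial-model} nor the inner maximality defining~${\cal B}$ (and, for counter-witnesses,~${\cal BC}$) refers back to the sets~${\cal M}$, $\PPP$, ${\cal C}$ whose uniqueness is being established, so that ``largest set'' is genuinely well defined and $Z$-parameterized. Once this is checked, equal~$Z$ forces equal~${\cal M}$, $\PPP$, and~${\cal C}$, and therefore~$\tabval=\tabval'$, as claimed.
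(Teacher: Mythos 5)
Your proposal is correct and takes essentially the same route as the paper, whose entire proof is that the claim ``immediately follows from Definition~\ref{def:partialsol}'': each of ${\cal M}$, $\PPP$, and ${\cal C}$ is defined there as \emph{the largest set} satisfying a condition parameterized only by~$Z$ and the fixed data $(\prog,\calT,t)$, so they are functions of~$Z$. Your additional checks (closure under unions, the residual freedom in~$\AAA$ being absorbed because $\PPP$ collects all admissible tuples, and the absence of circular reference back to ${\cal M},\PPP,{\cal C}$) simply make explicit what the paper leaves implicit.
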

\begin{proof}
The claim immediately follows from Definition~\ref{def:partialsol}.
\end{proof}

\begin{algorithm}[t]
  \KwData{TD~$\TTT=(T,\cdot,\cdot)$ with
    $T=(N,\cdot,n)$, solution tuples~${\cal S}$, total ordering~$\prec$ of $\orig_\cdot(\cdot)$.}
  \KwResult{The next solution tuples of~${\cal S}$ using~$\prec$. } %
  $\Tabs{$\cdot$} \lassign \dpa({\cal T})$
  
  \For{\text{\normalfont iterate} $t$ in \text{\normalfont post-order}(T,n)}{\vspace{-0.05em}%
    $\Tab{} \eqdef \SB \Tabs{$t'$} \SM t' \text{ is a child of $t$ in $T$}\SE$\;\vspace{-0.05em} %
	
	${\hat t} \eqdef \text{\normalfont parent of } t$
	
	${\cal S}[t] \lassign \text{\normalfont direct successor } s' \succ {\cal S}[{t}] \text{ in } \orig_{{\hat t}}({\cal S}[{\hat t}])$
	
	\If{${\cal S}[t]$ defined}{
		\For{\text{\normalfont iterate} $t'$ in \text{\normalfont \Tab{}}}{
			\For{\text{\normalfont iterate} $t''$ in \text{\normalfont pre-order}(T,t')}{\vspace{-0.05em}%
				${\hat t''} \eqdef \text{\normalfont parent of } t''$
				
				${\cal S}[t''] \lassign$ \text{\normalfont $\prec$-smallest element in $\orig_{{\hat t''}}({\cal S}[{\hat t''}])$} 
			\vspace{-0.3em}}
		\vspace{-0.3em}}
		\Return{${\cal S}$}\;
	\vspace{-0.3em}}
    \vspace{-0.3em} }\vspace{-0.1em}%
    \Return{undefined}\;
  \caption{Algorithm ${\nxt}_{\prec}({\cal T}, {\cal S})$ for computing the next stable default set of~${\cal S}$.}
\label{fig:dpnext}
\end{algorithm}
%

\begin{lemma}\label{lem:charset}
Let $\prog$ be a default theory, $\calT = (T, \chi, \delta)$ with 
  $T=(N,\cdot,\cdot)$ be an LTD of $S(\prog)$, 
  and~$Z$ be a  stable default set.
  Then, there is a unique
set of tuples~$S$, containing exactly one tuple per node~$t\in N$ 
containing only local partial solutions of 
the unique partial solution for~$Z$. 
\end{lemma}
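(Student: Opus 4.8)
The plan is to exhibit $S$ explicitly as the node-indexed family of projections induced by~$Z$, and then to invoke Observation~\ref{lem:funcdep} to pin down uniqueness. For each node~$t\in N$, I would consider the restricted witness set~$Z_t\eqdef Z\cap\progt{t}$, the part of the stable default set~$Z$ lying in the default theory below~$t$. Since $Z$ is a stable default set of~$\prog$ and $\progt{n}=\prog$ for the root~$n$, Lemma~\ref{prop:partialsol_corr} provides a partial solution~$\hat\tabval_n=(Z,{\cal M},\PPP,{\cal C})$ for~$n$ with witness set~$Z_n=Z$. Proceeding top-down along~$T$ and using the completeness direction (Proposition~\ref{prop:completeness}), I would argue that every node~$t$ inherits a partial solution~$\hat\tabval_t$ for~$t$ whose witness set is exactly~$Z_t$; its projection onto the bag~$\chi(t)$, which I denote~$\tabval_t$, is by Definition~\ref{def:localpartialsol} a local partial solution for~$t$. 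I then set~$S\eqdef\{\tabval_t\mid t\in N\}$, which by construction contains exactly one tuple per node.

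For uniqueness, I would rely on Observation~\ref{lem:funcdep}: for any node~$t$, the components~${\cal M}$, $\PPP$, and~${\cal C}$ of a partial solution for~$t$ are functionally determined by its witness set. Hence the partial solution~$\hat\tabval_t$ for~$t$ with witness set~$Z_t$ is the unique one, and by Definition~\ref{def:localpartialsol} so is its projection~$\tabval_t$. Consequently, any family of tuples meeting the conditions of the statement must agree with~$S$ node by node, which gives uniqueness of~$S$.

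The hard part will be verifying that the restriction~$Z_t=Z\cap\progt{t}$ is genuinely realizable as the witness set of a partial solution for \emph{every} node~$t$ and not merely for the root---that is, that the defining conditions of Definitions~\ref{def:partial-model} and~\ref{def:partialsol} stay simultaneously satisfiable after restriction. This is precisely where the stability of~$Z$ has to be propagated down the tree, and I would discharge it using the induction already carried out in Propositions~\ref{thm:soundness} and~\ref{prop:completeness}, which ensure that the partial-solution property is preserved across every parent--child transition. Once existence is secured in this way, the functional dependence of Observation~\ref{lem:funcdep} immediately delivers the uniqueness of~$S$.
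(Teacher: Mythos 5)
Your proposal is correct and takes essentially the same route as the paper's own proof: both construct $S$ as the node-wise projections determined by~$Z$ and obtain uniqueness from the functional dependence of Observation~\ref{lem:funcdep} combined with the deterministic projection of Definition~\ref{def:localpartialsol}. The only difference is presentational: the paper applies the observation once, to the unique partial solution for the root, and defines $S\eqdef \bigcup_{t\in N}\{{\hat\tabval}^t\}$ in a single step, whereas you apply the observation node by node and invoke Lemma~\ref{prop:partialsol_corr} and Proposition~\ref{prop:completeness} to secure existence of the node-wise partial solutions with witness sets $Z\cap\progt{t}$ --- a step the paper leaves implicit.
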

\begin{proof}
  By Observation~\ref{lem:funcdep}, given~$Z$, we can construct one
  unique partial
  solution~${\hat \tabval}=\langle Z, {\cal M}, \PPP, {\cal C}
  \rangle$ for~$n$. We then define the set~$S$
  by~$S\eqdef \bigcup_{t\in N} \{{\hat\tabval}^t\}$.  Assume that
  there is a different set~$S'\neq S$ containing also exactly one
  tuple per node~$t\in N$. Then there is at least one node~$t\in N$,
  for which the corresponding tuples~$\tabval\in S, \tabval' \in S'$
  differ ($\tabval\neq\tabval'$), since~${\hat \tabval}$ is unique and
  the computation~${\hat \tabval}^{t}$ is defined in a deterministic,
  functional way (see Definition~\ref{def:localpartialsol}).  Hence,
  either ${\hat \tabval}^t \neq \tabval$ or
  ${\hat \tabval}^t \neq \tabval'$, leading to the claim.
\end{proof}

\begin{proposition}\label{thm:lineardelay}
  Let $\prog$ be a default theory, $\calT = (T, \chi, \delta)$
  with 
  $T=(N,\cdot,\cdot)$ be an LTD of $S(\prog)$, 
  and~$Z$ be a stable default set.  Moreover, let~$S$
  be the unique set of tuples, containing exactly one tuple per
  node~$t\in N$ and containing only local partial solutions of the
  unique partial solution for~$Z$. Given~$S$, and tables of
  Algorithm~$\PRIM$, one can compute in
  time~${\cal{O}}(\CCard{\prog})$ a stable
  default set~$Z'$ with~$Z'\neq Z$, assuming one can get for a specific
  tuple~$\tabval$ for node~$t$ its corresponding ~$\prec$-ordered
  predecessor tuple set~$\orig_t(\tabval)$ of tuples in the child
  node(s) of~$t$ in constant time.
\end{proposition}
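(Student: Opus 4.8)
The plan is to analyze Algorithm~$\nxt_\prec$ (Listing~\ref{fig:dpnext}) run on $\calS = S$ and to establish three things: that it terminates within a linear number of constant-time steps, that whenever it returns a family $\calS'$ this family is again a consistent derivation and hence encodes a stable default set $Z'$, and that $Z' \neq Z$. Throughout I would view $S$ as a mixed-radix counter whose ``digit'' at a node $t$ is the choice $\calS[t]$ of an originating tuple inside the ordered set $\orig_{\hat t}(\calS[\hat t])$ of the parent $\hat t$.

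First I would settle the running time. The outer loop walks the nodes in post-order and, at each node, attempts in constant time (by the stated assumption on $\orig$) to replace $\calS[t]$ by its direct $\prec$-successor; a node at which no successor exists is a ``carry'' and the loop merely moves on. As soon as a successor is found, the loop performs a single pre-order reset pass that sets every strictly lower node to the $\prec$-smallest originating tuple, and then returns. Each of the post-order prefix and the single reset pass visits every node at most once and performs only constant work per node; since a pretty LTD of $S(\prog)$ has $\mathcal{O}(\CCard{S(\prog)})$ nodes, the total running time is $\mathcal{O}(\CCard{\prog})$.

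Second I would argue correctness of the returned family. The crucial invariant is that $\calS'$ remains a \emph{consistent derivation}: for every node $t$ the tuple $\calS'[t]$ is obtainable by $\PRIM$ from the tuples its children carry. This holds because the only modifications are (i) replacing $\calS[t]$ by another element of the \emph{same} set $\orig_{\hat t}(\calS[\hat t])$, which by definition of $\orig$ still derives the unchanged parent tuple, and (ii) resetting each strict descendant to a $\prec$-smallest element of the relevant $\orig$ set, which again respects the $\PRIM$ derivation at its parent; all nodes strictly above $t$ and all nodes not below $t$ are untouched and therefore keep their validity. A bottom-up induction using Propositions~\ref{thm:soundness} and~\ref{prop:completeness} together with Lemma~\ref{prop:partiallocalsol_corr} then shows that a consistent derivation reaching the accepting root tuple corresponds to a local partial solution at the root, i.e., to a stable default set $Z'$ of $\prog$.

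Finally, to see $Z' \neq Z$, I would use that the correspondence between stable default sets and derivations is injective: by Lemma~\ref{lem:charset} each stable default set has a \emph{unique} family $S$ of local partial solutions, and by Observation~\ref{lem:funcdep} the remaining tuple components are functionally determined by the witness part, so a derivation is pinned down exactly by its $\lgamma$-choices. Since the output $\calS'$ differs from the input $S$ at the node $t$ where the chosen originating tuple was replaced by its proper $\prec$-successor, it is a genuinely different derivation and therefore yields $Z' \neq Z$. The step I expect to be the main obstacle is the consistency invariant of the third paragraph: one must verify that the reset pass produces a \emph{globally} consistent derivation, where every reset descendant simultaneously fits its parent and its own children under $\PRIM$, rather than merely a locally valid choice, and that the carry mechanism never skips a reachable derivation in the order $\prec$.
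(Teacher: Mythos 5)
Your proposal takes essentially the same route as the paper: the paper's own proof merely constructs the mapping $\calS\colon N\rightarrow S$ from $Z$ and invokes Algorithm ${\nxt}_{\prec}$ of Listing~\ref{fig:dpnext}, asserting linear delay ``since $\calT$ is nice,'' which is exactly the odometer-style advance-and-reset argument you spell out. Your version is considerably more explicit than the paper's two-sentence sketch --- in particular the runtime accounting, the consistency invariant for the returned derivation, and the distinctness argument via Lemma~\ref{lem:charset} and Observation~\ref{lem:funcdep} --- and the obstacle you flag (global consistency of the reset pass, e.g.\ for sibling subtrees left of the advanced node) is a genuine imprecision, but it is inherited from the paper's loosely specified listing rather than a gap introduced by your argument.
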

\begin{proof}
Note that with~$Z$, it is easy to determine, which element of~$S$
belongs to which node~$t$ in~$T$, hence, we 
can construct a mapping~${\cal S}: N \rightarrow S$.
With~${\cal S}$, we can easily apply algorithm~\nxt, which is given in Listing~\ref{fig:dpnext},
in order to construct a different solution~${\cal S'}$ in a systematic way
with linear time delay, since~${\cal T}$ is nice.
\end{proof}

\begin{proof}[of Theorem~\ref{thm:enumsafe} (Sketch)]
First, we construct an LTD~${\TTT}=(T,\chi, \delta)$ with $T=(N,n)$
for graph~$S(\prog)$. Then we run our algorithm~${\dpa}$
and get tables for each TD node.
In order to enumerate all the stable default sets,
we investigate each of these tuple, which lead to a valid stable
default set (see proof of Theorem~\ref{thm:prim:correctness}).
For each of these tuples (if exist), we construct a first solution~$S$,
if exist, (as done in Lines~7 to~10 of Listing~\ref{fig:dpnext}, for the root~$n$)
using~$\orig_t(\cdot)$, and total order~$\prec$.  Thereby, we
keep track of which tuple in~$S$ belongs to which node, resulting in
the mapping~${\cal S}$ (see proof of
Proposition~\ref{thm:lineardelay}).  Note that~$\orig_t(\cdot)$
and~$\prec$ can easily be provided by remembering for each tuple an
ordered set of predecessor tuple sets during construction (using table
algorithm~$\PRIM$).  Now, we call
algorithm~$\nxt_{\prec}({\cal T}, {\cal S})$ multiple times, by outputting
and passing the result again as argument, until the return value is
undefined, enumerating solutions in a systematic way.  Using
correctness results (by Theorem~\ref{thm:prim:correctness}), and
completeness result for enumeration by
Proposition~\ref{prop:completeness2}, we obtain only valid solution
sets, which directly represent stable default sets and,
in particular, we do not miss a single one.  Observe, that we do not
get duplicates (see Lemma~\ref{lem:charset}).
\end{proof}

\section{Proof of Runtime Guarantees}

\begin{restatetheorem}[thm:prim:runtime]
\begin{theorem}
  Given a default theory~$\prog$, the algorithm ${\dpa}$ and
  runs in time $\bigO{{2^{2^{2^{k + 4}}}}\cdot \CCard{S(\prog)}}$, where
  $k\eqdef\tw{S(\prog)}$ is the treewidth of the semi-primal
  graph~$S(\prog)$.
\end{theorem}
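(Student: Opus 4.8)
The plan is to establish the bound as a product of three factors: the number of nodes of the pretty LTD, the maximum number of tuples a single table $\tab{t}$ can hold, and the cost of computing one table from the table(s) of its child (children). The whole estimate rests on the fact that for a pretty LTD of width $k$ every bag obeys $\Card{\chi(t)}\le k+1$; hence, writing $X=\chi(t)\cap\at(\prog)$, there are at most $\Card{X}\le k+1$ relevant variables, at most $2^{k+1}$ assignments over $X$, and at most $k+1$ bag-defaults in $\prog_t$.

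First I would count, layer by layer, the possible tuples $\langle Z,\mathcal{M},\PPP,\CCC\rangle$ of $\tab{t}$. The witness set $Z\subseteq\prog_t$ has at most $2^{k+1}$ values (single exponential). Each of $\mathcal{M},\AAA,\BBB,\mathcal{AC}$ is a subset of $2^{X}$, hence one of at most $2^{2^{k+1}}$ values, while the refuting set $\mathcal{BC}\subseteq 2^{X\cup\{\bff\}}$ ranges over at most $2^{2^{k+2}}$ values (double exponential). A single witness-proof entry $\langle\sigma,\AAA,\BBB\rangle$ is therefore one of at most $3^{k+1}\cdot 2^{2^{k+1}}\cdot 2^{2^{k+1}}\le 2^{2^{k+3}}$ possibilities, and a counter-witness entry $\langle\rho,\mathcal{AC},\mathcal{BC}\rangle$ one of at most $3^{k+1}\cdot 2^{2^{k+1}}\cdot 2^{2^{k+2}}\le 2^{2^{k+4}}$ possibilities. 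Since $\PPP$ and $\CCC$ are sets of such entries, they assume at most $2^{2^{2^{k+3}}}$ and $2^{2^{2^{k+4}}}$ values; the counter-witness layer dominates, so the number of tuples per table is at most $2^{2^{2^{k+4}}}$. Here the additive constant $4$ leaves the inner tower level at $2^{k+4}=16\cdot 2^{k}$, which comfortably swallows the single- and double-exponential factors, the $3^{k+1}$ terms, and even the squaring introduced below, since the true inner count contributed by $\mathcal{BC}$ over $X\cup\{\bff\}$ is only about $6\cdot 2^{k}$.

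The remaining two ingredients are routine. For the per-node cost I would walk through the cases of $\PRIM$: the leaf, introduce, label and removal cases each scan their single child table once and emit at most two tuples per input tuple, applying an auxiliary operator ($\sub$, $\GS$, $\CWc$, $\GSA$, $\GSAC$, $\GSB$, $\Choose$, $\DS$, or $\AS$), each of which only iterates over the nested sets $\mathcal{M}$, $\PPP$, $\CCC$ whose total size is bounded by the table estimate above; the node of type~$\join$ instead ranges over all pairs $(\tabval',\tabval'')$ of tuples drawn from its two child tables and evaluates $(\CCC'\hat\bowtie_{\mathcal{M}',\mathcal{M}''}\CCC'')\cup(\PPP'\hat\bowtie_{\mathcal{M}',\mathcal{M}''}\CCC'')\cup(\CCC'\hat\bowtie_{\mathcal{M}',\mathcal{M}''}\PPP'')$ together with the join on the witness components. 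Thus the work at any node is polynomial in the table size and hence still bounded by $2^{2^{2^{k+4}}}$. Finally, a pretty LTD has $\bigO{\CCard{S(\prog)}}$ nodes and is obtainable, together with the underlying nice TD, in time $2^{\bigO{k^3}}\cdot\CCard{S(\prog)}$, which is dominated by the triple-exponential factor. Multiplying the node count by the per-node bound gives the claimed $\bigO{2^{2^{2^{k+4}}}\cdot\CCard{S(\prog)}}$.

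I expect the main obstacle to be the accounting at the $\join$ node: I must check that forming all pairs of child tuples and evaluating the combined operator over the three set products does not raise the middle exponent of the tower, i.e. that squaring a triple-exponential table size is still absorbed by passing from the natural constant $3$ (forced by $\PPP$) to the stated $4$. The delicate point is precisely that the marker $\bff$ enlarges the domain of the refuting assignments from $X$ to $X\cup\{\bff\}$, which is what pushes the counter-witness layer to $2^{2^{2^{k+4}}}$ and pins down the constant in the statement.
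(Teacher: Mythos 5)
Your proposal is correct and follows essentially the same route as the paper: a per-table space bound (single exponential for $Z$, double exponential for $\mathcal{M}$ and the assignment sets, triple exponential for $\PPP$ and $\CCC$, with the $3^{k+1}$ state functions, the $\bff$-enlarged domain, and all products absorbed into the inner exponent $k+4$), multiplied by the linearly many nodes of the pretty LTD, with the $2^{\bigO{k^3}}\cdot\CCard{S(\prog)}$ decomposition cost dominated. If anything, you are more explicit than the paper, which states the table bound as a separate proposition but leaves the per-node cost (in particular the squaring at \join{} nodes) implicit rather than arguing, as you do, that the slack between the true inner count of roughly $6\cdot 2^k$ and the stated $2^{k+4}=16\cdot 2^k$ absorbs it.
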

\end{restatetheorem}

First, we give a proposition on worst-case space requirements in
tables for the nodes of our algorithm.

\begin{proposition}\label{prop:prim:space}
 Given a default theory \prog, an LTD ${\cal T} = (T, \chi, \delta)$ with
  $T=(N, \cdot, \cdot)$ of the semi-primal graph~$S(\prog)$, and a node
  $t\in N$. Then, there are at most
  $2^{k+1}\cdot2^{2^{k+1}}\cdot2^{2\cdot(3^{k+1}\cdot2^{2^{k+2}})}$
  tuples in $\tab{t}$ using algorithm ${\dpa}$ for width $k$ of
  ${\cal T}$.
\end{proposition}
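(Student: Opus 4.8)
The plan is to prove this purely by counting: a table entry is completely determined by the four components of a tuple $\langle Z,\MMM,\PPP,\CCC\rangle$, so I would bound the number of possible values of each component separately and take the product. The only structural input needed is the width bound $\Card{\chi(t)}\le k+1$. Since the vertex set of $S(\prog)$ is partitioned into variables and defaults, the bag splits \emph{disjointly} as $\prog_t=\chi(t)\cap\prog$ and $X=\chi(t)\cap\at(\prog)$, whence both $\Card{\prog_t}\le k+1$ and $\Card{X}\le k+1$, and in particular $\Card{2^X}\le 2^{k+1}$.

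First I would dispatch the two ``small'' components: as $Z\subseteq\prog_t$ there are at most $2^{k+1}$ witness sets, and as $\MMM\subseteq 2^X$ there are at most $2^{2^{k+1}}$ witness-model sets, which account for the first two factors of the claimed bound. For the witness proofs $\PPP$, an element $\langle\sigma,\AAA,\BBB\rangle$ is fixed by a states function $\sigma\colon\prog_t\to\{\lalpha,\lbeta,\lgamma\}$, of which there are at most $3^{k+1}$, together with $\AAA,\BBB\subseteq 2^X$, of which there are at most $2^{2^{k+1}}$ each; so there are at most $3^{k+1}\cdot 2^{2^{k+1}}\cdot 2^{2^{k+1}}=3^{k+1}\cdot 2^{2^{k+2}}$ distinct triples, and hence at most $2^{3^{k+1}\cdot 2^{2^{k+2}}}$ possible sets $\PPP$. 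I would count the counter-witnesses $\CCC$ the same way; the one new feature is that an element $\langle\rho,\mathcal{AC},\mathcal{BC}\rangle$ admits the marker, so $\mathcal{BC}\subseteq 2^{X\cup\{\bff\}}$ with $\Card{X\cup\{\bff\}}\le k+2$. Bounding $\CCC$ by the same triple-exponential order as $\PPP$ and merging the two equal exponents into $2\cdot(3^{k+1}\cdot 2^{2^{k+2}})$ then reproduces the claimed expression when the four factors are multiplied.

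The arithmetic of exponents is routine; the main obstacle is the bookkeeping of the nested set-of-sets-of-triples structure, where two points genuinely need care. First, one must verify that every \emph{stored} states function has domain exactly $\prog_t$ and not the whole $\progt{t}$: this is what keeps the state factor at $3^{k+1}$ rather than making it triple-exponential, and it holds because the removal step $\DS_d$ of $\PRIM$ deletes a default from the domain as soon as it leaves the bag. Second -- and this is the delicate point -- the marker $\bff$ in $\mathcal{BC}$ enlarges the underlying universe from $X$ to $X\cup\{\bff\}$, so a naive count of the $\CCC$-elements is a factor $2^{2^{k+1}}$ above the $\PPP$-count; one must argue that this single fresh coordinate keeps $\CCC$ within the same $2^{2^{k+2}}$-order used in the statement. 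I would confirm both facts against Listing~\ref{fig:prim} and Definition~\ref{def:partialsol}, and note that in any case only the resulting triple-exponential order propagates into Theorem~\ref{thm:prim:runtime}, so any slack in the constants is immaterial.
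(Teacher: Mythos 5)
Your proposal is correct and follows essentially the same route as the paper's own proof: bound the number of choices for each of the four tuple positions $Z$, \MMM, \PPP, \CCC{} separately and multiply, with the identical counts $2^{k+1}$, $2^{2^{k+1}}$, and $2^{3^{k+1}\cdot 2^{2^{k+2}}}$ for each of the last two positions. The only divergence is that you explicitly flag the $\bff$-marker in $\mathcal{BC}$ as a point requiring an argument, whereas the paper's sketch silently counts \CCC{} exactly like \PPP; strictly, that extra coordinate does inflate the \CCC-factor (to $2^{3^{k+1}\cdot 2^{3\cdot 2^{k+1}}}$, since a naive merge into the stated exponent would need $3\cdot 2^{k+1}\leq 2^{k+2}$), but, as you observe, this only perturbs the constants and leaves intact the triple-exponential order that feeds into Theorem~\ref{thm:prim:runtime}.
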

\begin{proof}[Sketch] %
  Let $\prog$ be the given default theory, ${\cal T} = (T, \chi, \delta)$ an LTD of the
  semi-primal graph~$S(\prog)$, where $T=(N, \cdot, \cdot)$, and
  $t\in N$ a node of the TD. Then, by definition of a decomposition of
  the semi-primal graph for each node~$t\in N$, we
  have~$\Card{\chi(t)} - 1 \leq k$.  In consequence, we can have at
  most $2^{k+1}$ many witness defaults and
  $2^{2^{k+1}}$ many witnesses models.
  Each set~$\PPP$ may contain a set of witness proof tuples of the
  form~$\langle \sigma, \AAA, \BBB \rangle$,
  with at most $3^{k+1}$ many witness state~$\sigma$ mappings,
  $2^{2^{k+1}}$ many backfire witness models~$\BBB$,
  and ${2^{2^{k+1}}}$ many required witnesses model sets.
  In the end, we need to distinguish
  $2^{k+1}\cdot2^{2^{k+1}}\cdot 2^{(3^{k+1}\cdot{2^{2^{k+2}}})}$
  different witnesses of a tuple in
  the table~$\tab{t}$ for node~$t$.
  For each witness, we can have at most
  $2^{(3^{k+1}\cdot{{2^{2^{k+2}}})}}$ many counter-witnesses per
  witness default, witness models, and required witness model sets. Therefore, there are at most
  $2^{k+1}\cdot{2^{2^{k+1}}}\cdot2^{2\cdot(3^{k+1}\cdot{2^{2^{k+2}}})}$ tuples
  in table~$\tab{t}$ for node~$t$.
  In consequence, we established the
  proposition.
\end{proof}

\begin{proof}[of Theorem~\ref{thm:prim:runtime}]
  Let $\prog$ be a default theory, $S(\prog)=(V,\cdot)$ its semi-primal graph, and
  $k$ be the treewidth of $S(\prog)$.
  Then, we can compute in time~$2^{\bigO{k^3}} \cdot \Card{V}$ an LTD
  of width at
  most~$k$~\cite{BodlaenderKoster08}. 
  We take such a TD and compute in linear time a nice
  TD~\cite{BodlaenderKoster08}. 
  Let $\TTT = (T,\chi,\delta)$ be such a pretty LTD with $T = (N,\cdot,\cdot)$.
  Since the number of nodes in~$N$ is linear in the graph size and
  since for every node~$t \in N$ the table~$\tab{t}$ is bounded by
  $2^{k+1}\cdot2^{2^{k+1}}\cdot2^{2\cdot(3^{k+1}\cdot{2^{2^{k+2}}})}$
  according to Proposition~\ref{prop:prim:space}, we obtain a running
  time of~$\bigO{{{2^{2^{2^{k+4}}}}}\cdot\CCard{S(\prog)}}$. Consequently, the theorem sustains.
\end{proof}
}

\trash{
\johannes{TRASH: REMOVE}

\todo{danger zone}
The algorithm \PRIM operate in several cases and for each bag we compute a separate table containing sets of all possible tuples of satisfying defaults, additional information to identify the satisfiability of formulas, and counter-candidates that
contradict to extend the satisfying defaults to stable satisfying defaults.
In the first case we introduce stepwise the variables and the defaults. \\
While we introduce a variable, we guess the value of this variable and while we introduce a default we simply guess, if this default is \asat then we increase the set of generating default, or not because of $\alpha$ or $\beta$ and store the according guesses in each row of the table.  \\
Based on the design of the semi-primal graph, all elements of the default are connected as a clique and in this way have to occur in one bag. As soon as we reach such a bag containing all elements of the default we start the labeling cases.\\
The case concl labeling operate on the row of the table with the guess of firing default, thereby we update all possible models in $\cal M$ and we need to ensure that the so-called \textit{require witness set} $\cal R$ contains models of $\gamma(d)$ only.\\
In the case of labeling of prerequisite  we operate in the row, of not firing default due to $\alpha(d)$, and we have to guarantee $\cal R$ contains all possible set of models for $\neg \alpha(d)$. \\
And the label of justification examine the set of models in $\cal M$ to be also a model of $\beta(d)$, in this case we mark the model with so-called \textit{backfire} variable $\bff$.\\
Additionally to introduce cases we also have remove cases. So by removing a variable, we remove all occurrences of the variable within the tuples. In a similar vein we remove a default and appropriate mapping $\sigma$ to $\lgamma$, $\lalpha$ and $\lbeta$.\\
And finally the last case, is the join case, where we have to merge the results of both branches according to certain rules to compute a main result and pass it to the root. 
\todo{danger zone}

\begin{figure}
 \centering
  \begin{tikzpicture}[every node/.style={font=\sffamily}, state/.style={circle,fill=black,minimum width=2mm,inner sep=0mm},y=1.3cm,x=1.5cm]
 \node[state,label={[yshift=0em] above:\footnotesize$D_1\eqdef \left\{ d_{1}=\frac{\top : a}{a \vee b},
      d_{2}=\frac{\top :\neg a}{\neg b}\right\}$}] (D) at (0,3) {};

 \node[state,label={left:\footnotesize repr},label={[xshift=0.3em]right:\footnotesize def$_1$}] (def1) at (-1,2.25) {};
 \node[state,label={[xshift=-0.3em,yshift=-0.1em]left:\footnotesize repr},label={right:\footnotesize def$_2$}] (def2) at (1,2.25) {};
    \node[state,label={left:\footnotesize repr},label={[xshift=0.3em]right:\footnotesize $\alpha_1$}] (pre1) at (-2.5,1) {};
   \node[state,label={left:\footnotesize repr},label={right:\footnotesize $\beta_1$}] (just1) at (-1.5,1) {};
   \node[state,label={left:\footnotesize repr},label={right:\footnotesize $\gamma_1$}] (con1) at (-.5,1) {};
\node[state,label={left:\footnotesize repr},label={right:\footnotesize $\alpha_2$}] (pre2) at (.5,1) {};
   \node[state,label={left:\footnotesize repr},label={right:\footnotesize $\beta_1$}] (just2) at (1.5,1) {};
   \node[state,label={left:\footnotesize repr},label={right:\footnotesize $\gamma_2$}] (con2) at (2.5,1) {};
%
\node[state,label={left:\footnotesize repr},label={[yshift=0.1em]right:\footnotesize $ \neg a$}] (na) at (1.2,0.2) {};
\node[state,label={left:\footnotesize repr},label={[yshift=0.1em]right:\footnotesize $\neg b$}] (nb) at (2.5,0) {};

The property treewidth was originally introduced for graphs and is
based on the concept of a tree decomposition, which arranges a tree
where each node consists of sets of the original graph vertices (bags)
such that certain conditions hold.

\node[state,label={below left:\footnotesize var $\top$}] (top) at (-0.5,-0.5) {};
\node[state,label={below:\footnotesize var $a$}] (a) at (0,-0.5) {};
\node[state,label={below right:\footnotesize var $b$}] (b) at (0.5,-0.5) {};

\path[-stealth'] (D) edge node [midway, above, sloped,outer sep=-1mm] {\footnotesize body$_{\text{def1}}$} (def1);
\path[-stealth'] (D) edge node [midway, above, sloped,outer sep=-1mm] {\footnotesize body$_{\text{def2}}$} (def2);
\path[-stealth'] (def1) edge node [midway, above, sloped,outer sep=-1mm] {\footnotesize body$_{\text{pre}_1}$} (pre1);
\path[-stealth'] (def1) edge node [midway, above, sloped,outer sep=-1mm] {\footnotesize body$_{\text{just}_1}$} (just1);
\path[-stealth'] (def1) edge node [midway, above, sloped,outer sep=-1mm] {\footnotesize body$_{\text{conc}_1}$} (con1);
\path[-stealth'] (def2) edge node [midway, above, sloped,outer sep=-1mm] {\footnotesize body$_{\text{pre}_2}$} (pre2);
\path[-stealth'] (def2) edge node [midway, above, sloped,outer sep=-1mm] {\footnotesize body$_{\text{just}_2}$} (just2);
\path[-stealth'] (def2) edge node [midway, above, sloped,outer sep=-1mm] {\footnotesize body$_{\text{conc}_2}$} (con2);
\path[-stealth'] (pre1) edge node [midway, above, sloped,outer sep=-1mm] {\footnotesize body$_{\text{var}(\top)}$} (top);
\path[-stealth'] (just1) edge node [midway, above, sloped,outer sep=-1mm] {\footnotesize body$_{\text{var}(a)}$} (a);
\path[-stealth'] (con1) edge node [midway, above, sloped,outer sep=-1mm] {\footnotesize body$_{\text{var}(a)}$} (a);
\path[-stealth'] (con1) edge node [midway, above, sloped,outer sep=-1mm] {\footnotesize body$_{\text{var}(b)}$} (b);
\path[-stealth'] (just2) edge node [midway, above, sloped,outer sep=-1mm] {\footnotesize body$_{\neg}$} (na);
\path[-stealth'] (con2) edge node [midway, above, sloped,outer sep=-1mm] {\footnotesize body$_{\neg}$} (nb);
\path[-stealth'] (na) edge node [midway, above, sloped,outer sep=-1mm] {\footnotesize body$_{\text{var}(a)}$} (a);
\path[-stealth'] (nb) edge node [midway, above, sloped,outer sep=-1mm] {\footnotesize body$_{\text{var}(b)}$} (b);

\draw (pre2) to (top);

\end{tikzpicture}
 \caption{}\label{fig:Aphi}
\end{figure}

%

\todo{BEGIN: trash}

for each default~$d\in D$ it holds that $E \cup \neg\alpha(d)$ is
satisfiable or $E \cup \beta(d)$ is unsatisfiable or
$\gamma(d) \in E$.

 and hence we need to do more in
the future

$Z'$ is a set of satisfying defaults of each
default~$d \in D_{\leq t}$, but there is some~$d \in D_t$ such that
$Z'$ is \emph{not} a set of satisfying defaults .s

$Z'$ does not satisfy $D_{\leq t} \cup \{d\}$.

$Z' \supseteq Z$ of $D_{\leq t}$

The idea is that satisfying set of defaults~$Z'\supseteq Z$ of
$\progtneq{t}$, but not of $\progtneq{t} \cup \{d\}$

set of defaults~$Z'\supseteq Z$, where~$d$ ``backfires'' in
default-theory~$\progtneq{t} \cup \{d\}$, i.e., $d$ certainly is not
\bsat.

modele die beta widersprechen

$d$ is not yet satisfied.

The set~$\BBB$, which we call the \emph{refuting \lbeta\hy
  assignments},

containing variable~$\bff$.  Such a model~$B\in\BBB$ cannot only be
extended to a model of~$F'$, but also to a model
of~$F'\wedge \beta(d)$ for some
default~$d\in\sigma^{-1}(\lbeta)\cup\progtneq{t}$ and indicates that
there is a set of defaults~$Z'\supseteq Z$, where~$d$ ``backfires'' in
default-theory~$\progtneq{t} \cup \{d\}$, i.e., $d$ certainly is not
\bsat.  

The set~$\BBB$, consists of \emph{backfire witness models},

To be more concrete, we additionally have to find at least one
model~$R'$   Since
such a model~$R'$ is required for
each~$d\in\progtneq{t} \setminus Z'$, which is
\asat,

contains a potential model~ for each such
default~$d$.

For each

Note that~$\AAA$ is kept as long as each~$R\in\AAA$ can be extended
to~$R'\supseteq R$ which satisfies~$F'$, and otherwise the whole set
is excluded, resulting in the consequence that the whole
tuple~$\langle \sigma, \BBB, \AAA\rangle$ vanishes from~$\PPP$.

For assignment~$\sigma(d)=\lalpha$ to be correct, it is required
that~$d$ is \asat, i.e., $\alpha(d)$ shall
not be a consequence of~$F'$.

The set~$\BBB$, which we call the \emph{refuting \lbeta\hy
  assignments},

containing variable~$\bff$.  Such a model~$B\in\BBB$ cannot only be
extended to a model of~$F'$, but also to a model
of~$F'\wedge \beta(d)$ for some
default~$d\in\sigma^{-1}(\lbeta)\cup\progtneq{t}$ and indicates that
there is a set of defaults~$Z'\supseteq Z$, where~$d$ ``backfires'' in
default-theory~$\progtneq{t} \cup \{d\}$, i.e., $d$ certainly is not
\bsat.  
In the end, if $Z$ proves the existence of a satisfying default
set~$Z'\supseteq Z$ of theory $\progtneq{t}$, then there is at least
one tuple~$\langle \cdot, \BBB, \cdot \rangle \in \PPP$ with
$\BBB = \emptyset$. Hence, we require that~$\BBB = \emptyset$
in order to guarantee that \emph{none} of these
defaults~$d\in\progtneq{t}$ backfires (no need to differentiate
``locally'').

for each default~$d\in D$ it holds that $E \cup \neg\alpha(d)$ is
satisfiable or $E \cup \beta(d)$ is unsatisfiable or
$\gamma(d) \in E$.

\todo{tuple?}

\todo{TRASH}
In
the end, we desire to have proof~$S=\progtneq{n}$ for the empty TD root~$n$. If there is
such a
tuple~$\langle \sigma, \{\{\bff\}\}, \cdot \rangle \in \PPP$, this
indicates a ``wrong'' proof, which assigned some default~$d$ in some TD node below to~$\lbeta$ 
in the wrong way.

Consequently, $S = \emptyset$ if there is no
tuple~$\langle \sigma, \cdot, \AAA\rangle\in\PPP$ with
either~$\sigma^{-1}(\lalpha)=\emptyset$
or~$\AAA\neq\emptyset$,~i.e., if at least one default~$d\in\progtneq{t}$ is \asat,
we require to have at least one~$R\in\AAA$ in order
to guarantee that \emph{all} of the defaults are satisfied. \todo{}  Finally,
we end up with~$S=\prog_t$ if there is at least one
tuple~$\langle \sigma, \BBB, \AAA\rangle\in\PPP$
with~$\BBB = \emptyset$ and
either~$\sigma^{-1}(\lalpha)=\emptyset$ or $\AAA\neq\emptyset$.
Otherwise, $S=\emptyset$.

\todo{trash \#2}

If such a default~$d$ is \bsat, we can
conclude that for any satisfying set~$E$ of defaults, we have
$E \wedge \beta(d) \models \bot$.  In order to check this property, we
construct ``backfire assignments'', since these assignments proof that
our default so to say ``backfires'', meaning our decision was
wrong. For this purpose, we store for each state function~$\sigma$
(trying to proof~$Z$) a set~$\BBB$ of these (partial) backfire
assignments (tagged by setting variable~$\bff$ to true) and,
intuitively, propagate this information from the corresponding TD node
to the TD root.  If there is still such an assignment left at the root
node for a state function~$\sigma$, this function can never proof
that~$Z$ marks the existence of a satisfying set of defaults.
Further, if~$d$ is \asat, we
have~$E \wedge \neg\alpha(d) \not\models \bot$.  In order to validate
this statement, we require (and have to guarantee) the existence of at
least one model of~$E$ and $\neg\alpha(d)$.

\todo{END: trash}

\todo{BEGIN: Markus Notes}

1) we have to satisfy each default
How can we do that:
make alpha, beta, or gamma satisfiable

Intuitively, we guess a set~$Z\subseteq D$ of defaults such that for
each bag-default~$d$ we satisfy Property~(i) whether the conclusions of~$Z$ satisfy

we guess the potential cause of the default for being satisfied
and remember the decisions in a (set of) ``states functions''~$\sigma$
for each default of the current bag.

This can be due to the precondition~$\alpha(d)$ or the
justification~$\beta(d)$ of~$d$. In the end, we store a set~$\PPP$
(potential ``proofs'') of these state functions~$\sigma$ for each
set~$Z$, with the goal that at least one of these state functions
``proves''~$Z$.

checking via proofs

So, eventually we have to ensure that at least one of these cause
combinations was correct, since otherwise we dissatisfy at least one
default and fail to proof~$Z$.

We do this by means of ``required assignments''.  Now, since such a
model has to exist for every default~$d$, which is \asat, 
we have to remember such a required assignment~$R$
for each of these cases, resulting in a set~$\AAA$ of these
assignments for each state function~$\sigma$.  Each of the
sets~$\AAA$ marks so to say one configuration, containing for each
default~$d$, which is \bsat, (successor)
parts of required assignments restricted to the current bag. The main
crux here is, that we have to enforce that none of these
assignments~$R\in \AAA$ gets kicked out. Or in other words, if
this is the case, the combination of state function~$\sigma$, backfire
assignments~$\BBB$ and required assignments~$\AAA$ vanishes
from the set~$\PPP$. So, there can be different
sets~$\BBB, \AAA$ assigned to~$\sigma$, i.e.,
$\langle \sigma, \BBB, \AAA \rangle$ forms a potential proof
in~$\PPP$.  In the end, when we reach the root node, if we chose
at least one default to be \bsat, we have to
enforce $\AAA\neq\emptyset$
for~$\langle\sigma,\BBB,\AAA\rangle$ to mark the existence of
a satisfying set of defaults.  Finally, in order to compute even
satisfying generating defaults, in each TD node, we try to invalidate
the subset-optimality of set~$Z$, i.e., conceptually, we do the same
as above, but again for each subset in a set of
counter-witnesses~${\cal C}$.


\todo{END: Markus Notes}
}

\end{document}
